\newcommand{\hrulealg}[0]{\vspace{1mm} \hrule \vspace{1mm}}
\newcommand{\Hl}[2][\empty]{%
\ifx#1\empty
\else
\sethlcolor{#1}%
\fi
\hl{#2}}
\soulregister\Hl{7}
\newmdenv[
   backgroundcolor=yellow,
   linewidth=0pt,
   innertopmargin=1pt,
   innerbottommargin=1pt
]{highlighted}
\newcommand{\PREPRINTYEAR}{2025}
\newcommand{\DOI}{10.1109/TRO.2025.3547296} 
\newcommand{\aref}[1]{\hyperref[#1]{Appendix~\ref*{#1}}}
\setlist[itemize]{noitemsep, nosep}
\definecolor{color_green}{rgb}{0, .722, .243}
\definecolor{color_red}{rgb}{0.737,0.165,0}
\pgfplotsset{compat=1.8}
\colorlet{veccol}{green!45!black}
\colorlet{myred}{red!80!black}
\colorlet{myblue}{blue!80!black}
\colorlet{mypurple}{blue!50!red!100!}
\colorlet{amber}{red!90!yellow!100!blue!50}
\colorlet{mygreen}{green!40!black}
\tikzstyle{vector}=[->,very thick,veccol]
\tikzstyle{thin arrow}=[dashed,thin,-{Latex[length=4,width=3]}]
\tikzstyle{arrow}=[{-Latex}]
\definecolor{color_blue}{rgb}{0.22, 0.2, 0.502}
\definecolor{color_red}{rgb}{0.737,0.165,0}
\definecolor{color_green}{rgb}{0, .522, .243}
\def\centerarc[#1](#2)(#3:#4:#5)
        \pgfmathfloatmultiply{\pgfmathresult}{#2}%
        \pgfmathfloatifapproxequalrel{\pgfmathresult}{#2}{\def\pgfmathresult{5}}{}%
\pgfplotsset{boxplot legend/.style={
    legend image code/.code={
        \draw[#1] (0cm,-0.1cm) rectangle (0.4cm,0.1cm)
        (0.2cm,-0.1cm) -- (0.2cm,-0.2cm) (0.05cm,-0.2cm) -- (0.35cm,-0.2cm)
        (0.2cm,0.1cm) -- (0.2cm,0.2cm) (0.05cm,0.2cm) -- (0.35cm,0.2cm);
     \path (0cm,0.24cm) (0cm,-0.24cm);
    },
}}
\tikzset{
    scale circle/.code={
        \pgfgetlastxy{\x@coord}{\y@coord}
        \pgfmathsetmacro\xscale{\pgfkeysvalueof{/tikz/x radius}*1/cos(\y@coord/\pgfplotsunitylength)}
        \tikzset{/tikz/x radius=\xscale}
    },
    scale bar/.code={
        \tikzset{
            insert path={
                node [
                    below,
                    font=\scriptsize
                ] {0} 
                +(0,-2pt) -- +(0,0) -- ++(20pt,0) 
                node [
                    below,
                    font=\scriptsize,
                    inner xsep=1pt,
                    label={[inner sep=0pt,font=\scriptsize]right:m}] {#1}
                -- +(0,-2pt)
            }
        }
    },
    scale bar/.default=10
}
\definecolor{orcidlogocol}{HTML}{A6CE39}
\tikzset{
  orcidlogo/.pic={
    \fill[orcidlogocol] svg{M256,128c0,70.7-57.3,128-128,128C57.3,256,0,198.7,0,128C0,57.3,57.3,0,128,0C198.7,0,256,57.3,256,128z};
    \fill[white] svg{M86.3,186.2H70.9V79.1h15.4v48.4V186.2z}
    svg{M108.9,79.1h41.6c39.6,0,57,28.3,57,53.6c0,27.5-21.5,53.6-56.8,53.6h-41.8V79.1z M124.3,172.4h24.5c34.9,0,42.9-26.5,42.9-39.7c0-21.5-13.7-39.7-43.7-39.7h-23.7V172.4z}
    svg{M88.7,56.8c0,5.5-4.5,10.1-10.1,10.1c-5.6,0-10.1-4.6-10.1-10.1c0-5.6,4.5-10.1,10.1-10.1C84.2,46.7,88.7,51.3,88.7,56.8z};
  }
}
\newcommand\orcidicon[1]{\href{https://orcid.org/#1}{\mbox{\scalerel*{
        \begin{tikzpicture}[yscale=-1,transform shape]
          \pic{orcidlogo};
        \end{tikzpicture}
}{|}}}}
\newtheoremstyle{wobrackets}%
  {3pt}
  {3pt}
  {\itshape}
  {}
  {\bfseries}
  {}
  {.3em}
  {\thmname{#1} \thmnumber{#2}: \thmnote{\normalfont#3}}
\theoremstyle{wobrackets}
\newtheorem{theoremwobrackets}{Theorem}
\newtheoremstyle{remark}%
  {3pt}
  {3pt}
  {\itshape}
  {}
  {\bfseries}
  {}
  {.2em}
  {\thmname{#1}: \thmnote{\normalfont#3}}
\theoremstyle{remark}
\newtheorem*{remark}{Remark}
\DeclareMathOperator*{\minimize}{minimize}
\title{%
\Title
}
\acrodef{uav}[UAV]{Unmanned Aerial Vehicle}
\acrodef{lsap}[LSAP]{Linear Sum Assignment Problem}
\acrodef{lbap}[LBAP]{Linear Bottleneck Assignment Problem}
\acrodef{lifo}[LIFO]{Last In First Out}
\acrodef{cbaa}[CBAA]{Consensus-Based Auction Algorithm}
\acrodef{cbba}[CBBA]{Consensus-Based Bundle Algorithm}
\acrodef{tofrp}[TOFREP]{Time-Optimal Formation Reshaping Problem}
\acrodef{catora}[CAT-ORA]{Collision-Aware Time-Optimal formation Reshaping Algorithm}
\newcommand{\Title}{CAT-ORA: Collision-Aware Time-Optimal Formation Reshaping for Efficient Robot Coordination in 3D Environments}
\author{
  Vit Kratky$^{*\orcidicon{0000-0002-1914-742X}}$,
  Robert Penicka$^{\orcidicon{0000-0001-8549-4932}}$,
  Jiri Horyna$^{\orcidicon{0000-0001-6614-0928}}$,
  Petr Stibinger$^{\orcidicon{0000-0002-7662-9230}}$,
Tomas Baca$^{\orcidicon{0000-0001-9649-8277}}$,
  Matej Petrlik$^{\orcidicon{0000-0002-5337-9558}}$,
  Petr Stepan$^{\orcidicon{0000-0002-7444-3264}}$,
  Martin Saska$^{\orcidicon{0000-0001-7106-3816}}$
\thanks{Authors are with the Department of Cybernetics, Faculty of Electrical Engineering, Czech Technical University in Prague, Technicka 2, Prague 6, Czech Republic.\\
{$^*$Corresponding author, {\tt\scriptsize\{\href{mailto:vit.kratky@fel.cvut.cz}{vit.kratky}|\href{mailto:penicrob@fel.cvut.cz}{penicrob}|\href{mailto:horynjir@fel.cvut.cz}{horynjir}|\href{mailto:stibipet@fel.cvut.cz}{stibipet}

|\href{mailto:bacatoma@fel.cvut.cz}{bacatoma}|\href{mailto:matej.petrlik@fel.cvut.cz}{matej.petrlik}|\href{mailto:stepan@fel.cvut.cz}{stepan}|\href{mailto:martin.saska@fel.cvut.cz}{martin.saska}\}@fel.cvut.cz}}%

{This work was partially funded by the CTU grant no. SGS23/177/OHK3/3T/13, by the Czech Science Foundation (GAČR) grant no. 23-06162M, and by the European Union under the project Robotics and advanced industrial production (reg. no. CZ.02.01.01/00/22\_008/0004590).}
}
}
\begin{document}

\maketitle

\begin{abstract}
In this paper, we introduce an algorithm designed to address the problem of time-optimal formation reshaping in three-dimensional environments while preventing collisions between agents. 
The utility of the proposed approach is particularly evident in mobile robotics, where agents benefit from being organized and navigated in formation for a variety of real-world applications requiring frequent alterations in formation shape for efficient navigation or task completion.
  Given the constrained operational time inherent to battery-powered mobile robots, the time needed to complete the formation reshaping process is crucial for their efficient operation, especially in case of multi-rotor \acp{uav}.
  The proposed \ac{catora} builds upon the Hungarian algorithm for the solution of the robot-to-goal assignment implementing the inter-agent collision avoidance through direct constraints on mutually exclusive robot-goal pairs combined with a trajectory generation approach minimizing the duration of the reshaping process. 
  Theoretical validations confirm the optimality of \ac{catora}, with its efficacy further showcased through simulations, and a real-world outdoor experiment involving 19 \acp{uav}.
  Thorough numerical analysis shows the potential of \acs{catora} to decrease the time required to perform complex formation reshaping tasks by up to $49\%$, and $12\%$ on average compared to commonly used methods in randomly generated scenarios.   

\end{abstract}

\begin{IEEEkeywords}
Multi-Robot Systems, Path Planning for Multiple Mobile Robots or Agents, Collision Avoidance, Formation Reshaping
\end{IEEEkeywords}


\section*{Supplementary material}

\textbf{Video:} {\url{https://mrs.felk.cvut.cz/tro2025catora}} 

\textbf{Open-source code:} {\url{https://github.com/ctu-mrs/catora}}



\section{Introduction}
\label{sec:introduction}
\IEEEPARstart{T}{eams} of autonomous mobile robots have found practical applications in various real-world scenarios, including search and rescue operations\cite{cerberusDarpa, petrlik2023UAVsSurfaceCooperative, best2024explorerDarpa}, environmental monitoring\cite{JI2022103967}, \cite{ercolani2023multiRobotGasMonitoring}, precision agriculture\cite{ju2022agriculture}, and automated warehouse systems\cite{azadeh2019warehouseManagement}.
In most cases, these teams consist of robots working together to achieve a common objective while independently navigating through the environment and avoiding collisions. 
However, in certain applications, it is advantageous for mobile robots to be arranged in a specific formation to accomplish desired tasks, such as documenting historical buildings~\cite{petracek2023dronument}, monitoring wildfires~\cite{afghah2019wildFiresMonitoring}, or creating drone light shows~\cite{waibel2017drone, kim2018surveyLightShows}.
In these scenarios, the robots often need to adjust their positions relative to one another to achieve the required formation shape for the mission's execution.
Considering the formation shape adaptation as part of a robotic mission, the time efficiency of this process becomes of great importance.
This applies specifically to vehicles with operational time significantly constrained by battery endurance, such as multi-rotor \aclp{uav} (\acsp{uav}), especially in time critical missions such as search and rescue and applications requiring highly dynamic performance, such as drone light shows. 

\begin{figure}[!t]
  \centering
  \begin{tikzpicture}
    \node[anchor=south west,inner sep=0] (a) at (0,0) {
      \includegraphics[width=1.0\columnwidth]{./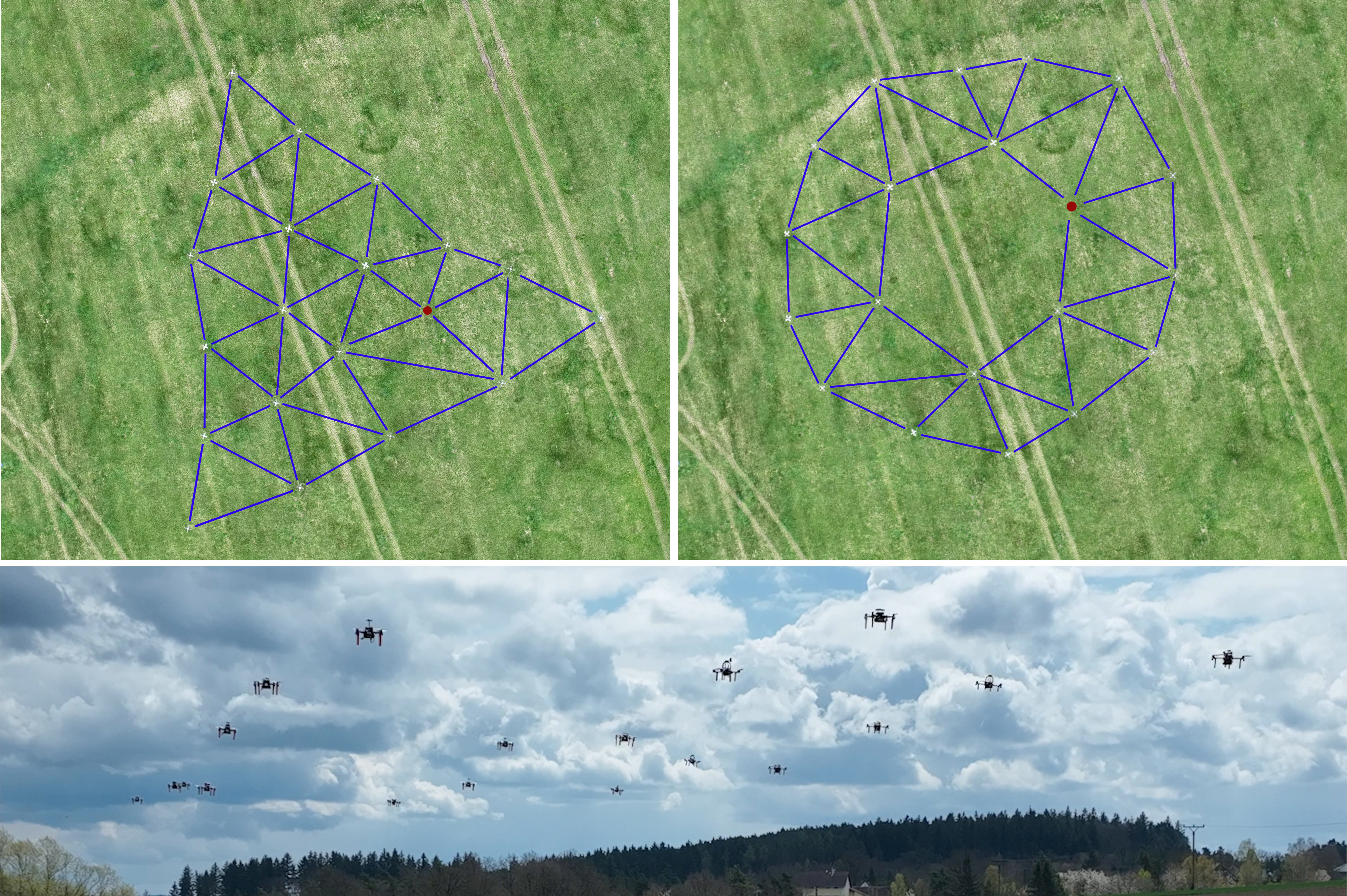}
    };

    \pgfmathsetmacro{\hzero}{0.0375}
    \pgfmathsetmacro{\vzero}{0.4208}
    \pgfmathsetmacro{\hshift}{0.503}
    \pgfmathsetmacro{\vshift}{-0.3745}
   
    \begin{scope}[x={(a.south east)},y={(a.north west)}]

        \node[fill=black, fill opacity=0.3, text=white, text opacity=1.0] at (\hzero, \vzero) {\textbf{(a)}};
        \node[fill=black, fill opacity=0.3, text=white, text opacity=1.0] at (\hzero+\hshift, \vzero) {\textbf{(b)}};
        \node[fill=black, fill opacity=0.3, text=white, text opacity=1.0] at (\hzero-0.001, \vzero+\vshift) {\textbf{(c)}};

      \draw (0.36,0.44) [scale bar=10];
      \draw (0.86,0.44) [scale bar=10];

    \end{scope}

  \end{tikzpicture}
  \caption{Deployment of the introduced \acl{catora} (\acs{catora}) in a small-scale drone visual performance with 19 \acp{uav}. The images show the transition of \acp{uav} guided by the \acs{catora} from a triangular shape (a) to a ring shape (b). This transition was performed within 7 seconds. The blue lines highlight the shape of the formation in top view images, while (c) captures the flying formation from the side. The red point represents a missing UAV that failed to start due to a HW failure.}
  \label{fig:intro}
  \vspace*{-0.2cm}
\end{figure}


This paper tackles the \ac{tofrp} with collision avoidance guarantees. 
The problem involves finding the assignment of robots to goals coupled with the generation of minimum-time collision-free trajectories.
From the robotics perspective, the formation reshaping problem is a specific instance of cooperative motion planning.
However, instead of having specific goals assigned to individual robots, the group of robots is given a set of unassigned goals to visit. 
The algorithms for the solution of assignment problems have been widely tackled in literature~\cite{kuhnHungarianAlgorithm, munkresHungarianAlgorithm, antonyshyn2023multipleRobotTaskPlanningSurvey, poudel2022TaskAssignmentAlgorithmsa, skaltsis2021taskAllocationSurvey, quinton2023marketApproachesTaskAllocationSurvey, chopra2017distributedHungarianAlgorithm, Burkard1998AssignmentP, carpaneto1981AlgorithmSolutionBottleneck, jang2018taskAssignmentUsingHeadonicGames, oh2016psobasedTaskAssignment, choi2009consensusBasedAuctionAlgorithm}.
However, since robots are physical entities sharing an environment, mutual collision avoidance has to be considered during the assignment process. 
This consideration implies that, in general, the individual cost of assigning two robot-goal pairs in a matching depends on the other assigned pairs, preventing a direct use of algorithms for the solution of general assignment problems.

Previous works in the field of formation reshaping vary in the level of decentralization, complexity, dimensions of the environment, optimization criteria, and applied methodology~\cite{turpin2014CaptConcurrentAssignment, agarwal2018SimultaneousOptimizationAssignmentsa, kloder2006PathPlanningPermutationinvarianta, GRAVELL2021104753, akella2020AssignmentAlgorithmsVariable, macalpine2015SCRAMScalableCollisionavoiding, wang2020ShapeFormationHomogeneous, khan2019LearningSafeUnlabeled}.  
Although the completion time of the reshaping process is a critical factor for algorithms deployed on robots with limited operational time, only a few works have taken the time criterion into account~\cite{GRAVELL2021104753, akella2020AssignmentAlgorithmsVariable, macalpine2015SCRAMScalableCollisionavoiding}.
However, none has addressed the minimization of completion time while simultaneously accounting for mutual collision avoidance among robots and the kinematic constraints associated with robots as physical entities.
Furthermore, these works lack guarantees regarding the solution completeness and quality, which is one aspect that limits the transfer of the algorithms to industrial applications where we observe a clear tendency to favour robotic systems with predictable and well-defined behavior guided by clear, understandable rules.

To this end, we address the problem of the time-efficient collision-free formation reshaping in 3D environments by introducing a centralized, deterministic \acl{catora} (\acs{catora}) directly optimizing the completion time (the so-called makespan) of the formation reshaping process while providing guarantees on a minimum mutual distance of involved agents (inter-agent collision avoidance).
The proposed approach comprises two key components designed together to allow us to provide theoretical guarantees of the overall complex robotic system's behavior: (i) an algorithm for optimal robot-to-goal assignment considering mutual collision avoidance among robots, and (ii) a computationally efficient trajectory generation approach minimizing the completion time of a set of trajectories.
\ac{catora} builds upon the Hungarian algorithm~\cite{kuhnHungarianAlgorithm, munkresHungarianAlgorithm} adapted to solve the robot-to-goal assignment as \ac{lbap}~\cite{PapadimitriouCombinatorialOptimization} effectively managing potential collisions between assigned robot-goal pairs.
The designed approach to the generation of a set of trajectories is based on a closed-form solution to the minimum-time single-trajectory generation problem~\cite{Penicka22UAVplanning} adapted to generate a set of trajectories minimizing their makespan, while keeping collision-free properties.

The optimality, efficiency, and other attributes of \ac{catora} have been confirmed through theoretical validation, statistical evaluation, and a real-world demonstration of a formation flight in a small-scale visual entertainment performance involving up to 19 \acp{uav} (see \autoref{fig:intro}).
The results demonstrate the capability of \ac{catora} to solve the introduced problem in real time (within a few milliseconds for instances of up to 32 robots) while significantly decreasing the time required to perform formation reshaping tasks (up to 49\% compared to \ac{lsap}-based solution \cite{kuhnHungarianAlgorithm}, \cite{munkresHungarianAlgorithm}) and providing collision avoidance guarantees.
This outcome holds significant value, especially for robots with limited operational time, and can be employed to enhance existing approaches or serve as a foundation for future research in formation reshaping, particularly concerning the autonomous deployment of cooperating multi-robot systems in real-world environments.

We consider the particular contributions of this paper to be:
\begin{itemize}
  \item We present a lower bound on a minimum mutual distance of trajectories for the solution of the robot-to-goal assignment as \ac{lbap} along with its theoretical proof.
  \item We introduce a deterministic, complete algorithm for solving the robot-to-goal assignment problem, minimizing the maximum length of the path among assigned robot-goal pairs while respecting constraints on mutually exclusive robot-goal pairs.
  \item We provide a closed-form solution for generating a set of trajectories connecting given start and goal positions and minimizing the makespan while preserving the guarantees of collision-free properties.
  \item We combine the contributions mentioned above to build \ac{catora}, the first known complete approach for the solution of the formation reshaping problem minimizing the makespan while providing collision avoidance guarantees in 3D environments. We provide verification of its properties through several proofs, numerical analysis, and a real-world experiment.
  \item We provide a quantitative and theoretical analysis of the \ac{catora} solution compared to the \ac{lsap}-based solution in terms of the makespan of a reshaping process, showing its superior performance.
\end{itemize}
\vspace*{0.3cm}



\section{Related Work}
\label{sec:related_work}

Many recently published works in field of cooperative motion planning and  trajectory generation focus on the generation of collision-free trajectories in challenging scenarios while deliberately neglecting the assignment of goals to individual agents\cite{tordesillas2022mader,chen2023cooperativeMotionPlanning,park2023dlsc,kondo2024robustMader}.
The introduced time-optimal formation reshaping problem can be considered a robot-to-goal assignment problem closely coupled with a minimum-time trajectory generation.
One of the most frequently used methods for solving assignment problems is the Hungarian algorithm~\cite{munkresHungarianAlgorithm, kuhnHungarianAlgorithm}, capable of providing a solution minimizing the sum of individual associations costs with time complexity $O(n^3)$, where $n$ is the number of robots. 
Since the Hungarian algorithm was introduced, many algorithms tackling the assignment problem and its variants with different characteristics and performance have been developed~\cite{antonyshyn2023multipleRobotTaskPlanningSurvey, poudel2022TaskAssignmentAlgorithmsa, skaltsis2021taskAllocationSurvey, quinton2023marketApproachesTaskAllocationSurvey, chopra2017distributedHungarianAlgorithm}.
Notably, there have been innovative approaches grounded in game-theory principles~\cite{jang2018taskAssignmentUsingHeadonicGames}, swarm-intelligence~\cite{oh2016psobasedTaskAssignment}, reinforcement learning~\cite{zhao2019rlTaskAllocation}, and market-based methodologies~\cite{choi2009consensusBasedAuctionAlgorithm}. 
Despite these advancements, the Hungarian algorithm remains in use and has been utilized in many works as an efficient centralized algorithm with provable properties to solve the assignment problem.

First introduced in~\cite{fulkerson1953ProductionLineAssignmentProblem}, a specific case of the assignment problem minimizing the maximum individual cost (bottleneck) is called \acl{lbap} (\acs{lbap}).
Similarly to the original problem, \ac{lbap} can be solved using the Hungarian algorithm by modified cost substitutions~\cite{fulkerson1953ProductionLineAssignmentProblem}, applying threshold algorithms~\cite{Burkard1998AssignmentP} or shortest augmenting path algorithms~\cite{carpaneto1981AlgorithmSolutionBottleneck}.
An important adaptation of the Hungarian algorithm, used later in this work, is its dynamic variant~\cite{MillsTettey2007TheDH}.
Given the initial assignment, the dynamic variant enables solving an assignment problem with changed costs in approximately one hundredth of the computational requirements of the Hungarian algorithm starting from scratch.
Despite the variety of approaches developed to solve assignment problems, their direct use to formation reshaping is restricted by neglecting the collision resolution among robots.

Methods suitable for addressing the formation reshaping problem exhibit differences in terms of decentralization, guarantees on the quality of the solution, and employed methods.
The centralized approaches~\cite{turpin2014CaptConcurrentAssignment,agarwal2018SimultaneousOptimizationAssignmentsa, kloder2006PathPlanningPermutationinvarianta, akella2020AssignmentAlgorithmsVariable, macalpine2015SCRAMScalableCollisionavoiding, GRAVELL2021104753} mostly take advantage of the complete information for generating optimal solutions, but often impose an assumption on the collision-free environment.
In~\cite{turpin2014CaptConcurrentAssignment}, the authors propose a centralized approach based on the concurrent solution of assignment of goals and planning of trajectories (CAPT algorithm), which is further extended to a decentralized approach.
The CAPT incorporates a solution of \ac{lsap} minimizing the sum of squared traveled distances combined with constant-velocity and minimum-snap trajectories, which was proved to yield collision-free trajectories under the assumption on a minimum initial distance between agents.
The problem is extended to a variable goal formation (variable scale and translation) in~\cite{agarwal2018SimultaneousOptimizationAssignmentsa}, where the authors show that the problem of task assignment with variable goal formation can be transformed to \ac{lsap} via cost substitution. However, the approach is limited to 2D, and the mutual collisions are prevented by adapting the scale of the final formation.

Unlike the centralized solutions, the distributed approaches often suffer from incomplete information, leading to suboptimal solutions (e.g., approach applying local task swapping~\cite{wang2020ShapeFormationHomogeneous}) and limited guarantees on its quality (e.g., learning-based approach~\cite{khan2019LearningSafeUnlabeled}).
The robot-to-goal assignment problem is also solved in several works on distributed control of multi-rotor \ac{uav} formation in obstacle-free regions~\cite{lusk2020DistributedPipelineScalable}, as well as in complex environments~\cite{alonsoMora2019DistributedMultirobotFormation, quan2022FormationFlightDense}.
These works apply distributed task assignment algorithms~\cite{burger2012distributedSimplexAlgorithm, choi2009consensusBasedAuctionAlgorithm} to assign the robots to local goals during alignment to the target formation.
Although~\cite{lusk2020DistributedPipelineScalable, alonsoMora2019DistributedMultirobotFormation, quan2022FormationFlightDense} are proposed primarily for multi-rotor helicopters, they utilize the sum of squared distances as the minimization criterion for the assignment problem.
Such choice provides certain guarantees on the mutual distance of trajectories if solved optimally~\cite{turpin2014CaptConcurrentAssignment}. However, it does not reflect the problem being solved since minimizing squared traveled distances for in-flight multi-rotor \acp{uav} is neither optimal from the point of view of duration, energy consumption, nor any other appropriate criterion.

The time criterion was considered only in a few works dealing with formation reshaping.
In~\cite{GRAVELL2021104753}, the authors aim to minimize the total time in motion and build the solution of an assignment problem on duration of time-optimal trajectories. The algorithm relies on collision resolution via a combination of time delays and altitude adaptation, which limits its application to 3D environments.
The algorithm presented in~\cite{akella2020AssignmentAlgorithmsVariable} directly approaches the minimization of the makespan by defining the problem as \ac{lbap}.
The proposed solution considers a variable goal formation but is limited to 2D and ignores inter-agent collisions.
Another algorithm considering the minimization of the makespan~\cite{macalpine2015SCRAMScalableCollisionavoiding} also solves the assignment as a variant of \ac{lbap}, but it considers constant-velocity trajectories only.
The authors provide proof of collision avoidance guarantees; however, these are only valid in 2D environments with initial and goal configurations constrained to the grid.
Although some related works show impressive results, the oversight regarding the mutual collisions or minimum-time objective of the robot-to-goal assignment limits their efficient use in real-world applications.



\section{Problem Definition}
\label{sec:problem_definition}

The \acl{tofrp} (\acs{tofrp}), tackled in this manuscript, is defined as follows.
Given the set of initial configurations of $n$ unlabeled robots $\mathbb{S} = \{\mathbf{s}_1, \mathbf{s}_2, \dots, \mathbf{s}_n\}$ and set of $n$ goal configurations $\mathbb{G} = \{\mathbf{g}_1, \mathbf{g}_2, \dots, \mathbf{g}_n\}$, find a set of collision-free trajectories $\mathbb{T}$ that guide the robots from $\mathbb{S}$ to $\mathbb{G}$ while minimizing the makespan of the reshaping process.

Let us define the makespan of reshaping the formation $F$ given the assignment $\phi: \mathbb{S} \rightarrow \mathbb{G} $ as
\begin{equation}\label{eq:makespan}
  M_r(\mathbb{S}, \mathbb{G}, \phi) = \max_{(i,j) \in \phi} \text{tf}(\text{T}(\mathbf{s}_i, \mathbf{g}_j)),
\end{equation}
where $\text{tf}(\text{T}(\mathbf{a}, \mathbf{b}))$ represents the time required to reach position $\mathbf{b}$ from position $\mathbf{a}$ following trajectory $\text{T}(\mathbf{a}, \mathbf{b})$.
Then, the \ac{tofrp} is defined as
\begin{equation}\label{eq:time_optimal_formation_reshaping}
  \begin{split}
    \minimize_{\phi \in \Phi, \text{T} \in \mathcal{T}}\,\,& M_r(\mathbb{S}, \mathbb{G}, \phi),\\
    \text{subject to}\,\,& \min_{0 \leq t \leq t_e} ||\text{T}(\mathbf{s}_i, \mathbf{g}_j, t) - \text{T}(\mathbf{s}_k, \mathbf{g}_l, t)|| \geq \Delta, \\
    &\forall (i,j) \in \phi, (k,l) \in \phi, (i,j) \neq (k,l),
  \end{split}
\end{equation}
where $T(\cdot, t)$ represents a point on a trajectory $T(\cdot)$ corresponding to time $t$, $t_e = \max(\text{tf}(T(\mathbf{s}_i, \mathbf{g}_j)), \text{tf}(T(\mathbf{s}_k, \mathbf{g}_l)))$ is a maximum duration of examined trajectories, $\Delta$ stands for the minimum acceptable mutual distance of robots, $\Phi$ is the set of all possible assignments from $\mathbb{S}$ to $\mathbb{G}$, and $\mathcal{T}$ is a class of arbitrary trajectory generation functions.

In the following sections, we introduce \ac{catora}, an optimal algorithm for the solution of problem~\eqref{eq:time_optimal_formation_reshaping} under the following assumptions:
\begin{enumerate}[leftmargin=0.85cm]
  \item [(A1)] Both the robots and the goals are unlabeled (any robot can be assigned to an arbitrary goal location).
  \item [(A2)] The robots are stationary in the initial and goal configurations.
  \item [(A3)] The motion of the robots between the initial and goal configuration is limited to straight paths with mutually equivalent time parametrization.
  \item [(A4)] The robots are considered to be spheres with radius R for the collision avoidance resolution.
  \item [(A5)] The minimum distance between the pairs of initial configurations and the pairs of goal configurations $\delta = \min_{(i,j) \in \{1, \dots, n\}^2,\, i \neq j}\min \left(||\mathbf{s}_i -  \mathbf{s}_j||, ||\mathbf{g}_i -  \mathbf{g}_j||\right)$ fulfills the condition $\delta \geq \eta \Delta$ with $\eta \geq \sqrt{2}$ being a constant parameter.
  \item [(A6)] The convex hull of $\mathbb{S} \cup \mathbb{G}$ is free of obstacles apart from the robots themselves.

\end{enumerate}

The assumptions (A1) -- (A6) are necessary to guarantee the optimality of the proposed algorithm to the solution of \ac{tofrp}, as defined in~\eqref{eq:time_optimal_formation_reshaping}.
However, in \autoref{sec:experimental_results}, we show that the assumption (A2) is not strict and that the algorithm can also be used for reshaping moving formations, and further that the optimal solution considering assumption (A3) stays close to the theoretical lower bound of the optimal solution not considering (A3).
The assumptions (A5) and especially (A6) impose significant limitations, but both (A5) and (A6) may be easily satisfied in most of the real-world scenarios discussed in~\autoref{sec:introduction}, making the proposed solution practical for real-world applications.



\section{Overview of the maximum matching in bipartite graphs and the Hungarian method}
\label{sec:terminology}

In this section, we overview key terms and definitions from graph theory applied in a further description of the proposed methodology and briefly describe the Hungarian method employed in the proposed algorithm.

\subsection{Maximum matching in bipartite graphs}

Key terms related to maximum matching problem in bipartite graphs used in following sections are:
\begin{itemize}
  \item \textit{Bipartite graph}: graph $G = \{V, E\} = \{V_x, V_y, E\}$, where the set of vertices $V$ can be partitioned in two disjoint subsets $V_x, V_y$, such that the set of edges $E$ does not contain any edge connecting vertices from the same partition.
  \item \textit{Matching}: subset of edges $E_M \subset E$, such that every vertex in $V$ is incident to at most one edge in $E_M$.
  \item \textit{Cardinality of the matching}: number of edges in a matching $C_M = |E_M|$. The matching containing the maximum possible number of edges is called \textit{maximum cardinality matching}. If $C_M = |V_x| = |V_y|$, the matching is called \textit{perfect}. 
  \item \textit{Matched edge}: edge $e_{ij}$ is called matched if it is a part of the matching, unmatched otherwise. 
  \item \textit{Matched vertex}: vertex $v$ is matched if it is incident to an edge in matching $E_M$, and unmatched otherwise.
  \item \textit{Alternating path}: path in a graph that starts with an unmatched vertex and alternates between edges that do not and do belong to the matching.
  \item \textit{Augmenting path}: an alternating path that ends with an unmatched vertex.
  \item \textit{Minimum weight bipartite matching problem}: given bipartite graph $G = \{V_x, V_y, E\}$ and weight function $w: E \rightarrow \mathbb{R}$, find a maximum cardinality matching $E_M$, such that $\sum_{e_{ij} \in E_M} w(e_{ij})$ is minimum.
  \item \textit{Dual problem of minimum weight bipartite matching problem}: given bipartite graph $G = \{V, E\} = \{V_x, V_y, E\}$, weight function $w: E \rightarrow \mathbb{R}$, and vertex labeling function $l_f: V \rightarrow \mathbb{R}$, find a feasible labeling of a maximum cost $c(l_f) = \sum_{v_{x,i} \in V_x} l_f(v_{x,i}) + \sum_{v_{y,j} \in V_y} l_f(v_{y,j})$, where \textit{feasible labeling} is a choice of labels such that $l_f(v_{x,i}) + l_f(v_{y,j}) \leq w(e_{ij})$.

\end{itemize}

\noindent For simplicity of description and without loss of generality, we assume that the bipartite graph $G = \{V_x, V_y, E\}$ is complete and balanced, i.e., $|V_x| = |V_y|$ in the remainder of the paper.

\subsection{Hungarian algorithm}\label{sec:hungarian_algorithm}
The Hungarian algorithm~\cite{kuhnHungarianAlgorithm, munkresHungarianAlgorithm} is widely applied for the solution of the assignment problem (which can also be represented as a minimum weight bipartite matching problem) with proven complexity $O(n^3)$, where $n$ is a number of matched entities.
The input of the Hungarian algorithm is a square biadjacency matrix $\mathbf{M}_d$ representing a weighted bipartite graph $G$ with weight function $w: E \rightarrow \mathbb{R}$.
The algorithm exploits the properties of the dual of minimum weight bipartite matching problem by using dual variables $u_i = l_f(v_{x,i}),\,\, v_j = l_f(v_{y,j}),\,\, i,j \in \{0, 1, \dots, N\}$.
These variables are updated during the run of the algorithm and used to determine the admissibility of edge $e_{ij}$ given by condition
\begin{equation}\label{eq:admissibility_condition}
  u_i + v_j = w(e_{ij}).
\end{equation}

The Hungarian algorithm starts with an empty matching $\phi$ and repeatedly searches for augmenting paths in an equality subgraph formed by edges fulfilling condition~\eqref{eq:admissibility_condition}.
The search for an augmenting path is realized by building so-called Hungarian trees that are rooted in unmatched nodes.
If the Hungarian tree formed by alternating paths in a graph $G$ contains an augmenting path, the current matching is updated by flipping the matched and unmatched edges along the found path. 
This process always increases the cardinality of current matching by one in a single step of the algorithm.
If the augmenting path is not found in a current equality subgraph, the values of dual variables are updated such that the dual task remains feasible and new edges are introduced into the equality subgraph. 
Then, the search for an augmenting path continues.
The incremental increase of the cardinality of the matching ensures that the algorithm reaches a perfect matching for $\mathbf{M}_d \in \mathbb{R}^{n\times n}$ in $n$ steps of a successful search for an augmenting path.
We refer to~\cite{PapadimitriouCombinatorialOptimization, munkresHungarianAlgorithm, kuhnHungarianAlgorithm} for a detailed description of the algorithm and proofs of its properties.



\section{\acl{catora} - overview}
\label{sec:algorithm_overview}

\begin{figure*}[thpb]
  \centering
  \input{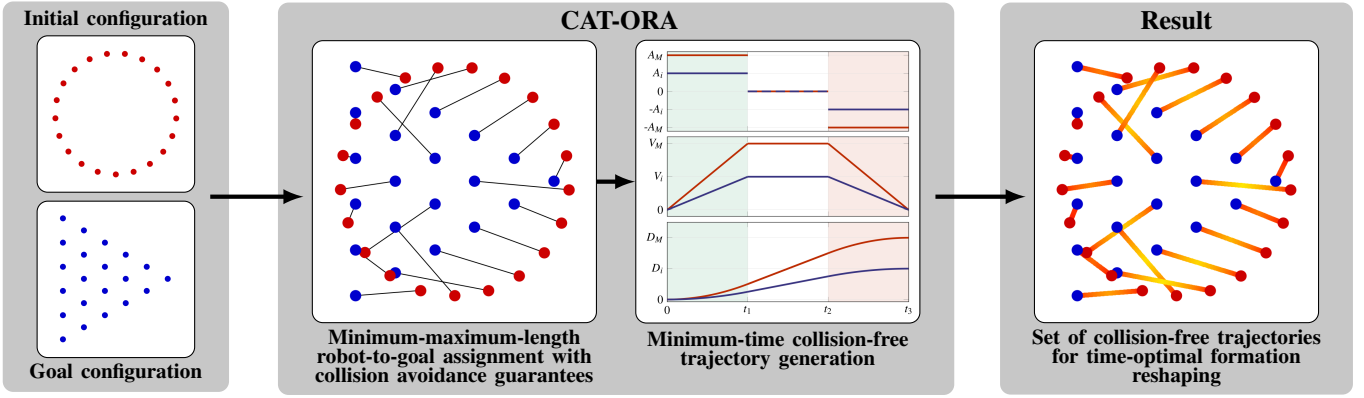}
  \vspace*{-0.2cm}
  \caption{Block diagram of the proposed \acl{catora} (\acs{catora}). The colors of the trajectories in the image on the right encode the velocity profile of particular trajectories, with red being equal to zero velocity and yellow to maximum velocity.}
  \label{fig:catora_overview}
\vspace*{-0.6cm}
\end{figure*}

The introduced \ac{tofrp}~\eqref{eq:time_optimal_formation_reshaping} consists of two problems:
(i) the optimal assignment of initial configurations to goal configurations and (ii) the generation of collision-free minimum-time trajectories.
In further description, we assume that these two problems are completely separable, and that
\begin{equation}\label{eq:assumption_separation}
  \begin{split}
    \max_{(i,j) \in \phi_a} \text{tf}(\text{T}(\mathbf{s}_i, \mathbf{g}_j)) \geq  \max_{(i,j) \in \phi_b} \text{tf}(\text{T}(\mathbf{s}_i, \mathbf{g}_j)) \implies \\
 \max_{(i,j) \in \phi_a} ||\mathbf{s}_i - \mathbf{g}_j|| \geq  \max_{(i,j) \in \phi_b} ||\mathbf{s}_i - \mathbf{g}_j||\\
  \end{split}
\end{equation}
holds for all assignments $\phi_a, \phi_b$ from $\mathbb{S}$ to $\mathbb{G}$.
This means that the assignment minimizing the makespan~\eqref{eq:makespan} corresponds to the assignment minimizing the maximum distance $d_{max}$ between the assigned initial and goal configurations
\begin{equation}\label{eq:max_length}
  d_{max} = \max_{(i,j) \in \phi} ||\mathbf{s}_i - \mathbf{g}_j||.
\end{equation}
This allows us to design \acl{catora} (\acs{catora}) such that the robot-to-goal assignment and generation of collision-free minimum-time trajectories are tackled in a decoupled way (see \autoref{fig:catora_overview} for block diagram of \acs{catora}).
The proof that the proposed decoupled approach does not influence the optimal solution and that~\eqref{eq:assumption_separation} is fulfilled within the proposed approach is provided in~\autoref{sec:proof_of_solution_independence}.

\subsection{Minimum-weight robot-to-goal assignment}
The task of assigning the goal configurations to particular robots can be defined as an integer linear program
\begin{equation}\label{eq:lsap}
  \begin{split}
    \minimize &\sum_{i = 1}^n \sum_{j = 1}^n w(e_{ij}) x_{ij}, \\
    \text{subject to} &\sum_{i = 1}^n x_{ij} = 1\,\, \forall j \in \{1, \dots, n\},\\
    &\sum_{j = 1}^n x_{ij} = 1\,\, \forall i \in \{1, \dots, n\},\\
    & x_{ij} \in \{0, 1\}  \,\, \forall i, j \in \{1, \dots, n\},
  \end{split}
\end{equation}
where $w(e_{ij})$ is the cost of assignment of the goal configuration $\mathbf{g}_j$ to initial configuration $\mathbf{s}_i$, and $x_{ij} = 1$ if $\mathbf{s}_i$ is assigned to $\mathbf{g}_j$, $x_{ij} = 0$ otherwise.
The problem~\eqref{eq:lsap} is often referred to as \acl{lsap} (\acs{lsap}) which can be efficiently solved by the Hungarian algorithm~\cite{kuhnHungarianAlgorithm, munkresHungarianAlgorithm}.
Using the squared Euclidean distances $||\mathbf{s}_i - \mathbf{g}_j||^2$ as costs $w(e_{ij})$, the solution of \eqref{eq:lsap} was proved to guarantee the collision-free property of constant-velocity trajectories when $\delta \geq \sqrt{2}R$~\cite{turpin2014CaptConcurrentAssignment}, where $R$ is the safety radius of robots.

In compliance with~\eqref{eq:assumption_separation}, problem \eqref{eq:lsap} must be reformulated to minimize the length of the longest trajectory in the assignment for solving \ac{tofrp}~\eqref{eq:time_optimal_formation_reshaping}:
\begin{equation}\label{eq:lbap}
  \begin{split}
    \minimize \max_{i,j \in \{0, \dots, n\}} w(e_{ij}) &x_{ij}, \\
    \text{subject to} &\sum_{i = 1}^n x_{ij} = 1\,\, \forall j \in \{1, \dots, n\},\\
    &\sum_{j = 1}^n x_{ij} = 1\,\, \forall i \in \{1, \dots, n\},\\
    & x_{ij} \in \{0, 1\}  \,\, \forall i, j \in \{1, \dots, n\},
  \end{split}
\end{equation}
known as \acl{lbap} (\acs{lbap})~\cite{fulkerson1953ProductionLineAssignmentProblem}.
The specificity of the robot-to-goal assignment problem requires augmenting~\eqref{eq:lbap} by including constraints on mutually colliding paths
\begin{equation}\label{eq:lbap_with_mutual_constraints}
    \sum_{e=1}^{|C_m|} x_{\text{idx}(C_{m,e})} = 1\,\, \forall C_m \in \mathbf{C},
\end{equation}
where $\mathbf{C}$ is a set of constraints represented by sets of mutually colliding edges, and $\text{idx}(\cdot)$ represents the indices of the corresponding edge.

Solving~\eqref{eq:lbap} augmented by~\eqref{eq:lbap_with_mutual_constraints} using standard optimization methods would require to compute the whole set of mutual collision constraints prior to the solution of the problem, which would require to check collisions among $\frac{n^2(n-1)^2}{2}$ pairs of edges, making it computationally intractable for large $n$.
In this work, we propose a novel algorithm that combines the Hungarian algorithm~\cite{kuhnHungarianAlgorithm, munkresHungarianAlgorithm} and its dynamic variant~\cite{MillsTettey2007TheDH} with fast collision checking.
The collision checking is built on the analysis of theoretical guarantees on a minimum mutual distance of trajectories for an assignment provided as a solution of \ac{lbap} (detailed in~\autoref{sec:lbap_theoretical_analysis}). 
A thorough description of the robot-to-goal assignment component of \ac{catora} is provided in~\autoref{sec:task_assignment_lbap_algorithm}.

\subsection{Minimum makespan collision-aware trajectory planning}

The generation of collision-free trajectories between pairs of matched initial and goal configurations that minimize the makespan of the formation reshaping process requires considering the generation of individual minimum-time trajectories.
In compliance with assumption (A3), we consider a model with single-dimension point-mass dynamics $\ddot{p} = a$, with constraints on acceleration control inputs $-a_{max} \leq a \leq a_{max}$, and limits on velocity $v = \dot{p}$, $0 \leq v \leq v_{max}$.
Although the individual minimum-time trajectories using this model would minimize the makespan, they do not preserve the guarantees on mutual collision avoidance.
Exploiting the fact that the minimized makespan is influenced only by the length of the longest trajectory, we have proposed an approach for generating mutually collision-free minimum-time trajectories, preserving the theoretical guarantees on minimum mutual distance.
The proposed approach, which is based on a closed-form solution of the minimum-time trajectory generation problem, is detailed in~\autoref{sec:trajectory_generation}, along with the proof of theoretical guarantees.


\section{Theoretical guarantees of LBAP solution}\label{sec:lbap_theoretical_analysis}

The solution of \ac{lsap} using squared Euclidean distances as costs has been proved to guarantee minimum distance between trajectories $d_{min}$ equal to
\begin{equation}
  d_{min} = \frac{\sqrt{2}}{2} \delta,
\end{equation}
where $\delta = \min_{(i,j) \in \{1, \dots, n\}^2,\, i \neq j}\min \left(||\mathbf{s}_i -  \mathbf{s}_j||, ||\mathbf{g}_i -  \mathbf{g}_j||\right)$ is the minimum distance between any two initial and goal configurations.
The detailed description of the proof is provided in\cite{turpin2014CaptConcurrentAssignment}.
In the following sections, similar properties are derived and proved for the application of \ac{lbap} to solve the same problem while minimizing the maximum distance between the assigned initial and goal configurations~\eqref{eq:max_length}, thus minimizing the makespan~\eqref{eq:assumption_separation}.

\subsection{Minimum mutual distance of two trajectories}
For the analysis of the guarantees on the minimum distance between trajectories, we consider the following scenario.
Without loss of generality, we can assume fixed initial and goal positions $\mathbf{s}_i$, $\mathbf{s}_j$, $\mathbf{g}_i$ with $||\mathbf{s}_j - \mathbf{g}_i||=d$, $||\mathbf{s}_i - \mathbf{g}_i||=Md,\, M \in [0, 1)$ and an arbitrarily positioned goal position $\mathbf{g}_j$ such that
 $||\mathbf{s}_j - \mathbf{g}_i|| \geq ||\mathbf{s}_i - \mathbf{g}_j||$ (see \autoref{fig:trapezoid_general_case}).
For $M \geq 1$, the \ac{lbap} solution coincides with the solution to \ac{lsap}, thus implicitly providing the same guarantees on minimum mutual distance.

\begin{figure}[thpb]
  \centering
  \begin{tikzpicture}[line cap=round]
  \coordinate (si) at (0,0);
  \coordinate (gj) at ( 10:4.1);
  \coordinate (sj) at ( 80:1.4);
  \coordinate (gi) at ( 25:3.3);

  \draw[ultra thick, dashed, mypurple!40] (sj) -- (gj) node[midway,above] {$Md$};
  \draw[ultra thick, dashed, mypurple!40] (si) -- (gj) node[midway,below] {$d$};

  \draw[myblue,fill=myblue] (si) circle (.3ex) node[midway,below] {$\mathbf{s}_j$};
  \draw[myblue,fill=myblue] (sj) circle (.3ex) node[midway,above, shift={(-0.1, 1.3)}] {$\mathbf{s}_i$};
  \draw[myred,fill=myred] (gi) circle (.3ex) node[midway,above, shift={(3.2, 1.5)}] {$\mathbf{g}_j$};
  \draw[myred,fill=myred] (gj) circle (.3ex) node[midway,above, shift={(4.2, 0.1)}] {$\mathbf{g}_i$};

  \draw[vector,arrow,myblue] (sj) -- (si) node[midway,above left] {$\mathbf{s}_{ij}$};
  \draw[vector,arrow,myred] (gj) -- (gi) node[midway,above right] {$\mathbf{g}_{ij}$};

\end{tikzpicture}
  \caption{An example problem consisting of two initial positions $\mathbf{s}_i$, $\mathbf{s}_j$ and two goal locations $\mathbf{g}_i$, $\mathbf{g}_j$. Without loss of generality, the distance $||\mathbf{s}_j - \mathbf{g}_i||$ is assumed to be equal to $d$ and $||\mathbf{s}_i - \mathbf{g}_i|| = Md$, where $M \in [0, 1)$.}
  \label{fig:trapezoid_general_case}
\end{figure}
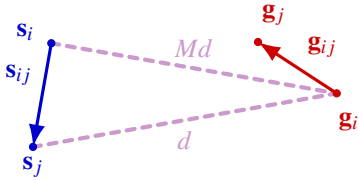

Considering the trajectories with constant velocity, the position of robot $\mathbf{x}_i(t)$ at time $t$ following the trajectory from initial position $\mathbf{s}_i$ to goal position $\mathbf{g}_i$ can be described as
\begin{equation}\label{eq:parametrization_const_velocity}
  \mathbf{x}_i(t) = (1-\alpha)\mathbf{s}_i + \alpha \mathbf{g}_i,
\end{equation}
with $\alpha = \frac{t}{t_d}$ uniformly sampled on $[0, 1]$, where $t_d$ is the duration of the trajectory.
Hence, the mutual distance between robots following trajectories $T_i$ and $T_j$ of the same duration $t_d$ from $\mathbf{s}_i$ to $\mathbf{g}_i$ with velocity $v_i$ and $\mathbf{s}_j$ to $\mathbf{g}_j$ with velocity $v_j \neq v_i$, respectively, can be expressed as
\begin{equation}\label{eq:mutual_dist}
  \begin{split}
    ||\mathbf{x}_j(t) - \mathbf{x}_i(t)|| = &||(1-\alpha)\mathbf{s}_j + \alpha \mathbf{g}_j - (1-\alpha)\mathbf{s}_i - \alpha \mathbf{g}_i||\\
    = & ||(1-\alpha) (\mathbf{s}_j - \mathbf{s}_i) + \alpha(\mathbf{g}_j - \mathbf{g}_i)||.
  \end{split}
\end{equation}

Using the notation introduced at the beginning of this section and notations
\begin{align}
  \mathbf{s}_{ij} &= \mathbf{s}_j - \mathbf{s}_i,\\
  \mathbf{g}_{ij} &= \mathbf{g}_j - \mathbf{g}_i,
\end{align}
the equation~\eqref{eq:mutual_dist} can be written in the following form
\begin{equation}
  ||\mathbf{x}_j(t) - \mathbf{x}_i(t)|| = ||(1-\alpha)\mathbf{s}_{ij} + \alpha \mathbf{g}_{ij}||.
\end{equation}
Then, the squared distance is given by
\begin{equation}\label{eq:squared_dist}
  \begin{split}
    ||\mathbf{x}_j(t) - \mathbf{x}_i(t)||^2 =& ||(1-\alpha)\mathbf{s}_{ij} + \alpha \mathbf{g}_{ij}||^2 = ||\mathbf{s}_{ij} + \alpha (\mathbf{g}_{ij} - \mathbf{s}_{ij})||^2 \\
    =& (\mathbf{s}_{ij} + \alpha (\mathbf{g}_{ij} - \mathbf{s}_{ij}))^T (\mathbf{s}_{ij} + \alpha (\mathbf{g}_{ij} - \mathbf{s}_{ij})).
  \end{split}
\end{equation}
\begin{remark}
In the remainder of this manuscript, we exploit the monotonicity of the quadratic function in the positive domain to replace the search for minimum distance by search for minimum squared distance.
\end{remark}

Following the theorem in~\cite{turpin2014CaptConcurrentAssignment}, for notational convenience, we define:
\begin{equation}\label{eq:start_and_goals_substitutions}
  \begin{split}
    a &\equiv \mathbf{s}_{ij}^T\mathbf{s}_{ij},\\
    b &\equiv \mathbf{s}_{ij}^T\mathbf{g}_{ij},\\
    c &\equiv \mathbf{g}_{ij}^T\mathbf{g}_{ij}.
  \end{split}
\end{equation}
This enables us to simplify~\eqref{eq:squared_dist} to
\begin{equation} \label{eq:squared_dist_alpha}
  ||\mathbf{x}_j(t) - \mathbf{x}_i(t)||^2 = \alpha^2 (a-2b+c) -2\alpha(a-b) + a.
\end{equation}
From~\eqref{eq:squared_dist_alpha}, the value of $\alpha$ minimizing the distance between trajectories of robots with indices $i, j$, can be found as
\begin{equation}\label{eq:alpha_min}
  \alpha_{ij}^* = \frac{a-b}{a-2b+c}.
\end{equation}
By substituting the value of $\alpha_{ij}^*$ in~\eqref{eq:squared_dist_alpha}, the minimum squared distance between trajectories $T_i, T_j$ is given by
\begin{equation}\label{eq:squared_dist_x}
  ||\mathbf{x}_i - \mathbf{x}_j||_{min}^2 = \left\{\begin{array}{cl} \frac{ac-b^2}{a-2b+c} & \text{if } 0 < \alpha_{ij}^* < 1,\\
    \delta_{ij}, & \text{otherwise,}
  \end{array} \right.
\end{equation}
where 
\begin{equation}\label{eq:min_dist_starts_and_goals}
  \delta_{ij} = \min(||\mathbf{s}_{ij}||^2, ||\mathbf{g}_{ij}||^2).
\end{equation}
The minimum squared distance~\eqref{eq:squared_dist_x} was already proved to be greater than $\frac{1}{2}\delta_{ij}$ for $b \geq 0$ in\cite{turpin2014CaptConcurrentAssignment}, which is guaranteed for solutions provided by \ac{lsap} using quadratic costs since
\begin{equation}\label{eq:lsap_condition} 
  ||\mathbf{s}_{i} - \mathbf{g}_{i}||^2 + ||\mathbf{s}_{j} - \mathbf{g}_{j}||^2 \leq ||\mathbf{s}_{j} - \mathbf{g}_{i}||^2 + ||\mathbf{s}_{i} - \mathbf{g}_{j}||^2
\end{equation}
holds for all pairs $\mathbf{s}_{i}, \mathbf{g}_{i}$, and $\mathbf{s}_{j}, \mathbf{g}_{j}$ being part of an optimal assignment. 
Considering following equality which holds for general vectors $\mathbf{x}, \mathbf{y}$
\begin{equation}
    ||\mathbf{x} - \mathbf{y}||^2 = (\mathbf{x} - \mathbf{y})^T(\mathbf{x} - \mathbf{y}) = \mathbf{x}^T\mathbf{x} - 2\mathbf{x}^T\mathbf{y} + \mathbf{y}^T\mathbf{y},
\end{equation}
condition~\eqref{eq:lsap_condition} can be rewritten to
\begin{equation} 
    \mathbf{s}_i^T\mathbf{g}_i + \mathbf{s}_j^T\mathbf{g}_j - \mathbf{s}_i^T\mathbf{g}_j - \mathbf{s}_j^T\mathbf{g}_i = \mathbf{s}_{ij}^T\mathbf{g}_{ij} = b \geq 0.
\end{equation}

In contrast to the application of \ac{lsap}, the \ac{lbap} solution does not directly provide any guarantee on values $a, b, c$, (defined in~\eqref{eq:start_and_goals_substitutions}), and thus the worst-case minimum distance between trajectories is zero.
Given the guarantees $||\mathbf{x}_i - \mathbf{x}_j||_{min}^2 \geq \frac{1}{2}\delta_{ij}$ for $b \geq 0$~\cite{turpin2014CaptConcurrentAssignment}, we further focus on analyzing the guarantees of specific case of the \ac{lbap} solutions with $b < 0$. 

Without loss of generality, we assume $c = ka, k \geq 1$, and $||\mathbf{s}_{ij}||$ to be constant, leading to $a=a_0$ with some constant $a_0 \in \mathbb{R}^+$.
Considering $b = \sqrt{a}\sqrt{c} \cos\lambda$, where $\lambda$ is an angle between vectors $\mathbf{s}_{ij}$ and $\mathbf{g}_{ij}$, and the constraints $a \geq 0$, $a \geq b$, $c \geq b$ enforced by constraints on $\alpha \in [0, 1]$ and $b < 0$, we can rewrite the first part of~\eqref{eq:squared_dist_x} to 
\begin{equation}\label{eq:squared_dist_x_rewritten}
  ||\mathbf{x}_i - \mathbf{x}_j||_{min}^2 = a\frac{k(1-\cos^2(\lambda))}{k-2\sqrt{k}\cos(\lambda) +1}.
\end{equation}
Since $b < 0 \implies \cos{\lambda} < 0$ and $a$ is constant, the gradient of \eqref{eq:squared_dist_x_rewritten} with respect to $k$ is non-negative for all admissible values of $k$.
Therefore, the squared distance is minimal if $a=c$, meaning that the distance between the two start configurations and distance between two goal configurations equal.
Using the substitution~\eqref{eq:start_and_goals_substitutions}, this result can be also interpreted as $||\mathbf{s}_{ij}|| = ||\mathbf{g}_{ij}||$.
For $a=c$, the equation~\eqref{eq:squared_dist_x} is simplified to
\begin{equation}\label{eq:min_dist_simplified}
  ||\mathbf{x}_i - \mathbf{x}_j||_{min}^2 = \left\{\begin{array}{cl} \frac{a+b}{2} & \text{if } 0 < \alpha_{ij}^* < 1,\\
    \delta_{ij}, & \text{otherwise.}
  \end{array} \right.
\end{equation}
Since $a$ is constant and $a > 0$, the minimum distance is achieved for minimum $b$, such that $a=c$.

\begin{remark}
The distance between two trajectories following line segments is minimal when the trajectories intersect, which means that they lie in the same plane.
For each pair of line segments $(\mathbf{q}, \mathbf{r})$ in three-dimensional space, it holds that either $\mathbf{q} || \mathbf{r}$, and thus $\mathbf{q}$ and $\mathbf{r}$ lie in the same plane, or we can find a plane $P$ such that $\mathbf{q} \in P$ and $\mathbf{r} || P$.
The projection $\mathbf{r}^\prime$ of $\mathbf{r}$ into a parallel plane preserves the dimension of $\mathbf{r}$ and 
\begin{equation}
  \begin{split}
    \hspace{-0.02cm}&||(1-\kappa)(\mathbf{q}(t_0) - \mathbf{r}(t_0)) + \kappa(\mathbf{q}(t_f)-\mathbf{r}(t_f))|| \geq\\
    \hspace{-0.02cm}&||(1-\kappa)(\mathbf{q}(t_0) - \mathbf{r^\prime}(t_0)) + \kappa(\mathbf{q}(t_f)-\mathbf{r^\prime}(t_f))||\,\, \forall \kappa \in [0, 1], 
  \end{split}
\end{equation}
where $\mathbf{p}(t_0), \mathbf{r}(t_0)$ and $\mathbf{p}(t_f), \mathbf{r}(t_f)$ stand for the start and end points of the line segments, respectively, and $\kappa$ is an independent variable.
This allows us to solve the rest of the problem in two-dimensional space without the loss of generality.
\end{remark}

Based on the definition~\eqref{eq:start_and_goals_substitutions}, the value of $b$ is given by
\begin{equation}\label{eq:b_min}
  b = \mathbf{s}_{ij}^T\mathbf{g}_{ij} = ||\mathbf{s}_{ij}||||\mathbf{g}_{ij}||\cos(\beta + \gamma),
\end{equation}
where $\beta = \angle \mathbf{g}_i\mathbf{s}_j\mathbf{s}_i$ and $\gamma = \angle \mathbf{g}_j\mathbf{g}_i\mathbf{s}_j$ (see~\autoref{fig:trapezoid_general_case_with_angles}).
\begin{figure}[thpb]
  \centering
  \begin{tikzpicture}[my angle/.style={font=\normalsize, draw, thick, angle eccentricity=1.48, angle radius=5mm}, line cap=round]
  \coordinate (si) at (0,0);
  \coordinate (gj) at (10:4.1);
  \coordinate (sj) at (80:1.4);
  \coordinate (gi) at (27:4.0);

  \centerarc[black!30,thick, dashed](gj)(65:275:1.20)
  \draw[dashed, thick, mypurple!80, opacity=0.7] (sj) -- (gj) node[midway,above] {$Md$};

  \pic [my angle, "$\beta$"] {angle=gj--si--sj};
  \pic [my angle, "$\gamma$"] {angle=gi--gj--si};

  \draw[myblue,fill=myblue] (si) circle (.3ex) node[midway,below] {$\mathbf{s}_j$};
  \draw[myblue,fill=myblue] (sj) circle (.3ex) node[midway,above, shift={(-0.1, 1.3)}] {$\mathbf{s}_i$};
  \draw[myred,fill=myred] (gi) circle (.3ex) node[midway,above, shift={(3.4, 1.9)}] {$\mathbf{g}_j$};
  \draw[myred,fill=myred] (gj) circle (.3ex) node[midway,above, shift={(4.2, 0.1)}] {$\mathbf{g}_i$};

  \draw[vector,arrow,myred] (si) -- (gj) node[midway,below] {$d$};
  \draw[vector,arrow,myblue] (sj) -- (si) node[midway,above left] {$\mathbf{s}_{ij}$};
  \draw[vector,arrow,myred] (gj) -- (gi) node[midway,above right] {$\mathbf{g}_{ij}$};

\end{tikzpicture}
  \caption{Illustration of the general case of an assignment problem with fixed points $\mathbf{s}_i, \mathbf{s}_j, \mathbf{g}_i$ and variable point $\mathbf{g}_j$. }
  \label{fig:trapezoid_general_case_with_angles}
\end{figure}
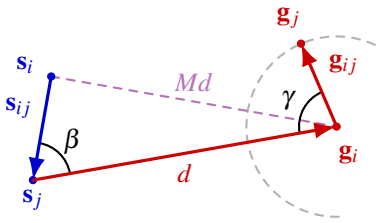
Considering~\eqref{eq:b_min} and the limitations on the values of $\beta$ and $\gamma$ coming from $M \in [0,1)$, the distance is minimized for $\beta + \gamma = \pm\pi$, resulting in the intersection of $\mathbf{s}_i - \mathbf{g}_i$ and $\mathbf{s}_j - \mathbf{g}_j$, leading to a minimum mutual distance equal to zero.  
This implies that there are no theoretical guarantees on the limits for the minimum mutual distance of robots following trajectories $\text{T}_i, \text{T}_j$ for a general case.
Therefore, we further focus on analysis of the guarantees on minimum mutual distance depending on the value of $M$.

Given the condition
\begin{equation}\label{eq:case_leq_m}
  \frac{\max(||\mathbf{s}_i - \mathbf{g}_i||, ||\mathbf{s}_j - \mathbf{g}_j||)}{\max(||\mathbf{s}_i - \mathbf{g}_j||, ||\mathbf{s}_j - \mathbf{g}_i||)} \leq M,
\end{equation}
we can state the following theorem:
\begin{theoremwobrackets}[]\label{th:minimum_dist_lbap}
  If $\frac{\max(||\mathbf{s}_i - \mathbf{g}_i||, ||\mathbf{s}_j - \mathbf{g}_j||)}{\max(||\mathbf{s}_i - \mathbf{g}_j||, ||\mathbf{s}_j - \mathbf{g}_i||)} \leq M,\,\,M \in [0,1)$, then minimum mutual distance $d_{ij,min} \geq \sqrt{1-M^2} \delta_{ij}$.
\end{theoremwobrackets}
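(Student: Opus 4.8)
The plan is to reduce the theorem to a single scalar inequality on the angle $\lambda$ between $\mathbf{s}_{ij}$ and $\mathbf{g}_{ij}$, and then to solve the resulting constrained maximization of that angle. By the reductions already established above, the worst case (smallest minimum mutual distance for a fixed separation) occurs when $a=c$, i.e.\ $||\mathbf{s}_{ij}||=||\mathbf{g}_{ij}||=:\delta_{ij}$, and can be analysed in a single plane by the projection remark. In this regime the closest approach is governed by \eqref{eq:min_dist_simplified}, so that $d_{ij,min}^2=\tfrac{1}{2}(a+b)=\tfrac{1}{2}\delta_{ij}^2\,(1+\cos\lambda)$, where I use $a=\delta_{ij}^2$ and $b=\delta_{ij}^2\cos\lambda$ from \eqref{eq:b_min}. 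Hence the claimed bound $d_{ij,min}\geq\sqrt{1-M^2}\,\delta_{ij}$ is \emph{equivalent} to the clean statement $\cos\lambda\geq 1-2M^2$, which I would establish as the heart of the proof.

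Next I would restate the hypothesis \eqref{eq:case_leq_m} in metric terms. Because the construction assumes $||\mathbf{s}_j-\mathbf{g}_i||\geq||\mathbf{s}_i-\mathbf{g}_j||$, the denominator of \eqref{eq:case_leq_m} equals $||\mathbf{s}_j-\mathbf{g}_i||=d$, so the hypothesis reads $\max(||\mathbf{s}_i-\mathbf{g}_i||,||\mathbf{s}_j-\mathbf{g}_j||)\leq Md$. Together with the standing assumption $||\mathbf{s}_i-\mathbf{g}_i||=Md$, this pins the longer assignment edge at $Md$ and caps the shorter one, $||\mathbf{s}_j-\mathbf{g}_j||\leq Md$, while the longer diagonal stays fixed at $d$. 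These are precisely the conditions under which the \ac{lbap} prefers the straight assignment over the swap, and they are what prevent the two segments from approaching the degenerate crossing $\lambda=\pi$ that would drive $d_{ij,min}$ to zero.

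I would then set up explicit planar coordinates with the start segment fixed, e.g.\ $\mathbf{s}_i=(-\tfrac{\delta_{ij}}{2},0)$ and $\mathbf{s}_j=(\tfrac{\delta_{ij}}{2},0)$, and the goal segment of equal length $\delta_{ij}$ centred at a free point and rotated by the angle $\lambda$, so that $\cos\lambda$ is exactly the quantity to be bounded. The two equalities $||\mathbf{s}_i-\mathbf{g}_i||=Md$ and $||\mathbf{s}_j-\mathbf{g}_i||=d$ fix the horizontal coordinate of the goal-segment centre (their squared difference is linear in that coordinate), and substituting the remaining inequality $||\mathbf{s}_j-\mathbf{g}_j||\leq Md$ gives, after expanding with the law of cosines and cancelling, a relation of the form $W(1-\cos\lambda)\leq 2\,\delta_{ij}\sin\lambda\,\sqrt{M^2d^2-P^2}$, with $W$ and $P$ simple functions of $\delta_{ij}$, $d$ and $M$. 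Solving this at the boundary for $\cos\lambda$ and treating the separation $\delta_{ij}$ as the final free parameter, I expect the extremal angle to reduce to the closed form $\cos\lambda=\tfrac{(\delta_{ij}^2+(1-M^2)d^2)^2}{2\,\delta_{ij}^2 d^2}-1$, whose minimum over $\delta_{ij}$ is attained at $\delta_{ij}=\sqrt{1-M^2}\,d$ and equals $1-2M^2$. This simultaneously proves $\cos\lambda\geq 1-2M^2$ and exhibits the tight configuration.

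The main obstacle is this final constrained maximization. One must verify that the binding constraint at the extremal angle is indeed $||\mathbf{s}_j-\mathbf{g}_j||=Md$ rather than the diagonal constraint, select the correct sign of the square root when solving the quadratic for the vertical coordinate of the goal-segment centre, and confirm that extremising over $\delta_{ij}$ yields a genuine \emph{minimum} of $\cos\lambda$, so that the inequality holds for every admissible configuration and not merely at one point. The earlier monotonicity facts (that $a=c$ is the worst separation ratio and that smaller $b$ is worse) must also be carried through, so that the two-variable planar reduction is fully justified before the angle bound is applied.
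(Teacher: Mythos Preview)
Your proposal is correct and reaches the same extremal configuration as the paper, but the parametrization differs from the appendix proof. After the shared reduction to the planar $a=c$ case, the paper works with the isosceles trapezoid $ABCD$ directly: it locates the diagonal intersection $E$, introduces the side ratio $K=|CD|/|AB|$, uses triangle similarity and the Law of Cosines to express $d_{ij,min}$ as a function of the diagonal length $|AC|$, $M$ and $\delta_{ij}$, and then minimizes over $|AC|$ (the minimization is an AM--GM step, yielding $|AC|^{*}=M\delta_{ij}/\sqrt{1-M^{2}}$). You instead keep the angle $\lambda$ between $\mathbf{s}_{ij}$ and $\mathbf{g}_{ij}$ as the central variable, recast the theorem as the scalar bound $\cos\lambda\geq 1-2M^{2}$, and minimize over $\delta_{ij}$ with $d$ fixed. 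The two optimizations are dual to one another: your extremal relation $\delta_{ij}=\sqrt{1-M^{2}}\,d$ is exactly the paper's $|AC|^{*}=Md^{*}=M\delta_{ij}/\sqrt{1-M^{2}}$ rewritten, and your closed form for $\cos\lambda$ matches the paper's $d_{ij,min}=\bigl((1-M^{2})d^{2}+\delta_{ij}^{2}\bigr)/(2d)$ after squaring. What your route buys is a single transparent target inequality on $\cos\lambda$ that makes the dependence on $M$ explicit from the outset; what the paper's route buys is that the trapezoid geometry handles the ``which constraint is binding'' question automatically, since the isosceles structure forces $||\mathbf{s}_{j}-\mathbf{g}_{j}||=Md$ and $||\mathbf{s}_{i}-\mathbf{g}_{j}||=||\mathbf{s}_{j}-\mathbf{g}_{i}||=d$ simultaneously, so the sign and branch checks you flag as obstacles never arise.
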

Considering the assumption in~\autoref{th:minimum_dist_lbap} and~\eqref{eq:b_min}, the mutual distance is minimized when
\begin{equation}
  ||\mathbf{s}_i - \mathbf{g}_i|| = ||\mathbf{s}_j - \mathbf{g}_j|| = M \max(||\mathbf{s}_j - \mathbf{g}_i||, ||\mathbf{s}_i - \mathbf{g}_j||),
\end{equation}
which maximizes $\beta + \gamma$ in range $(0, \pi)$.
This corresponds to a situation in which the positions $\mathbf{s}_i, \mathbf{s}_j, \mathbf{g}_i, \mathbf{g}_j$ form vertices of an isosceles trapezoid.
The detailed analysis of minimum mutual distance of robots following trajectories formed by diagonals of an isosceles trapezoid is provided in~\aref{ap:min_dist_trapezoid}, along with proof of~\autoref{th:minimum_dist_lbap}.

As a consequence of~\autoref{th:minimum_dist_lbap}, a pair of constant-velocity trajectories for which~\eqref{eq:case_leq_m} holds is guaranteed to be collision-free under the condition
\begin{equation}\label{eq:collision_free_condition}
  \delta_{ij} \geq \frac{\sqrt{1-M^2}}{1-M^2} \Delta.
\end{equation}
Since we have analyzed the worst-case scenario, the resulting condition~\eqref{eq:collision_free_condition} forms a lower bound on the minimum distance between a pair of trajectories that can be applied for an efficient mutual collisions check of robots following constant-velocity trajectories.



\section{Algorithm for solution of \ac{lbap} with guarantees on minimum distance and collision-free trajectories}\label{sec:task_assignment_lbap_algorithm}

As previously stated in~\autoref{sec:algorithm_overview}, neither the Hungarian algorithm nor its adaptations for \ac{lbap} can be used to directly solve the \ac{lbap} with constraints on mutually colliding trajectories~\eqref{eq:lbap_with_mutual_constraints}.
  Using the results obtained in~\autoref{sec:lbap_theoretical_analysis}, we introduce an optimal algorithm for the solution of~\eqref{eq:lbap} with additional constraints~\eqref{eq:lbap_with_mutual_constraints}. The algorithm which forms the first component of the proposed \ac{catora} is outlined in~\autoref{alg:overall_lbap}, illustrated in~\autoref{fig:overall_lbap}, and detailed in the following sections.

\begin{figure}[htpb]
  \centering
  \definecolor{color_red}{HTML}{A30D00}
\definecolor{color_green}{rgb}{0, .522, .243}
\definecolor{color_blue}{rgb}{0, 0, .9}

\pgfmathsetmacro{\vshift}{0.35} 
\pgfmathsetmacro{\hshift}{2.39} 
\pgfmathsetmacro{\hshiftsecondcol}{7.0} 
\pgfmathsetmacro{\harrowshift}{0.4} 
\pgfmathsetmacro{\varrowshift}{0.2} 
\pgfmathsetmacro{\varrowbend}{0.35} 

\tikzstyle{block}=[draw, rounded corners, text centered, ultra thick, minimum height=2.5em, minimum width=5.8em, inner sep=1pt, fill=white, fill opacity=1.0, text opacity=1.0]
\tikzstyle{block_filter}=[draw, rounded corners, text centered, minimum height=2.0em,  minimum width=5.6em, fill=white, fill opacity=1.0, text opacity=1.0]
\tikzstyle{block_perf}=[draw, rounded corners, text centered, minimum height=2.0em, minimum width=5.6em, fill=white, fill opacity=1.0, text opacity=1.0]
\def\nodedst{2.0cm}

\begin{tikzpicture}[auto, node distance=1.0cm, >=latex, font=\scriptsize]



    \node [block] (branch_solution) {
        \begin{tabular}{c}
          \footnotesize Branch solution \\
          \footnotesize and solve assignment \\
        \end{tabular}};
  
    \node [block, above of=branch_solution, shift = {(0.0cm, \vshift)}] (update_threshold) {
        \begin{tabular}{c}
          \footnotesize Update \\
          \footnotesize threshold \\
    \end{tabular}};

    \node [block, left of=branch_solution, fill=red!10, shift = {(-\hshift, 0.0cm)}] (set_initial) {
        \begin{tabular}{c}
          \footnotesize Set initial \\
          \footnotesize threshold \\
    \end{tabular}};

    \node [block, right of=branch_solution, shift = {(\hshift, 0.0cm)}] (check_collisions) {
        \begin{tabular}{c}
          \footnotesize Check \\
          \footnotesize collisions \\
    \end{tabular}};

    \node [block, above of=check_collisions, shift = {(0.0cm, \vshift)}] (check_completeness) {
        \begin{tabular}{c}
          \footnotesize Check \\
          \footnotesize completeness \\
    \end{tabular}};

    \node [block, above of=set_initial, shift = {(0.0cm, \vshift)}] (solve_assignment) {
        \begin{tabular}{c}
          \footnotesize Solve \\
          \footnotesize assignment \\
    \end{tabular}};

    \node [block, fill=green!10, below of=branch_solution, minimum width=25.0em, shift = {(0.0cm, -\vshift)}] (result) {
        \begin{tabular}{c}
          \footnotesize \textbf{Collision-free robot-to-goal assignment minimizing makespan}
    \end{tabular}};

  \draw [->, color=color_red, ultra thick] (branch_solution.north) -- (update_threshold.south); 
  \draw [->, color=color_red, ultra thick] (check_completeness.west) -- (update_threshold.east); 
  \draw [->, color=color_red, ultra thick] (check_collisions.west) -- (branch_solution.east); 
  \draw [->, color=color_green, ultra thick] (check_completeness.south) -- (check_collisions.north); 
  \draw [->, color=color_green, ultra thick] (branch_solution.south) -- (result.north); 
  \draw [->, ultra thick] (set_initial.north) -- (solve_assignment.south); 
  \draw [->, ultra thick] (update_threshold.west) -- (solve_assignment.east); 
  \draw [->, ultra thick] (solve_assignment.north) |- ($(solve_assignment.north) + (0.0, \varrowbend)$) -| (check_completeness.north); 
  \draw [->, ultra thick, color=color_green] (check_collisions.south) -- ($(check_collisions.south) + (0.0, -\vshift-0.07)$); 

\end{tikzpicture}
  \vspace*{-0.3cm}
  \caption{Simplified diagram illustrating succession of individual steps of the algorithm for robot-to-goal assignment considering mutual collision constraints. The green and red arrows indicate the branching based on positive and negative results, respectively. The detailed description of individual steps is provided in~\autoref{sec:task_assignment_lbap_algorithm}.}
  \label{fig:overall_lbap}
\end{figure}
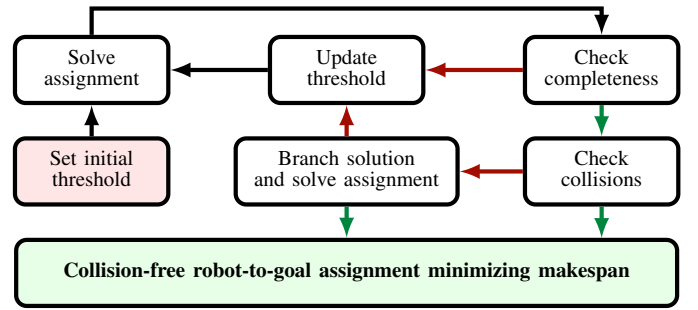

\begin{algorithm}[!ht]
  \DontPrintSemicolon
  \KwInput{ sets of initial and goal configurations $\mathbb{S}$, $\mathbb{G}$}
  \KwOutput{complete, collision-free assignment $\phi$ from $\mathbb{S}$ to $\mathbb{G}$, minimizing the length of the trajectories}

  \hrulealg
  $\mathbf{M_d}$, $\mathbf{S_d}$, $\mathbf{G_d}$ $:=$ preprocessData($\mathbb{S}, \mathbb{G}$) \\
  $t_{lb} := $ getThresholdLowerBound($\mathbf{M_d}$) \\
  $\mathbf{T}, t_c := $ initializeThresholds($\mathbf{M_d}$, $t_{lb}$) \\
  $\mathbf{B} := $ initializeBoundedMatrix($\mathbf{M_d}$,$t_c$) \\
  $\mathbf{u}, \mathbf{v} :=$ initializeDualVariables($\mathbf{M_d}$, $\mathbf{B}$) \label{algl:initialize_dual_variables} \\
  $\phi := $ findInitialAssignment($\mathbf{M_d}$, $\mathbf{B}$)\\
  $done$ := false \\

  \While{$\mathbf{not}$ $done$} {

    $\phi :=$ internalHungarian($\phi, \mathbf{M_d}, \mathbf{B}, \mathbf{u}, \mathbf{v}$) \label{algl:internal_hungarian} \\

    $valid := $ isComplete($\phi$) \label{algl:is_complete} \\

    \If{$valid$} {
      \mbox{$c\_edges := $ getCollidingEdges($\phi, \mathbf{M_d}, \mathbf{S_d}, \mathbf{G_d}, \mathbb{S}, \mathbb{G}$)} \\
    \If{$c\_edges$ = $None$} {
    $done$ := true \label{algl:done}\\
    } \Else {
    $\phi :=$ branchSolution($\phi, \mathbf{M_d}, \mathbf{B}, \mathbf{u}, \mathbf{v}$) \label{algl:branch_solution} \\
    $valid := $ isComplete($\phi$) \\
    }
    }

    \If{$\mathbf{not}$ $valid$} {
    $t_c := $ updateThreshold($\mathbf{T}$) \label{algl:update_threshold}\\
    $\mathbf{B}, \mathbf{e}_u := $ updateBoundedMatrix($t_c$) \\
    \mbox{$\mathbf{u}, \mathbf{v} := $ updateMatchingAndDuals($\phi, \mathbf{M_d}, \mathbf{e}_u, \mathbf{u}, \mathbf{v}$)}\label{algl:update_matching_and_duals}\\
    }
  }
  \caption{Algorithm for robot-to-goal assignment considering mutual collision constraints}
  \label{alg:overall_lbap}
\end{algorithm}

\subsection{Algorithm for robot-to-goal assignment considering mutual collision constraints}\label{sec:lbap_algorithm}
To simplify the description of the proposed robot-to-goal assignment algorithm (\autoref{alg:overall_lbap}), we assume sets of initial and goal configurations $\mathbb{S}$ and $\mathbb{G}$ to be of the same size $|\mathbb{S}| = |\mathbb{G}| = N$, even though this is not strictly required.
The algorithm begins with data preprocessing to get the weighted biadjacency matrix $\mathbf{M}_d \in \mathbb{R}^{N\times N},\,\, m_{ij} = ||\mathbf{s}_i - \mathbf{g}_j||^2$ and the distance matrices $\mathbf{S}_d \in \mathbb{R}^{N\times N},\,\, s_{ij} = ||\mathbf{s}_i - \mathbf{s}_j||^2$, $\mathbf{G}_d \in \mathbb{R}^{N\times N},\,\, g_{ij} = ||\mathbf{g}_i - \mathbf{g}_j||^2$ that store the squared distances of particular start and goal locations for efficient collision checking.

Further steps initialize several variables.
First, the lower bound $t_{lb}$ for a threshold of elements in $\mathbf{M}_d$ that  determines whether edges $e_{ij}$ can be part of the solution is found as an element of $\mathbf{M}_d$ 
\begin{equation}\label{eq:threshold_lower_bound}
  t_{lb} = \max(R_{min}, C_{min}),
\end{equation}
where
\begin{align}
  R_{min} &= \max_{i \in 1, \dots, N} \min_{j \in 1,\dots, N} m_{ij},\\
  C_{min} &= \max_{j \in 1, \dots, N} \min_{i \in 1,\dots, N} m_{ij}.
\end{align}
Next, the list of thresholds $\mathbf{T}$ is formed as a sorted list of elements $m_{ij}$ in $\mathbf{M}_d$ which are greater than $t_{lb}$.
The current threshold $t_c = t_{lb}$ is also applied in initialization of a bounding matrix $\mathbf{B} \in \{0,1\}^{N \times N}$, where
\begin{equation}
  b_{ij} = \left\{
    \begin{array}{ccc}
      0 & \text{if} & m_{ij} \leq t_{c}, \\
      1 & \text{if} & m_{ij} > t_c.\\
    \end{array}\right.
\end{equation}
The bounding matrix $\mathbf{B}$ is used and updated throughout the whole algorithm to limit the maximum cost of an admissible edge and also to exclude the restricted edges, being part of the collision, from the assignment.

As the next step, the vectors of row and column dual variables $\mathbf{u} = \{u_1, \dots, u_N\}$ and $\mathbf{v} = \{v_1, \dots, v_N\}$ are initialized according to the following rule:
\begin{align}
  v_j &= \min_{i \in \{q | b_{qj} = 0,\,\, q \in \{1, \dots, N\}\}} m_{ij},\,\, \forall j \in \{1, \dots, N\}, \\
  u_i &= \min_{j \in \{q | b_{iq} = 0,\,\, q \in \{1, \dots, N\}\}} m_{ij} - v_j,\,\, \forall i \in \{1, \dots, N\}.
\end{align}
This initialization ensures that at least one admissible edge is present in each row and column at the beginning of the algorithm.
The final step preceding the main loop of the algorithm finds an initial assignment by a sequential search for an arbitrary admissible edge that lies in a yet unassigned row and column.
This step is not necessary since the algorithm can start with a valid matching of arbitrary cardinality (including the empty matching), but it decreases the number of required steps in the initial phase of the algorithm.

With the completed initialization, the main loop of the algorithm begins with the \textit{internalHungarian()} procedure (line~\ref{algl:internal_hungarian} of~\autoref{alg:overall_lbap}) detailed in~\autoref{alg:internal_hungarian}.
This procedure starts by searching for an augmenting path through growing the Hungarian trees rooted at the unmatched nodes in a current equality subgraph.
If an augmenting path $P$ is found, the matching at step $k$, $\phi_k$ is updated by path $P$ as
\begin{equation}
  \phi_{k+1} = (P - \phi_k) \cup (\phi_k - P).
\end{equation}
Otherwise, the dual variables are updated using the set of nodes encountered in the grown Hungarian trees according to the formula
\begin{align}
  u_i &= \left\{
    \begin{array}{ccc}
      u_i - \theta & \text{if} &i \in \mathbb{H}_r, \\
      u_i + \theta & \text{if} &i \notin \mathbb{H}_r, \\
    \end{array}\right. \forall i \in \{0,\dots,N\},\\
    v_j &= \left\{
      \begin{array}{ccc}
        v_j + \theta & \text{if} &j \in \mathbb{H}_c, \\
        v_j - \theta & \text{if} &j \notin \mathbb{H}_c, \\
      \end{array}\right. \forall j \in \{0,\dots,N\},
\end{align}
where
\begin{equation}\label{eq:theta}
  \theta = \frac{1}{2} \min_{i \notin \mathbb{H}_r, j \in \mathbb{H}_c, b_{ij} = 0} m_{ij} - u_i - v_j, \\
\end{equation}
and $\mathbb{H}_r, \mathbb{H}_c$ are sets of nodes' indices encountered within the Hungarian trees corresponding to the rows and columns of $\mathbf{M}_d$, respectively.

Up to this part, the \textit{internalHungarian()} procedure (\autoref{alg:internal_hungarian}) matches the internal part of the original Hungarian algorithm with the only difference in~\eqref{eq:theta} which excludes the edges restricted by the bounding matrix $\mathbf{B}$.
However, this modification can result in an undefined value of $\theta$, indicating that the assignment problem does not have a solution with the current threshold $t_c$ (line~\ref{algl:is_update_feasible} in~\autoref{alg:internal_hungarian}).
In such a case, the \textit{internalHungarian()} procedure is aborted while keeping the incomplete assignment $\phi$ and updating the values of dual variables $\mathbf{u}, \mathbf{v}$ for later processing inside the main loop of~\autoref{alg:overall_lbap}.

\begin{algorithm}
  \DontPrintSemicolon
  \caption{internalHungarian($\phi, \mathbf{M_d}, \mathbf{B}, \mathbf{u}, \mathbf{v}$)}
  \label{alg:internal_hungarian}
  \KwInput{matching $\phi$, matrix of squared distances $\mathbf{M_d}$, bounding matrix $\mathbf{B}$ marking the elements exceeding current threshold $t_c$, row and column dual variables $\mathbf{u}, \mathbf{v}$}
  \KwOutput{updated assignment $\phi$ with non-decreased cardinality, updated row and column dual variables $\mathbf{u}$, $\mathbf{v}$}
  \hrulealg
    \While{$\mathbf{not} \text{ isComplete}(\phi)$} {
    $T_h :=$ growHungarianTrees$(\phi,\mathbf{B},\mathbf{u}, \mathbf{v})$\\
    $P :=$ findAugmentingPath$(T_h)$\\
    \If {$P$ = $None$} {
      \If {isUpdateFeasible$(T_h, \mathbf{B})$} { \label{algl:is_update_feasible} 
      updateDualVariables$(T_h, \mathbf{u}, \mathbf{v})$\\
    } \Else {
    \textbf{break}\\
    }
    } \Else {
      $\phi :=$ augmentPath$()$\\
    }

    }

\end{algorithm}

Once the matching from the \textit{internalHungarian()} procedure is obtained, its completeness is verified (line~\ref{algl:is_complete} of~\autoref{alg:overall_lbap}).
If the matching is not complete, meaning that its cardinality $\text{card}(\phi) < N$, the threshold $t_c$ and bounding matrix $\mathbf{B}$ are updated.
As a result of the Hungarian algorithm on an incomplete graph, the last found matching $\phi$ has the maximum cardinality on a graph excluding edges bounded by $\mathbf{B}$.
Thus, the matching cannot be completed without adding at least $K = N - \text{card}(\phi)$ new edges.
Based on this observation, the current threshold $t_c$ is updated to the lowest value in $\mathbf{T}$ that decreases the number of bounded elements in $\mathbf{B}$ by at least $K$.

The change of the elements in matrix $\mathbf{B}$ corresponds to the modifications of values in the original cost matrix $\mathbf{M}_d$, which requires updating the dual variables to maintain the dual task feasible.
For this purpose, we have adapted the method for updating dual variables in a dynamic (cost-changing) variant of the task assignment problem proposed in~\cite{MillsTettey2007TheDH}.
The \textit{updateMatchingAndDuals()} procedure applied within the proposed algorithm (line~\ref{algl:update_matching_and_duals} of~\autoref{alg:overall_lbap}) is outlined in~\autoref{alg:update_duals}.
After the adaptation of dual variables, the algorithm proceeds to the next run of the \textit{internalHungarian()} procedure (line~\ref{algl:internal_hungarian} of~\autoref{alg:overall_lbap}), starting with the matching of cardinality $\text{card}(\phi_k) \geq \text{card}(\phi_{k-1})$ and a decreased number of bounded elements.

\begin{algorithm}
  \DontPrintSemicolon
  \caption{updateMatchingAndDuals($\phi, \mathbf{M_d}, \mathbf{e}_u, \mathbf{u}, \mathbf{v}$)}
  \label{alg:update_duals}
  \KwInput{matching $\phi$, matrix of squared distances $\mathbf{M_d}$, set of updated edges $\mathbf{e}_u$, row and column dual variables $\mathbf{u}, \mathbf{v}$}
  \KwOutput{updated assignment $\phi$, updated row and column dual variables $\mathbf{u}$, $\mathbf{v}$}
  \hrulealg
    \For {$e_{ij} \in \mathbf{e}_u$ } {
    \If {$m_{ij} < u_i + v_j$} {
    $u_i = \min_{k \in \{1, \dots, N\}} m_{ik} - v_k $ \\
    \If { $e_{ij} \notin \phi$} {
    $\phi := \phi \backslash \{e_{ik}, k \in \{1, \dots, N\}\}$ \\
    }
    } \Else {
    $\phi := \phi \backslash e_{ij}$ \\
    }
    }

\end{algorithm}

If the matching found by \textit{internalHungarian()} procedure is perfect, it is tested for the existence of colliding edges using a combination of the results derived in~\autoref{sec:lbap_theoretical_analysis} for evaluation of the majority of the potential collisions, and the precise collision checking using~\eqref{eq:squared_dist_x}.
The collision check is done over all pairs of edges in the perfect matching $\phi$.
The collision check of edges $e_{ij}, e_{kl} \in \phi$ starts with the evaluation of
\begin{equation}\label{eq:collision_check_initial_condition}
  \begin{split}
    \text{colide}(e_{ij}, e_{kl}) = &(m_{ij} + m_{kl}) \geq (m_{il} + m_{kj}) \land \\
    &\max(m_{ij}, m_{kl}) > M^2 \max(m_{il}, m_{kj}),
  \end{split}
\end{equation}
where the value of $M$ is set based on the value of $\delta$ and the minimum allowed mutual distance $\Delta$ using~\eqref{eq:collision_free_condition}.
The first part of the condition rejects the risk of potential collision by detecting the equivalence with the \ac{lsap} solution with proven guarantees on the minimum distance of trajectories~\cite{turpin2014CaptConcurrentAssignment} while the second part eliminates the collisions using~\autoref{th:minimum_dist_lbap}.

Since the condition from~\autoref{th:minimum_dist_lbap} represents the lower bound on a minimum mutual distance, we further apply the exact computation of a minimum distance to avoid false positive detections of collisions if $\text{collide}(e_{ij}, e_{kl}) = \text{true}$.
Thus, if the condition~\eqref{eq:collision_free_condition} is not met, equation~\eqref{eq:squared_dist_x} is applied for an exact computation of the minimum mutual distance of the trajectories being compared to the minimum acceptable distance $\Delta$.
In the case that there is no pair of colliding edges in a perfect matching $\phi$, the algorithm terminates and returns $\phi$ as a complete assignment from $\mathbb{S}$ to $\mathbb{G}$, minimizing the maximum length of the trajectory while fulfilling the condition on collision-free assignment with constant-velocity trajectories (line~\ref{algl:done} of~\autoref{alg:overall_lbap}).

If a collision is detected, the \textit{branchSolution()} procedure (\autoref{alg:branch_solution}) is started to ensure that the algorithm explores all possibly collision-free matchings for a current threshold $t_c$ before increasing its value and making new elements of $\mathbf{M}_d$ feasible (line~\ref{algl:branch_solution} of~\autoref{alg:overall_lbap}).
The proposed method is based on the depth-first search algorithm performed on a binary tree graph formed by nodes defined by matching $\phi$, pair of colliding edges $\mathbf{e}_c$, and vectors of row and column dual variables $\mathbf{u}, \mathbf{v}$.
Note that, the \textit{branchSolution()} method is used to find any collision-free solution with current threshold $t_{c}$ that defines the optimal value.   
Thus, the non-optimality of the depth-first search does not influence the optimality of the presented algorithm.    

The binary tree, rooted at a node corresponding to initial perfect matching, is iteratively built during the depth-first search by expanding the parent node according to the following expansion rule.
The parent node $N_p = \{\phi_p, e_{ij}, \mathbf{u}_p, \mathbf{v}_p\}$ with a maximum matching $\phi_p$, restricted edge $e_{ij}$, and dual variables $\mathbf{u}_p, \mathbf{v}_p$ is, in the case of detected colliding edges $\mathbf{e}_c = \{e_{kl}, e_{op}\}$, expanded into two child nodes derived from the task assignment problem of the parent node by adding a single bounded edge and updating dual variables correspondingly.
Thus, the expansion of a node $N_p$ results in new nodes given by
\begin{equation}\label{eq:expansion_rule}
  \begin{split}
    N_{c1} &= \{\phi_p, e_{kl}, \mathbf{u}_p, \mathbf{v}_p\},\\
    N_{c2} &= \{\phi_p, e_{op}, \mathbf{u}_p, \mathbf{v}_p\}.\\
  \end{split}
\end{equation}
Note that, the matching and dual variables of particular nodes are always updated during the \textit{internalHungarian()} procedure before the child nodes are derived from them.

In every iteration of the \textit{branchSolution()} procedure, a node $\mathbf{n}_c = \{\phi_c, e_{ij}, \mathbf{u}_c, \mathbf{v}_c\}$ is dequeued from the \ac{lifo} queue, the bounding matrix $\mathbf{B}$ is updated with the newly restricted edge $e_{ij}$ and the corresponding dual variables and matching are updated using \textit{updateMatchingAndDuals()} (\autoref{alg:update_duals}).
After that, the \textit{internalHungarian()} is run to find a perfect matching for an updated assignment problem (line~\ref{algl:branch_solution_ih} of~\autoref{alg:branch_solution}).
Since the newly restricted edge $e_{ij}$ is always part of an initial parent matching, the cardinality of the matching $\phi$ after an update is always $\text{card}(\phi) = N-1$.
As mentioned earlier, given the matching of cardinality $N-1$ and the corresponding dual variables, the internal Hungarian algorithm terminates after a single step with either a perfect matching (if an augmenting path exists) or an incomplete matching.
This fact is important for keeping the computational complexity of the proposed algorithm low.

If a perfect matching is not found by the \textit{internalHungarian()} method, the solution to an assignment problem with a set of bounded edges given by $\mathbf{B}$ does not exist.
Consequently, it is easy to show that this situation cannot be improved by restricting additional edges. 
Hence, we cannot get a valid solution by expanding such a node and can proceed to the next iteration.
If the computed solution is a perfect matching, it has to be examined whether it is collision-free.
In case a pair of colliding edges is not found, the perfect matching is returned as a valid collision-free solution to the main loop of~\autoref{alg:overall_lbap}.
Otherwise, the node is expanded according to the expansion rule~\eqref{eq:expansion_rule}, inserted into the queue, and the algorithm proceeds to the next iteration (line~\ref{algl:branch_solution_expand} of~\autoref{alg:branch_solution}).
In case all branches of the tree were explored without finding a valid, complete solution, the procedure returns to the main loop of~\autoref{alg:overall_lbap}, where the value of the current threshold is updated (line~\ref{algl:update_threshold}), and the search for a solution continues.
In the main loop, the algorithm repeats the above-described steps until a valid, complete solution is found.
The valid solution is guaranteed to exist under the assumptions specified in~\autoref{sec:problem_definition}.

Since the introduced algorithm iteratively increases the threshold on bounded edges, the number of bounded edges decreases and the problem becomes less restricted.
In a worst-case scenario, the algorithm reaches a point where none of the edges are bounded, meaning that also none of the edges from solution $\phi_{LSAP}$ minimizing the sum of squared costs are bounded. 
Then, $\phi_{LSAP}$ is an output of the internal Hungarian algorithm.
Since the $\phi_{LSAP}$ solution is guaranteed to be collision-free with given assumptions, a valid collision-free matching is always found.

\begin{algorithm}
  \DontPrintSemicolon
  \caption{branchSolution($\mathbf{e}_c, \phi, \mathbf{M_d}, \mathbf{B}, \mathbf{u}, \mathbf{v}, \mathbb{S}, \mathbb{G}, \mathbf{S}_d, \mathbf{G}_d$)}
  \label{alg:branch_solution}
  \KwInput{colliding edges $\mathbf{e}_c$, matching $\phi$, matrix of squared distances $\mathbf{M_d}$, bounding matrix $\mathbf{B}$, row and column dual variables $\mathbf{u}, \mathbf{v}$, initial and goal configurations $\mathbb{S}$, $\mathbb{G}$, matrices with squared distances of initial and goal configurations $\mathbf{S}_d, \mathbf{G}_d$}
  \KwOutput{complete collision-free assignment $\phi_{new}$ if it exists, original assignment otherwise}
  \hrulealg
    $\mathbf{O}_l := \O$ \tcp*{Last in first out queue}
    Node $root := \{\phi, \mathbf{e}_c, \mathbf{u}, \mathbf{v}\}$\\
    $\mathbf{O}_l \leftarrow$ expand$(root, \phi, \mathbf{e}_c)$\\
    \While {$\mathbf{O}_l \neq \O$} {
    $\mathbf{n}_c :=$ dequeue$(\mathbf{O}_l)$\\
    updateRestrictedNodes$(\mathbf{M_d}, \mathbf{B}, \mathbf{n}_c)$\\
    updateMatchingAndDuals$(\mathbf{n}_c.\phi, \mathbf{M_d}, \mathbf{n}_c.e_{ij}, \mathbf{u}, \mathbf{v})$\\
    $\phi_{new} :=$ internalHungarian$(\mathbf{n}_c.\phi, \mathbf{M_d}, \mathbf{B}, \mathbf{u}, \mathbf{v})$ \label{algl:branch_solution_ih}\\
    \If {isComplete$(\phi_{new})$} {

      $\mathbf{e}_c :=$  getCollidingEdges$(\phi_{new},\mathbf{M_d}, \mathbf{S_d}, \mathbf{G_d}, \mathbb{S}, \mathbb{G}$)\\
    \If{$\mathbf{e}_c$ = $None$} {
      \Return $\phi_{new}$ \tcp*{solution found}
    } \Else {
      $\mathbf{O}_l \leftarrow $expand$(\mathbf{n}_c, \phi_{new}, \mathbf{e}_c)$} \label{algl:branch_solution_expand}
    }
    }
    \Return $\phi$ \tcp*{solution not found}
\end{algorithm}



\section{Minimum-makespan trajectory generation}\label{sec:trajectory_generation}

The algorithm designed for the solution of the \ac{lbap} introduced in~\autoref{sec:lbap_algorithm} guarantees to solve part of \ac{tofrp} by finding an assignment minimizing the length of the longest path with additional guarantees on the collision-free property of the constant-velocity trajectories.
In this section, we describe a second component of \ac{catora} that allows us to generate a set of trajectories connecting the pairs of assigned positions, while minimizing the makespan of the reshaping process and preserving the conditions on the collision-free property derived in~\autoref{sec:lbap_theoretical_analysis}.

\subsection{Minimum-time trajectory generation}
The time-optimal control of a model with single-dimension point-mass dynamics and constraints on maximum velocity results in a control policy of form
\begin{equation}\label{eq:time_optimal_control_policy}
  a^*(t) = \left\{
    \begin{array}{ccc}
      a_{max} & \text{if} & t \leq t_1,\\
      0 & \text{if} & t_1 < t \leq t_2,\\
      -a_{max} & \text{if} & t_2 < t \leq t_3,
    \end{array}\right.
\end{equation}
where $t_3 = t^*$ is the overall minimized time of trajectory following~\cite{GRAVELL2021104753}.
The control policy~\eqref{eq:time_optimal_control_policy} leads to trajectories that are described by equations
\begin{equation}\label{eq:optimal_control_kinematics}
  \begin{split}
    p_1 &= p_0 + v_0t_1 + \frac{1}{2} a_{max}t_1^2,\\
    v_1 &= v_0 + a_{max}t_1,\\
    p_2 &= p_1 + v_1(t_2 - t_1),\\
    v_2 &= v_1,\\
    p_3 &= p_2 + v_2(t_3 - t_2) - \frac{1}{2}a_{max}(t_3 - t_2)^2,\\
    v_3 &= v_2 - a_{max}(t_3 - t_2),
  \end{split}
\end{equation}
with velocities $v_i$ and positions $p_i, i \in \{0, 1, 2, 3\}$.
With the known initial and final conditions of $p_0, v_0, p_3, v_3$, the maximum acceleration $a_{max}$, and the assumption on reachability of the maximum velocity $v_1 = v_{max}$, the number of equations~\eqref{eq:optimal_control_kinematics} matches the number of unknown variables ($t_1, t_2, t_3, p_1, p_2, v_2$), and~\eqref{eq:optimal_control_kinematics} has a closed-form solution.
By the addition of an assumption that $v_0 = v_3 = 0$ and its consequence $t_1 = t_3 - t_2$,~\eqref{eq:optimal_control_kinematics} can be modified to
\begin{equation}\label{eq:optimal_control_kinematics_v_const}
  \begin{split}
    p_1 &= p_0 + \frac{1}{2} a_{max}t_1^2,\\
    v_1 &= a_{max}t_1,\\
    p_2 &= p_1 + v_1(t_2 - t_1),\\
    v_2 &= v_1,\\
    p_3 &= p_2 + v_2(t_3 - t_2) - \frac{1}{2}a_{max}(t_3 - t_2)^2,\\
    0 &= v_2 - a_{max}(t_3 - t_2),\\
    t_1 &= t_3 - t_2.
  \end{split}
\end{equation}
With an additional seventh equation, the modified set of equations~\eqref{eq:optimal_control_kinematics_v_const} allows for relaxing the condition $v_1 = v_{max}$ to $v_1 \leq v_{max}$, thus providing a single closed-form solution valid even when maximum velocity cannot be reached, and the optimal control policy reduces to bang-bang control~\cite{Penicka22UAVplanning}.



\subsection{Trajectories for minimum-time formation reshaping}
By applying time optimal control policy~\eqref{eq:time_optimal_control_policy}, the time necessary for following the trajectory is directly proportional to the length of the trajectory.
Thus, the makespan of the formation reshaping process is equal to the time $t_m$ of following the trajectory $T_m$ corresponding to the longest path $P_m$ obtained in the assignment $\phi$
\begin{equation}\label{eq:max_path_length}
  ||P_m|| = D_m = \max_{(i,j) \in \phi} ||\mathbf{s}_i - \mathbf{g}_j||.
\end{equation}
From the solution of~\eqref{eq:optimal_control_kinematics_v_const} for the longest path $P_m$, we obtain the duration $t_1^m, t_2^m$ and $t_3^m$ of acceleration, constant speed, and deceleration segments, respectively.
Considering the duration of particular segments $t_1, t_2, t_3$ to be constant and equal to $t_1^m, t_2^m, t_3^m$ for all trajectories, \eqref{eq:optimal_control_kinematics_v_const} can be applied for the generation of the rest of trajectories with defined $t_1, t_2, t_3$, but varying $|a_i| \leq a_{max}$ and $v_1 \leq v_{max}$.
Such an approach results in trajectories defined by parametrization:
  \begin{equation}\label{eq:min_time_parametrization}
    \hspace*{-0.563cm}\mathbf{x}_i(s) = \begin{cases}
      \mathbf{s}_i + \left(\frac{A_mt_3^2 }{2D_m}s^2\right) (\mathbf{g}_j - \mathbf{s}_i), & \hspace*{-1.8cm}\mathrm{if}\ 0 \leq s \leq \frac{t_1}{t_3} \\
      \mathbf{s}_i + \left(\frac{V_mt_3 }{D_m}s-\frac{V_m^2}{2A_mD_m} \right) (\mathbf{g}_j - \mathbf{s}_i), & \hspace*{-1.8cm}\mathrm{if}\ \frac{t_1}{t_3} < s \leq \frac{t_2}{t_3} \\
      \mathbf{s}_i + \left(\frac{A_mt_3^2}{2D_m}\left(2s - s^2\right)-\frac{V_m^4+A_m^2D_m^2}{2A_mD_mV_m^2}\right)(\mathbf{g}_j - \mathbf{s}_i),& \\
      & \hspace*{-1.8cm} \mathrm{if}\  \frac{t_2}{t_3} < s \leq 1 \\
    \end{cases}
  \end{equation}
where $V_m$ and $A_m$ stand for maximum applied velocity and maximum applied acceleration of $T_m$, respectively, and $s = \frac{t}{t_m}$ with $t$ being time elapsed from start of the trajectory. 
Trajectories generated according to parametrization~\eqref{eq:min_time_parametrization} are illustrated in~\autoref{fig:minimum_time_trajectories}.

\begin{figure}[htpb]
  \centering
  \begin{tikzpicture}[font=\normalsize]

  \pgfplotsset{compat=newest, 
    emphasize/.code args={#1:#2with#3}{
      \pgfplotsextra{
              \draw[color=#3, fill=#3] ({axis cs:#1,0.00} |- {axis description cs:0,0.006}) 
              rectangle ({axis cs:#2,0} |- {axis description cs:0,0.994});
      }
    }
  }

  \def\A{1.0}
  \def\ta{2.0}
  \def\tc{4.0}
  \def\td{6.0}
  \def\V{\A*\ta}
  \def\D{\A*\ta*\ta+\A*\ta*(\tc-\ta)}
  \def\Am{\A*0.5}
  \def\Vm{\Am*\ta}

  
    \begin{groupplot}[
      group style={
            group size=1 by 3,
            x descriptions at=edge bottom,
            vertical sep=5pt,
      },
      name=top,
      width=1.00\columnwidth,
      height=0.35\columnwidth,
      xmin=0.0, xmax=1.0,
      scaled x ticks = false,
      xtick=\empty,
      axis line style=transparent,
      ytick=\empty,
      xmin=0.0, xmax=\tc+\ta,
      ymin=-0.3, ymax=6.0,
      ]
  
      \nextgroupplot[
        height=0.35\columnwidth,
      ]
  
  \addplot[draw=color_red, line width=1.5pt, opacity=0.0, emphasize=0.0:\ta with color_green!10][domain=0:\ta]{\A*x};
  \addplot[draw=color_red, line width=1.5pt, opacity=0.0, emphasize=\ta:\tc with color_blue!0][domain=\ta:\tc]{\A*\ta};
  \addplot[draw=color_red, line width=1.5pt, opacity=0.0, emphasize=\tc:\td with color_red!10][domain=\tc:\td]{\A*\ta-\A*(x-\tc)};

    \nextgroupplot[
      ymin=-0.3, ymax=10.0,
    ]
  
    \addplot[draw=color_red, line width=1.5pt, opacity=0.0, emphasize=0.0:\ta with color_green!10][domain=0:\ta]{0.5*\A*x^2};
      \addplot[draw=color_red, line width=1.5pt, opacity=0.0, emphasize=\tc:\td with color_red!10][domain=\tc:\td]{0.5*\A*\ta^2+\V*(\tc-\ta) + \V*(x-\tc) - 0.5*\A*(x-\tc)^2};
  
    \nextgroupplot[
      ymin=-0.3, ymax=10.0,
    ]
  
    \addplot[draw=color_red, line width=1.5pt, opacity=0.0, emphasize=0.0:\ta with color_green!10][domain=0:\ta]{0.5*\A*x^2};
      \addplot[draw=color_red, line width=1.5pt][domain=\ta:\tc]{0.5*\A*\ta^2+\V*(x-\ta)};
      \addplot[draw=color_red, line width=1.5pt, opacity=0.0, emphasize=\tc:\td with color_red!10][domain=\tc:\td]{0.5*\A*\ta^2+\V*(\tc-\ta) + \V*(x-\tc) - 0.5*\A*(x-\tc)^2};
  
    \end{groupplot}
  

    
      \begin{groupplot}[
        group style={
            group size=1 by 3,
            x descriptions at=edge bottom,
            vertical sep=5pt,
        },
        name=top,
        width=1.00\columnwidth,
        height=0.35\columnwidth,
        grid=major,
        grid style={draw=black!12,line width=.1pt},
        xlabel={Time [s]},
        xtick={0, 2.0, 4.0, 6.0},
        xticklabels={0, $t_1$, $t_2$, $t_3$},
        scaled x ticks = false,
        xmin=0.0, xmax=\tc+\ta,
        ymin=-0.3, ymax=3.0,
        ]
    
        \nextgroupplot[scaled y ticks=false,
          ylabel={$a$ $\left[\text{m.s}^\text{-2}\right]$},
          ymin=-1.1, ymax=1.1,
          ytick={-1.0, -0.5, 0, 0.5, 1.0},
          yticklabels={-$A_M$, -$A_i$, 0, $A_i$, $A_M$},
          ylabel shift = -3.7 pt,
        ]
      
        \addplot[draw=color_red, line width=1.5pt][domain=0:\ta]{\A};
        \addplot[draw=color_red, line width=1.5pt][domain=\ta:\tc]{0.0};
        \addplot[draw=color_red, line width=1.5pt][domain=\tc:\td]{-\A};
      
        \addplot[draw=color_blue, line width=1.5pt][domain=0:\ta]{\Am};
        \addplot[draw=color_blue, line width=1.5pt, dash pattern = on 8pt off 4pt][domain=\ta:\tc]{0.0};
        \addplot[draw=color_blue, line width=1.5pt][domain=\tc:\td]{-\Am};

        \nextgroupplot[
          ylabel={$v$ $\left[\text{m.s}^\text{-1}\right]$},
          ymin=-0.2, ymax=2.2,
          ytick={0, 1.0, 2.0},
          yticklabels={0, $V_i$, $V_M$},
          ylabel shift = 0.6 pt,
        ]
    
        \addplot[draw=color_red, line width=1.5pt][domain=0:\ta]{\A*x};
        \addplot[draw=color_red, line width=1.5pt][domain=\ta:\tc]{\A*\ta};
        \addplot[draw=color_red, line width=1.5pt][domain=\tc:\td]{\A*\ta-\A*(x-\tc)};
        
        \addplot[draw=color_blue, line width=1.5pt][domain=0:\ta]{\Am*x};
        \addplot[draw=color_blue, line width=1.5pt][domain=\ta:\tc]{\Am*\ta};
        \addplot[draw=color_blue, line width=1.5pt][domain=\tc:\td]{\Am*\ta-\Am*(x-\tc)};
    
        \nextgroupplot[
          ylabel={$p$ [m]},
          ymin=-0.3, ymax=10.0,
          grid style={draw=black!12,line width=.1pt},
          ytick={0, 4.0, 8.0},
          yticklabels={0, $D_i$, $D_M$},
        ]
      
        \addplot[draw=color_red, line width=1.5pt][domain=0:\ta]{0.5*\A*x^2};
          \addplot[draw=color_red, line width=1.5pt][domain=\ta:\tc]{0.5*\A*\ta^2+\V*(x-\ta)};
          \addplot[draw=color_red, line width=1.5pt][domain=\tc:\td]{0.5*\A*\ta^2+\V*(\tc-\ta) + \V*(x-\tc) - 0.5*\A*(x-\tc)^2};
      
        \addplot[draw=color_blue, line width=1.5pt][domain=0:\ta]{0.5*\Am*x^2};
          \addplot[draw=color_blue, line width=1.5pt][domain=\ta:\tc]{0.5*\Am*\ta^2+\Vm*(x-\ta)};
          \addplot[draw=color_blue, line width=1.5pt][domain=\tc:\td]{0.5*\Am*\ta^2+\Vm*(\tc-\ta) + \Vm*(x-\tc) - 0.5*\Am*(x-\tc)^2};
    
      \end{groupplot}
    

  \end{tikzpicture}
  \vspace*{-0.5cm}
  \caption{Acceleration and velocity profiles and position progress along the path for trajectory $T_j$ with length $D_j = D_m$ generated using time-optimal control policy~\eqref{eq:time_optimal_control_policy} (red), and for trajectory $T_i$ with length $D_i \leq D_m$ generated according to parametrization~\eqref{eq:min_time_parametrization} (blue). The background color distinguishes the acceleration (green), constant speed (white), and deceleration (red) segments of the trajectories.}
  \label{fig:minimum_time_trajectories}
\end{figure}

Let us define the progress ratio at time $t > 0$ for a pair of trajectories $T_i, T_j$ of lengths $D_i > 0$, $D_j > 0$, as
\begin{equation}
  PR(T_i, T_j, t) = \frac{p_i(t)}{p_j(t)},
\end{equation}
where $p_i(t), p_j(t)$ are the distances traveled along the trajectories $T_i, T_j$ till time $t$.

\begin{theoremwobrackets}\label{th:progress_ratios}
  If the progress ratios $PR_a = PR(T_{i,a},T_{j,a})$ and $PR_b = PR(T_{i,b},T_{j,b})$ of two parametrizations $a$ and $b$ of a single pair of paths are constant for all $t$ and $PR_a = PR_b$, the following relation holds:
  \begin{equation}
    \min_t||T_{i,a}(t) - T_{j,a}(t)|| = \min_t||T_{i,b}(t) - T_{j,b}(t)||.
  \end{equation}
\end{theoremwobrackets}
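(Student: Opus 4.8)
The plan is to collapse the minimum mutual distance onto a minimization of a single fixed scalar function, exploiting the fact that the underlying paths are straight line segments (assumption (A3)) and that a constant progress ratio forces a rigid linear coupling between the normalized progresses of the two robots. The decisive observation is that the time dependence enters the separation vector only through one scalar, and that equal progress ratios make this scalar's coupling identical across the two parametrizations.

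First I would introduce the normalized progress $\lambda_i(t) = p_i(t)/D_i \in [0,1]$ along the straight path of robot $i$, and analogously $\lambda_j(t)$, where $D_i = ||\mathbf{g}_i - \mathbf{s}_i||$. Because the path is a line segment, the instantaneous position is $T_i(t) = \mathbf{s}_i + \lambda_i(t)(\mathbf{g}_i - \mathbf{s}_i)$, so $\lambda_i$ alone fixes the position. Next I would translate the hypothesis that $PR(T_i,T_j,t) = p_i(t)/p_j(t)$ equals a constant $c$ into a relation between the normalized progresses, namely $\lambda_i(t) = (cD_j/D_i)\,\lambda_j(t) =: \rho\,\lambda_j(t)$, so that the two progresses are tied by a single constant $\rho$ depending only on $c$ and the fixed lengths $D_i, D_j$.

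The key step is to write the separation as an affine function of the single scalar $u := \lambda_j(t)$:
\[
T_i(t) - T_j(t) = (\mathbf{s}_i - \mathbf{s}_j) + u\left[\rho(\mathbf{g}_i - \mathbf{s}_i) - (\mathbf{g}_j - \mathbf{s}_j)\right] =: \mathbf{h}(u),
\]
so that $||T_i(t) - T_j(t)|| = ||\mathbf{h}(u)||$, where $\mathbf{h}$ is a fixed vector-valued affine map determined entirely by the four endpoints and by $\rho$. Since the hypothesis $PR_a = PR_b = c$ forces the same $\rho$ in both parametrizations, the two parametrizations share the identical map $\mathbf{h}(\cdot)$; the only thing that can differ between them is the set of values of $u$ swept as $t$ ranges over the trajectory.

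Finally I would argue that both parametrizations sweep the same interval of $u$. Each robot starts at its initial configuration and traverses its entire segment to the goal, and $p_j(t)$ is nondecreasing (it is monotone for both the constant-velocity and the time-optimal bang--bang profiles), so $u = \lambda_j(t)$ increases monotonically from $0$ to $1$ in each case and the visited set is $[0,1]$ for both. Hence
\[
\min_t ||T_{i,a}(t) - T_{j,a}(t)|| = \min_{u \in [0,1]} ||\mathbf{h}(u)|| = \min_t ||T_{i,b}(t) - T_{j,b}(t)||,
\]
which is the claim. The main obstacle I anticipate is precisely the bookkeeping that justifies equality of the swept $u$-interval — confirming monotonicity of $p_j(t)$ and that both profiles cover the complete segment — since everything else follows immediately once the separation is seen to depend on time only through the single coupled scalar $u$.
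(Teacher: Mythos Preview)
Your proposal is correct and follows essentially the same approach as the paper: both arguments observe that a constant, common progress ratio forces the same correspondence between positions on the two paths under either parametrization, so the set of distance values visited is identical and hence so is the minimum. Your version is simply more explicit—writing the separation as a fixed affine map $\mathbf{h}(u)$ of the single scalar $u=\lambda_j(t)$ and verifying that both parametrizations sweep $u$ over $[0,1]$—while the paper compresses this into the statement that $p_{i,b}(s_b)=p_{i,a}(s_a)\Leftrightarrow p_{j,a}(s_a)=p_{j,b}(s_b)$, but the underlying idea is the same.
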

\begin{proof}
  The assumption $PR_a = PR_b = const.$ in~\autoref{th:progress_ratios} can be reformulated to equation
  \begin{equation}\label{eq:pr_2}
    \frac{p_{i,a}(s_a t_a)}{p_{j,a}(s_a t_a)} = \frac{p_{i,b}(s_b t_b)}{p_{j,b}(s_b t_b)},\,\, \forall s_a, s_b \in (0, 1],
  \end{equation}
  where $t_a$ and $t_b$ are the duration of trajectories $T_{i,a},T_{j,a}$ and $T_{i,b},T_{j,b}$, respectively, and $s_a,s_b$ are independent variables.
  Equation~\eqref{eq:pr_2} can be simplified to
  \begin{equation}\label{eq:pr_proof}
    p_{j,b}(s_b) = p_{j,a}(s_a) \frac{p_{i,a}(s_a)}{p_{i,b}(s_b)}.
  \end{equation}
  From~\eqref{eq:pr_proof}, it follows that for an arbitrary $p_{i,a}(s_a)$ with associated point $p_{j,a}(s_a)$, it holds that
  \begin{equation}
    p_{i,b}(s_b) = p_{i,a}(s_a) \Leftrightarrow p_{j,a}(s_a) = p_{j,b}(s_b).
  \end{equation}
  Thus, the set of corresponding points on particular trajectories is equal for both pairs of trajectories, and their minimum distances are equal.
\end{proof}

By applying~\autoref{th:progress_ratios}, it can be shown that the progress ratio of trajectories with a constant velocity~\eqref{eq:parametrization_const_velocity} and minimum-time trajectories defined by~\eqref{eq:min_time_parametrization} are equal and constant for all parts of parametrization~\eqref{eq:min_time_parametrization}.
It follows that the minimum-time trajectories defined by~\eqref{eq:min_time_parametrization} have the same properties that were derived for trajectories with constant velocity.
Thus, the pair of minimum-time trajectories are guaranteed to be collision-free if $\delta \geq \frac{\sqrt{1-M^2}}{1-M^2} \Delta$ and the \ac{catora} consisting of application of~\autoref{alg:overall_lbap} together with the trajectory generation approach~\eqref{eq:min_time_parametrization} provides a set of minimum-time collision-free trajectories as a solution to \ac{tofrp}.

\begin{remark}
  The presented methodology and \autoref{th:progress_ratios} are not limited to time-optimal control of a simplified single-dimension point-mass dynamics as presented in this section.
  The set of trajectories with constant progress ratios (thus collision-free for assignment based on \autoref{alg:overall_lbap}) can be generated for an arbitrary single trajectory generation approach with arbitrary complex motion model, including, e.g., focus on minimum-energy trajectory generation.
\end{remark}


\section{Theoretical and statistical analysis}\label{sec:theoretical_results}

In this section, we provide proof of the optimality of the proposed algorithm and state and prove several theorems that highlight the significant benefits of the proposed approach, and support the adequacy of the stated assumptions.

\subsection{The independence of robot-to-goal assignment on trajectory generation approach}\label{sec:proof_of_solution_independence}

The proposed decoupled solution to \ac{tofrp} is optimal under an assumption that the robot-to-goal assignment problem and minimum-time trajectory generation are separable~\eqref{eq:assumption_separation}.
If~\eqref{eq:assumption_separation} holds for all assignments and trajectories generated as described in~\autoref{sec:trajectory_generation}, then the duration of trajectories can be replaced by Euclidean distances in the computation of robot-to-goal assignment without compromising the optimality of the solution.

The proof of~\eqref{eq:assumption_separation} follows directly from~\eqref{eq:max_path_length}, \eqref{eq:min_time_parametrization} and properties of applied control policy~\eqref{eq:time_optimal_control_policy}.
Based on~\eqref{eq:max_path_length}, \eqref{eq:min_time_parametrization}, the duration of all trajectories for a single assignment depends only on the length of the longest path in the assignment.
Given the assumptions on stationary initial and goal configurations (A2), the generated trajectories have identical initial and final velocities. 
In such a case, the duration of trajectories generated using time-optimal control policy~\eqref{eq:time_optimal_control_policy} is a monotonic, increasing function of path length, ensuring the validity of assumption~\eqref{eq:assumption_separation}. 
 
\subsection{Optimality of the robot-to-goal assignment of \ac{catora}}\label{sec:completeness_and_optimality}

\begin{proof}[\unskip\nopunct]

The proof of the optimality of the proposed algorithm is based on the following observations: 
\begin{enumerate}[leftmargin=0.85cm]
  \item [(B1)] The Hungarian algorithm, and thus also the \textit{internalHungarian()} procedure, is optimal (proved in~\cite{munkresHungarianAlgorithm}).
  \item [(B2)] The dynamic variant of the Hungarian algorithm is optimal (proved in~\cite{MillsTettey2007TheDH}).
  \item [(B3)] The bounding of elements $m_{ij} > t_c$ of a cost matrix $\mathbf{M}_d$ by bounding matrix $\mathbf{B}$ is equivalent to substituting constant $Q = \sum_{m_{ij} \in \mathbf{M}_d}[m_{ij} \leq t_c] m_{ij}$ for all elements $m_{ij} > t_c$.
  \item [(B4)] If an element $m_{ij}$ used in updating dual variables~\eqref{eq:theta} is bounded, all unbounded elements of $M_d$ are already admissible, and the cardinality of the current matching cannot be increased without using bounded elements (comes from the properties of the Hungarian algorithm).
  \item [(B5)] By the addition of $k$ edges, the cardinality of the maximum matching $\phi$ can be increased by at most $k$. Thus, the matching $\phi$ cannot be completed without the addition of at least $k = N-\text{card}(\phi)$ new edges (comes from the properties of the Hungarian algorithm).
  \item [(B6)] The robot-to-goal assignment algorithm checks all complete solutions for collisions using a combination of an efficient analytical method and an exact analytical method (both derived in \autoref{sec:lbap_theoretical_analysis}). This approach ensures that any complete solution provided by the algorithm is collision-free with no false positives.
\end{enumerate}

  The proof of completeness follows directly from (B1) and (B2).  
By omitting procedures that do not change any variables in the main loop of~\autoref{alg:overall_lbap}, the algorithm reduces to a dynamic variant of the Hungarian algorithm, solving the assignment problem with iterative change of costs caused by updates of threshold $t_c$.    
Based on (B3), bounded elements only influence the update of duals once they are smaller than $t_c$.
The threshold is updated until a valid collision-free solution is found, eventually ending with $b_{ij} = 0\,\,\forall (i,j) \in \{1,\dots,N\}^2$.
If no elements of $M_d$ are bounded, the solution exists according to assumption (A5).
Then, in compliance with (B2), the solution is found, proving the algorithm's completeness. 

The proof of optimality is built using the fact that bounded values cannot be part of the solution, and thus the optimal value of the solution is bounded by $t_c$.
  Therefore, given guarantees of exact collision checking (B6), it is sufficient to show that the threshold $t_c$ is increased only if no valid solution exists with the current threshold.
  In~\autoref{alg:overall_lbap}, the initial threshold~\eqref{eq:threshold_lower_bound} is set to the maximum of the minimum elements across particular rows and columns. Since the perfect matching must include at least one element from each row and column, the initial lower bound does not exceed the optimal value.   
  Further, we branch the proof into two cases: (i) \autoref{alg:overall_lbap} never detects a colliding edge or (ii) \autoref{alg:overall_lbap} detects a colliding edge.

  In case (i), the algorithm alternates between applying the \textit{internalHungarian()} procedure and updating the threshold $t_c$.
Based on (B1) and (B4), the \textit{internalHungarian()} procedure always finds a maximum matching with respect to the current bounded matrix.
  If a perfect matching is found with the current threshold, then the optimal solution has been achieved; otherwise, $t_c$ is updated.
According to (B5), increasing $t_c$ to the lowest value that decreases the number of bounded elements in $\mathbf{B}$ by $k = N - \text{card}(\phi)$, cannot increase the threshold above the value of the optimal solution.
  Then, for case (i), the procedure mirrors the dynamic Hungarian algorithm~\cite{MillsTettey2007TheDH} with the costs changed by updates of $t_c$. 
  Therefore, the optimality guarantees for case (i) follow from (B2) and the validity of threshold updates given by (B5). 

  In case (ii), we have to prove that the \textit{branchSolution()} procedure is complete.
  The constraint on mutually colliding edges restricts edges $e_r, e_s$ so that only one can be included in the solution.
The \textit{branchSolution()} procedure exploits this constraint by creating a binary search tree where each branch is derived from a parent node by restricting exactly one edge from the colliding pair.
This effectively splits the original problem into two instances: one where solutions exclude $e_r$, and the second where solutions exclude $e_s$, ensuring no valid solution is missed.
  During the search for the solution, each node represented by the instance of an assignment problem is evaluated using the \textit{internalHungarian()} and potentially \textit{getCollidingEdges()} procedure.
  For each node, the evaluation can yield three outcomes depending on corresponding set of restricted edges: (i) the found matching is not perfect; (ii) the found matching is perfect, but contains colliding edges; or (iii) the found matching is perfect and collision-free.  
  If a perfect matching is not found, it is guaranteed not to exist ((B1), (B2)), and thus this branch of the solution does not have to be explored further since the restriction of an additional edge cannot lead to an increase in maximum cardinality.
If a perfect matching is found and it contains a pair of colliding edges, the node is split into two and further explored.
If the found matching is perfect and does not contain any colliding edge, it is bounded by $t_c$, and is thus optimal with respect to~\eqref{eq:lbap}, \eqref{eq:lbap_with_mutual_constraints}. 

  Since we have proven the optimality of the algorithm for the solution of \ac{lbap} with mutual collision constraints, along with the independence of robot-to-goal assignment on minimum-time trajectory generation approach, it can be concluded that the \ac{catora} is an optimal algorithm for the solution of \ac{tofrp}.

\end{proof}

\subsection{Comparison of \ac{lbap} and \ac{lsap} in terms of maximum path length}

The superior performance of the \ac{lbap}-based approach to robot-to-goal assignment in terms of the length of the longest path is evident from the \ac{lsap} and \ac{lbap} problem formulation.
With \autoref{th:bound_max_len} introduced and proven in this section, we provide an insight into the significance of this phenomenon, and thus also the benefit of solving robot-to-goal assignment as \ac{lbap} instead of \ac{lsap}. 
The theorem shows that the \ac{lsap} solution can produce an up to $1.7$-times longer longest path compared to the \ac{lbap} solution already for small instances of 3 robots.            
This ratio further grows with the squared root of a number of robots, reaching a ratio of 10 for instances with 100 robots.      

\begin{theoremwobrackets}\label{th:bound_max_len}
  The upper bound on the ratio between the maximum length of the path in \ac{lsap} assignment $\phi_s$ and \ac{lbap} assignment $\phi_b$ is $\sqrt{N}$, where $N$ is the number of goals.
  The lower bound on this ratio equals 1.
  Thus, the following equation holds
  \begin{equation}
    1 \leq \frac{\max_{(i, j) \in \phi_s} ||\mathbf{s}_i - \mathbf{g}_j||}{\max_{(i, j) \in \phi_b} ||\mathbf{s}_i - \mathbf{g}_j||} \leq \sqrt{N}.
  \end{equation}
\end{theoremwobrackets}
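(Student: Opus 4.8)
The plan is to prove the two bounds separately: the lower bound follows immediately from the defining property of the LBAP objective, while the upper bound comes from comparing the sum-of-squares functional optimized by LSAP against the bottleneck functional optimized by LBAP.

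For the lower bound I would argue that, by construction, $\phi_b$ solves the LBAP~\eqref{eq:lbap}, i.e.\ it minimizes the bottleneck $\max_{(i,j)\in\phi}\|\mathbf{s}_i-\mathbf{g}_j\|$ over the set $\Phi$ of all assignments. Since the LSAP-optimal assignment $\phi_s$ is itself a feasible element of $\Phi$, its bottleneck cannot fall below the LBAP optimum, so $\max_{(i,j)\in\phi_s}\|\mathbf{s}_i-\mathbf{g}_j\|\ge\max_{(i,j)\in\phi_b}\|\mathbf{s}_i-\mathbf{g}_j\|$, which yields the ratio lower bound of $1$ (with equality exactly when the LSAP solution happens to also attain the minimal bottleneck).

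For the upper bound, write $L_b=\max_{(i,j)\in\phi_b}\|\mathbf{s}_i-\mathbf{g}_j\|$. The key observation is that $\phi_s$ minimizes the sum of squared edge costs (recall LSAP uses costs $\|\mathbf{s}_i-\mathbf{g}_j\|^2$). Because $\phi_b$ is a feasible competitor in the LSAP, optimality of $\phi_s$ gives
\[
\sum_{(i,j)\in\phi_s}\|\mathbf{s}_i-\mathbf{g}_j\|^2 \;\le\; \sum_{(i,j)\in\phi_b}\|\mathbf{s}_i-\mathbf{g}_j\|^2 \;\le\; N\,L_b^2,
\]
where the last inequality uses that each of the $N$ terms of $\phi_b$ is at most $L_b^2$. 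Finally, since all terms are nonnegative, the largest term is dominated by the whole sum, so
\[
\Big(\max_{(i,j)\in\phi_s}\|\mathbf{s}_i-\mathbf{g}_j\|\Big)^2 \;\le\; \sum_{(i,j)\in\phi_s}\|\mathbf{s}_i-\mathbf{g}_j\|^2 \;\le\; N\,L_b^2,
\]
and taking square roots yields the claimed $\sqrt{N}$ bound.

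The only genuine idea here is recognizing that $\phi_s$ and $\phi_b$ optimize different functionals and chaining them through the ``max is at most the sum, and the $\phi_b$ sum is at most $N L_b^2$'' argument; the rest is routine. I do not expect a serious obstacle in establishing the bounds themselves. The more delicate point, which the stated theorem does not require, is tightness: one can arrange $N$ starts and goals so that the LSAP replaces a single long bottleneck edge by many short edges, driving the ratio toward $\sqrt{N}$ (consistent with the $\sqrt{3}\approx1.7$ figure quoted for $N=3$), but verifying the extremality of such a construction is the part I would treat separately and only if an exact tightness claim were needed.
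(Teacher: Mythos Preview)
Your proof is correct and follows essentially the same route as the paper: both use LBAP optimality for the lower bound and, for the upper bound, chain the LSAP optimality inequality $\sum_{(i,j)\in\phi_s}\|\mathbf{s}_i-\mathbf{g}_j\|^2\le\sum_{(i,j)\in\phi_b}\|\mathbf{s}_i-\mathbf{g}_j\|^2$ with the bounds $(\max)^2\le\sum$ on the left and $\sum\le N(\max)^2$ on the right. The only addition in the paper is an explicit construction (a polygonal chain with $N$ equal-length edges and one edge of length $\sqrt{N}K$) showing the $\sqrt{N}$ bound is achievable, which you correctly flagged as a separate tightness question.
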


\begin{proof}
  The proof of the first inequality directly follows from the optimization criterion and properties of LBAP~\eqref{eq:lbap}.
  The \ac{lbap} solution directly minimizes the maximum length of the path in the assignment, hence $\max_{(i, j) \in \phi_b} ||\mathbf{s}_i - \mathbf{g}_j|| \leq \max_{(i, j) \in \phi_a} ||\mathbf{s}_i - \mathbf{g}_j||$ holds for optimal solution  to \ac{lbap} represented by assignment $\phi_b$ and any valid solution optimizing arbitrary criteria (including \ac{lsap}) represented by assignment $\phi_a$, 
  thus, $1 \leq \frac{\max_{(i, j) \in \phi_s} ||\mathbf{s}_i - \mathbf{g}_j||}{\max_{(i, j) \in \phi_b} ||\mathbf{s}_i - \mathbf{g}_j||}$.
  The lower bound is achievable since if no edge can be removed from the \ac{lsap} solution without rendering the problem unfeasible, the solutions of \ac{lsap} and \ac{lbap} coincide and thus $\max_{(i, j) \in \phi_b} ||\mathbf{s}_i - \mathbf{g}_j|| = \max_{(i, j) \in \phi_s} ||\mathbf{s}_i - \mathbf{g}_j||$.

  The second inequality can be proved by finding a solution to the optimization problem
    \begin{align}
      \underset{\mathbb{S}, \mathbb{G}}{\mathrm{maximize}}\, &
    \frac{\max_{(i, j) \in \phi_s(\mathbb{S}, \mathbb{G})} ||\mathbf{s}_i - \mathbf{g}_j||}{\max_{(i, j) \in \phi_b(\mathbb{S}, \mathbb{G})} ||\mathbf{s}_i - \mathbf{g}_j||}\label{eq:max_ratio_opt_problem}\\
      \text{subject to} &\sum_{(i, j) \in \phi_s} ||\mathbf{s}_i - \mathbf{g}_j||^2 \leq \sum_{(i, j) \in \phi_b} ||\mathbf{s}_i - \mathbf{g}_j||^2,\label{eq:max_ratio_constraint}
    \end{align}
  where~\eqref{eq:max_ratio_constraint} is a necessary condition for a validity of \ac{lsap} solution coming from its formulation~\eqref{eq:lsap}.
  Given the independence of the expressions in the numerator and denominator, the maximum of~\eqref{eq:max_ratio_opt_problem} can be found by independent maximization of numerator and minimization of denominator while considering constraint~\eqref{eq:max_ratio_constraint}.
The constraint~\eqref{eq:max_ratio_constraint} can be simplified to its least constrained form
  \begin{equation}\label{eq:max_ratio_constraint_simplified}
    \left(\max_{(i, j) \in \phi_s} ||\mathbf{s}_i - \mathbf{g}_j||\right)^2 \leq N\left(\max_{(i, j) \in \phi_b} ||\mathbf{s}_i - \mathbf{g}_j||\right)^2,
  \end{equation}
  using following observations:
  \begin{enumerate}
    \item The paths shorter than $\max_{(i, j) \in \phi_s} ||\mathbf{s}_i - \mathbf{g}_j||$ do not influence the value of numerator, but increase the value of the left side of~\eqref{eq:max_ratio_constraint}. In the extreme case, this leads to a set of paths with zero length except for a single path in the set.   
    \item The paths shorter than $\max_{(i, j) \in \phi_b} ||\mathbf{s}_i - \mathbf{g}_j||$ do not influence the value of the denominator, but increase the value of the right side of~\eqref{eq:max_ratio_constraint}. In the extreme case, this leads to a set of paths with equal lengths.
  \end{enumerate}
  The least constrained form~\eqref{eq:max_ratio_constraint_simplified}
  can be further reformulated to
  \begin{equation}\label{eq:max_ratio_constraint_simplified_two}
    \frac{\max_{(i, j) \in \phi_s} ||\mathbf{s}_i - \mathbf{g}_j||}{\max_{(i, j) \in \phi_b} ||\mathbf{s}_i - \mathbf{g}_j||} \leq \sqrt{N},
  \end{equation}
  which directly forms the upper bound on the examined quantity.
  The achievability of the upper bound can be proved by construction. The representative example for which the equality in~\eqref{eq:max_ratio_constraint_simplified_two} for arbitrary $N$ holds is formed by a scenario where $\mathbb{S} = \{\mathbf{v}_0, \dots, \mathbf{v}_{N}\} \subset \mathbb{V}$, $\mathbb{G} = \{\mathbf{v}_1, \dots, \mathbf{v}_{N+1}\} \subset \mathbb{V}$ with $\mathbb{V} = \{\mathbf{v}_0, \dots, \mathbf{v}_{N+1}\}$ being set of vertices of a closed polygonal chain for which holds:
  \begin{equation} 
    \begin{split}
      ||\mathbf{v}_{i-1} - \mathbf{v}_i|| &= K,\,\, K \in \mathbb{R}^+,\,\, \forall i \in \{1, \dots, N+1\}, \\
      ||\mathbf{v}_{N+1} - \mathbf{v}_0|| &= \sqrt{N}K, \\
      ||\mathbf{v}_{i} - \mathbf{v}_j|| &\geq K\,\, \forall {i, j} \in \{1, \dots, N+1\}^2,\,\, i \neq j,\\
    \end{split}
  \end{equation}
  Given an environment with unconstrained dimensions, the introduced scenario can be constructed for arbitrary $N$.
\end{proof}

\subsection{Study on suboptimality of the solution to \ac{tofrp} neglecting assumption on straight paths}

The optimality of the \ac{catora} for the solution of \ac{tofrp} is proved in \autoref{sec:completeness_and_optimality}.
However, the assumption (A3) discriminates the use of collision resolution techniques, such as time delays and geometric modifications of paths~\cite{GRAVELL2021104753, honig2018TrajectoryPlanningQuadrotor}.
Although the use of such techniques mostly leads to a significant increase in the computational complexity of the algorithm, they can resolve some collisions that are unsolvable by the proposed algorithm without increasing the threshold $t_c$.   
Thus, neglecting the assumption (A3) can change the optimum value of \ac{tofrp}.

We compare the achieved optimum value of the proposed algorithm with a theoretical lower bound of \ac{tofrp} (see \autoref{theorem:lb_tofrep}) to analyze the gap between the optimum value of the solution while both considering and not considering assumption (A3).
The results presented in~\autoref{fig:lower_bound_comparison} show that the \ac{catora} yields an optimal solution equal to the theoretical lower bound (neglecting assumption (A3)) in over $95\%$ of instances in dense environments with an average suboptimality $s_{avg} = 1.0008$ and maximum suboptimality $s_{max} = 1.16$. 

\begin{theoremwobrackets}[]\label{theorem:lb_tofrep} 
  The lower bound on the solution of \ac{tofrp} without limitations imposed by assumption (A3) is given by the duration of the minimum-time trajectory that corresponds to the longest path in the robot-to-goal assignment, as obtained by the solution of \ac{lbap} without considering mutual collisions.
\end{theoremwobrackets}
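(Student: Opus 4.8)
The plan is to prove the lower bound by decoupling the two ingredients of \ac{tofrp}, the assignment and the trajectory following, and arguing that neither can be beaten below the stated value even when arbitrary (non-straight) paths are permitted. First I would observe that the \ac{lbap} solved without mutual collision constraints yields, by its definition~\eqref{eq:lbap}, the smallest achievable value of the maximum path length $\max_{(i,j)\in\phi}\|\mathbf{s}_i-\mathbf{g}_j\|$ over the set $\Phi$ of all assignments; denote this optimal value by $D_{lb}$. Consequently, every feasible solution of \ac{tofrp} (with or without assumption (A3)) necessarily uses some assignment $\phi\in\Phi$ whose longest matched pair satisfies $\max_{(i,j)\in\phi}\|\mathbf{s}_i-\mathbf{g}_j\|\geq D_{lb}$. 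In other words, at least one robot is forced to connect a start--goal pair separated by Euclidean distance at least $D_{lb}$, and this holds irrespective of the trajectory class $\mathcal{T}$ actually employed.

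The crux is then to show that, regardless of path shape, connecting such a pair from rest to rest cannot take less time than the minimum-time straight-line trajectory for that separation. I would establish this by a projection argument. Fix the start--goal pair $(\mathbf{s},\mathbf{g})$ realising the largest separation $D'=\|\mathbf{s}-\mathbf{g}\|\geq D_{lb}$ in $\phi$, set $\hat{\mathbf{u}}=(\mathbf{g}-\mathbf{s})/D'$, and for any admissible trajectory $\mathbf{p}(t)$, $t\in[0,T]$, connecting them consider the scalar projection $\pi(t)=\hat{\mathbf{u}}^T(\mathbf{p}(t)-\mathbf{s})$. Because $|\dot{\pi}(t)|\leq\|\dot{\mathbf{p}}(t)\|\leq v_{max}$ and $|\ddot{\pi}(t)|\leq\|\ddot{\mathbf{p}}(t)\|\leq a_{max}$, the projected motion is itself an admissible single-dimension point-mass motion that advances from $0$ to $\pi(T)=D'$ over the travel time $T$. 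By assumption (A2) the robot is stationary at both endpoints, so $\dot{\pi}$ vanishes at $t=0$ and $t=T$, making $\pi$ a rest-to-rest displacement of net length $D'$ under the very velocity and acceleration bounds of~\eqref{eq:time_optimal_control_policy}. The time-optimality of the bang-bang policy~\eqref{eq:time_optimal_control_policy}--\eqref{eq:optimal_control_kinematics_v_const} for this scalar problem, together with the monotonic dependence of its duration on the covered distance, then forces $T$ to be at least the duration of the minimum-time trajectory for distance $D'$, which by monotonicity is at least $t_{lb}$, the duration for $D_{lb}$.

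Combining the two steps completes the argument. Since the makespan~\eqref{eq:makespan} of any feasible solution is the maximum of the individual travel times and at least one of those times is bounded below by $t_{lb}$, the optimum value of \ac{tofrp} without assumption (A3) cannot fall below $t_{lb}$, which is precisely the duration of the minimum-time trajectory corresponding to the longest path of the collision-unconstrained \ac{lbap} assignment. The monotonicity of this duration in the traversed distance, already invoked in~\autoref{sec:proof_of_solution_independence}, is what makes $D_{lb}$ the relevant distance and thus renders $t_{lb}$ the tightest bound obtainable from this reasoning.

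I expect the main obstacle to be the projection step: one must verify rigorously that the scalar projection of a three-dimensional admissible trajectory is itself admissible for the one-dimensional dynamics, which hinges on the velocity and acceleration limits bounding the Euclidean norms (or at least dominating their projections onto $\hat{\mathbf{u}}$), and on the rest-to-rest boundary conditions supplied by (A2). Some care is also needed because the projected path may be non-monotonic; any backtracking only lengthens the covered distance and hence the time, so the net-displacement bound still applies, and the projected endpoint lies exactly a distance $\|\mathbf{g}-\mathbf{s}\|\geq D_{lb}$ from the projected start.
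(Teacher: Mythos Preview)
Your proposal is correct and follows the same overall strategy as the paper: drop the collision constraints so that the collision-free \ac{lbap} value $D_{lb}$ lower-bounds the longest start--goal separation in any assignment, then invoke the rest-to-rest minimum-time trajectory on that distance to bound the makespan. The paper's own proof is a brief sketch that simply asserts collision-resolution techniques can only keep or increase the optimum, whereas your projection argument onto $\hat{\mathbf{u}}$ rigorously fills in why even an arbitrary curved trajectory cannot undercut the straight bang-bang duration---a step the paper leaves implicit.
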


\begin{proof}
  Considering an assumption on static initial and goal configuration, the optimum value of \ac{tofrp} equals the duration of minimum-time trajectory along the longest path among all reshaping paths. 
Thus, minimizing the length of the longest path among all reshaping paths optimizes the original problem. 
By applying any technique to resolve the collisions among trajectories, the optimum value remains the same or increases.
  Thus, the solution of \ac{lbap} together with the minimum-time trajectory generation forms a lower bound to \ac{tofrp}. 
\end{proof}

\begin{figure}[htpb]
  \centering
  \definecolor{color_blue}{rgb}{0.22, 0.2, 0.502}
\definecolor{color_red}{rgb}{0.737,0.165,0}
\definecolor{color_green}{rgb}{0, .522, .243}

  \begin{tikzpicture}[font=\normalsize]

    \pgfplotstableread[col sep=comma]{./fig/graphs/tikz_data/suboptimality/length_suboptimality.txt}{\tablea}

  \begin{axis}[
    axis y line*=right,
    width=0.9\columnwidth,
    height=0.50\columnwidth,
    ytick={1.00, 1.05, 1.10, 1.15, 1.20},
    yticklabels={1.00, 1.05, 1.10, 1.15, 1.20},
    xmin=5, xmax=178,
    y axis line style = {color_red},
    y tick label style= {color_red},
    y tick style= {color_red},
    ylabel style = {color_red},
    ymin=0.98, ymax=1.22,
  ]
      \addplot[name path=O, smooth, color=color_red, line width=1.0pt, opacity=1.0, mark options={color=color_red, scale=0.7}] table[y expr={\thisrow{max}}, x=n_robots] {\tablea};

    \end{axis}

    \begin{axis}[
      width=0.9\columnwidth,
      height=0.50\columnwidth,
      grid=major,
      grid style={draw=gray!12,line width=.1pt},
      xlabel={Number of robots [-]},
      ylabel={Suboptimality [-]},
      xmin=5, xmax=178,
      ytick={1.0, 1.005, 1.010, 1.015, 1.020},
      yticklabels={1.000, 1.005, 1.010, 1.015, 1.020},
      ymin=0.998, ymax=1.022,
      legend style={at={(0.950,0.98)}, legend columns = 4}
      ]

      \addplot[name path=M, smooth, color=black, line width=1.0pt, opacity=1.0, mark options={color=black, scale=0.5}] table[y expr={\thisrow{q95}}, x=n_robots] {\tablea};
      \addplot[name path=N, smooth, color=color_blue, line width=1.0pt, opacity=1.0, mark options={color=color_blue, scale=0.7}] table[y expr={\thisrow{q975}}, x=n_robots] {\tablea};
      \addplot[smooth, color=color_red, line width=1.0pt, opacity=1.0, mark options={color=color_red, scale=0.7}] coordinates {(-10.0, 0) (-5.0, 0)};

      \addplot[name path=P, smooth, color=color_green, line width=1.0pt, opacity=1.0, mark options={color=green, scale=0.7}] table[y expr={\thisrow{mean}}, x=n_robots] {\tablea};

      \addplot[color=color_red, opacity=0.2] fill between [of=O and M];
      \addlegendentry{$Q_{0.95}$}
      \addlegendentry{$Q_{0.975}$}
      \addlegendentry{$Q_{1.0}$}
      \addlegendentry{$\mu$}
    \end{axis}

  \end{tikzpicture}
  \caption{Quantitative analysis of the suboptimality of the \ac{catora} to \ac{tofrp} omitting assumption (A3). The presented results are generated using $10^5$ instances with a density of the environment $d_r=0.1$ for every number of robots in $[5, 175]$. $Q_x$ stands for corresponding quantiles and $\mu$ stands for the mean value. The curve of $Q_{1.0}$ is associated with the values on the right axis.}
  \label{fig:lower_bound_comparison}
\end{figure}
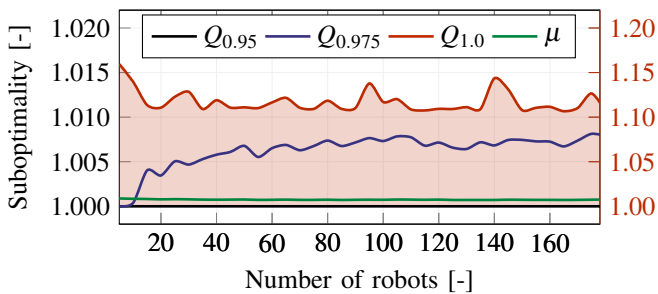



\section{Numerical and experimental results}\label{sec:experimental_results}
In this section, numerical and experimental results are presented to demonstrate the performance indicators of the proposed approach.
All results were evaluated in scenarios with varying numbers and densities of the robots randomly generated in a 3D environment.
The density of robots in the environment of volume $V_e$ is defined as 
\begin{equation} 
 d_r = \frac{\sum_{r=1}^N V_r}{V_e},
\end{equation}
where $V_r$ stands for the volume occupied by particular robots. 
All evaluations were performed on a computer with a 4-core Intel$^{(\text{R})}$ Core(TM) i7-10510U CPU with base frequency \SI{1.80}{GHz}.

\subsection{The effect on length of the path}

Although the presented approach is focused on minimizing the makespan of the formation reshaping process, the comparison based on the duration of the trajectories would depend on the choice of kinematic constraints. Thus, it would not yield fair results. 
Therefore, we compare the solutions provided by our algorithm for robot-to-goal assignment in terms of maximum length of the path with the solutions of \ac{lsap} used by several state-of-the-art works~\cite{turpin2014CaptConcurrentAssignment,lusk2020DistributedPipelineScalable, alonsoMora2019DistributedMultirobotFormation, quan2022FormationFlightDense}.
The results show that, on average, the \ac{catora} produces a set of paths with a maximum length 11\% shorter than the \ac{lsap} approach.
This highlights the significant benefit of using \ac{catora} instead of \ac{lsap}-based approaches, especially for battery-constrained robots or time-constrained applications. 
The detailed results for various numbers of robots and densities of the environment are presented in~\autoref{fig:lsap_vs_lbap}.
In compliance with~\autoref{th:bound_max_len}, a more significant effect is observed for instances with more robots.

\begin{figure}[htpb]
  \centering
  \begin{tikzpicture}
  \pgfplotstableread[col sep=comma]{./fig/graphs/tikz_data/length_boxplots/density_01/length_ratio_boxplot_data.txt}{\tablea}
  \pgfplotstableread[col sep=comma]{./fig/graphs/tikz_data/length_boxplots/density_001/length_ratio_boxplot_data.txt}{\tableb}
  \pgfplotstableread[col sep=comma]{./fig/graphs/tikz_data/length_boxplots/density_0001/length_ratio_boxplot_data.txt}{\tablec}

  \begin{axis}[
      boxplot/draw direction=y,
      ylabel={$D_{m,CAT-ORA}/D_{m,LSAP}$ [-]},
      height=5.3cm,
      width=\columnwidth,
      ymin=0.35,ymax=1.08,
      cycle list={{white!20!blue},{white!20!red},{white!40!black}},
      restrict y to domain=0.0:1.0,
      yticklabel style={inner sep=0pt, align=right, xshift=-0.1cm},
      boxplot={
        %
        draw position={1/5 + floor(\plotnumofactualtype/3) + 1/5*fpumod(\plotnumofactualtype,3)},
        %
        box extend=0.15
      },
      x=1.05cm,
      xmin=-0.1,xmax=7.1,
      xtick={0,1,2,...,20},
      x tick label as interval,
      xlabel={Number of robots [-]},
      xticklabels={%
        {4},%
        {8},%
        {16},%
        {32},%
        {64},%
        {128},%
        {256},%
        },
        x tick label style={
          text width=2.5cm,
          align=center
        },
        every axis plot/.append style={fill,fill opacity=0.4},
        legend style={at={(0.145,0.22)},anchor=north west, legend columns=-1},
        legend image code/.code={\draw [#1] (0cm,-0.1cm) rectangle (0.2cm,0.25cm); },
      ]

      \addplot table [y index = 0] {\tablea};
      \addplot table [y index = 0] {\tableb};
      \addplot table [y index = 0] {\tablec};

      \addplot table [y index = 1] {\tablea};
      \addplot table [y index = 1] {\tableb};
      \addplot table [y index = 1] {\tablec};

      \addplot table [y index = 2] {\tablea};
      \addplot table [y index = 2] {\tableb};
      \addplot table [y index = 2] {\tablec};

      \addplot table [y index = 3] {\tablea};
      \addplot table [y index = 3] {\tableb};
      \addplot table [y index = 3] {\tablec};

      \addplot table [y index = 4] {\tablea};
      \addplot table [y index = 4] {\tableb};
      \addplot table [y index = 4] {\tablec};

      \addplot table [y index = 5] {\tablea};
      \addplot table [y index = 5] {\tableb};
      \addplot table [y index = 5] {\tablec};

      \addplot table [y index = 6] {\tablea};
      \addplot table [y index = 6] {\tableb};
      \addplot table [y index = 6] {\tablec};

      \legend{$d_r=0.1$, $d_r=0.01$, $d_r=0.001$}




    \end{axis}
  \end{tikzpicture}
  \vspace*{-0.5cm}
  \caption{The ratio between the maximum length of the set of paths produced by \ac{catora} $D_{m,CAT-ORA}$ and by the \ac{lsap} approach $D_{m,LSAP}$ for a varying number of robots and density of robots in the environment $d_r$. The results are generated using $10^5$ instances for each presented number of robots.}
  \label{fig:lsap_vs_lbap}
\end{figure}

\subsection{Computational time}

The introduced procedures and checks guaranteeing the optimality of the \ac{catora} come at the cost of higher computational times in comparison to the original Hungarian algorithm.
A major increase in the computational burden may potentially come from the \textit{branchSolution()} method. 
However, reaching its theoretical asymptotic complexity would mean that all LBAP solution edges mutually collide. 
The probability of this situation is limited by the assumption (A5) and by solving the \ac{lbap} as an \ac{lsap} with restrictions on certain edges.
This brings the advantage that any pair of edges $e_{i,j}, e_{k,l}$ is guaranteed to be collision-free if $e_{i,l} \leq t_c$ and $e_{k,j} \leq t_c$.
Thus, in practice, the \textit{branchSolution()} method is responsible for $4.7\%$ of the total computational time on average among $10^5$ randomly generated instances with high density.

A detailed analysis has shown that the main part of the additional time required by \ac{catora} is not consumed by the collision resolution part, but by the search for a correct threshold for the feasible solution.
Since some algorithms for the solution of \ac{lbap} have lower theoretical complexity than those for the solution of \ac{lsap}, they can be used to increase the efficiency of a search for the threshold $t_c$.
However, their application in \autoref{alg:overall_lbap} is limited by the crucial role of dual variables that would require running the algorithm from its initial phase after the threshold is found.
Therefore, such an approach is efficient only for instances with a high number of robots and an inaccurate initial estimate of threshold $t_{lb}$.

The detailed comparison of computational times of the algorithm is shown in~\autoref{fig:res_computational_times}. 
Although the ratio between the maximum computational times of the Hungarian algorithm, applied for the solution of \ac{lsap} and \ac{catora} is significant, the absolute maximum difference in times does not exceed a few milliseconds for the instances with up to 32 robots. 
This keeps the computational demands sufficiently low for using \ac{catora} in applications that require real-time computations. 
The ratio between computational times decreases with an increasing number of robots since it mitigates the effect of the more demanding initialization phase. 

\begin{figure}[htpb]
  \centering
  \begin{tikzpicture}
  \pgfplotstableread[col sep=comma]{./fig/graphs/tikz_data/time_boxplots/density_0001/time_lsap_boxplot_data.txt}{\tablea}
  \pgfplotstableread[col sep=comma]{./fig/graphs/tikz_data/time_boxplots/density_0001/time_lbap_boxplot_data.txt}{\tableb}

  \begin{axis}[
      boxplot/draw direction=y,
      ylabel={Computational time [ms]},
      height=5cm,
      width=\columnwidth,
      cycle list={{blue},{red}},
      ymode=log,
      max space between ticks=20,
      yticklabel style={inner sep=0pt, align=right, xshift=-0.1cm},
      boxplot={
        %
        draw position={1/3 + floor(\plotnumofactualtype/2) + 1/3*fpumod(\plotnumofactualtype,2)},
        %
        box extend=0.25
      },
      x=1.03cm,
      xmin=-0.1,xmax=7.1,
      ymin=0.005,ymax=2000,
      xtick={0,1,2,...,20},
      x tick label as interval,
      xlabel={Number of robots [-]},
      xticklabels={%
        {4},%
        {8},%
        {16},%
        {32},%
        {64},%
        {128},%
        {256},%
        },
        x tick label style={
          text width=2.5cm,
          align=center
        },
        every axis plot/.append style={fill,fill opacity=0.4},
        legend style={at={(0.525,0.24)},anchor=north west, legend columns=-1},
        legend image code/.code={\draw [#1] (0cm,-0.1cm) rectangle (0.2cm,0.25cm); },
      ]

      \addplot table [y index = 0] {\tablea};
      \addplot table [y index = 0] {\tableb};

      \addplot table [y index = 1] {\tablea};
      \addplot table [y index = 1] {\tableb};

      \addplot table [y index = 2] {\tablea};
      \addplot table [y index = 2] {\tableb};

      \addplot table [y index = 3] {\tablea};
      \addplot table [y index = 3] {\tableb};

      \addplot table [y index = 4] {\tablea};
      \addplot table [y index = 4] {\tableb};

      \addplot table [y index = 5] {\tablea};
      \addplot table [y index = 5] {\tableb};

      \addplot+[boxplot={draw position=6.25}] table [y index = 6] {\tablea};
      \addplot table [y index = 6] {\tableb};

      \legend{LSAP, CAT-ORA}




    \end{axis}
  \end{tikzpicture}
  \vspace*{-0.5cm}
  \caption{Comparison of computational demands of the \ac{lsap} approach and \ac{catora} approach for varying numbers of robots in an environment. The presented results are generated using $10^5$ instances with varying densities of the environment.}
  \label{fig:res_computational_times}
\end{figure}

\subsection{Formation reshaping}

\begin{table*}[t]
\renewcommand{\arraystretch}{1.0}
  \begin{center}
    \caption{Comparison of different approaches for the solution of formation reshaping task. The values were computed from results obtained for $10^5$ instances of varying density $d_r \in [0.0001, 0.1]$, varying number of robots $N\in[10, 210]$, $v_{max} = \SI{4}{\meter\per\second}$, and $a_{max} = \SI{2}{\meter\per\second\squared}$. The $PDB$ value for statistics $x$ and method $m$ is computed as $PDB(x,m) = (x_m - x_{best})/x_{best}$, where $x_{best}$ is the best value of $x$ among all methods for a particular instance.      
    }\label{tab:solution_comparison}
    \newcommand{\rotateAngle}{00}
    \newcommand*{\OK}{\checkmark}
    \newcommand*{\hshift}{\hspace*{0.0cm}}
    \begin{tabular}{l c c c c c c c c c}
      \toprule 
      \multirow{2.4}{*}{Approach} & \multirow{2.4}{*}{Success rate [\%]} & \multicolumn{2}{c}{~Makespan PDB [\%]} &  \multicolumn{2}{c}{~Max. length PDB [\%]} & \multicolumn{2}{c}{~Total length PDB [\%]} & \multicolumn{2}{c}{~Comp. time PDB [\%]} \\
      \cmidrule(l){3-4} \cmidrule(l){5-6} \cmidrule(l){7-8} \cmidrule(l){9-10}
      & & ~~mean & ~std. dev. & ~~~mean & std.dev & ~~~~mean & std.dev & ~~~mean & std.dev\\ 
      \midrule
      \ac{lsap}, min. time & \textbf{100.0} & ~~12.17 & ~8.54 & ~~~15.82 & 11.18 & ~~~~\textbf{0.00} & 0.00 & ~~~~~~\textbf{0.06} & ~~~1.69\\
      \ac{lbap}, min. time & 93.19 & ~~~\textbf{0.00} & ~0.00 & ~~~~\textbf{0.00} & ~0.00 & ~~~~3.11 & 2.82 & ~~~244.52 & 346.24\\
      \ac{catora} & \textbf{100.0} & ~~~0.06 & ~0.48 & ~~~~0.07 & ~0.63 & ~~~~3.11 & 2.83 & ~~~266.23 & 372.82\\
      \bottomrule
    \end{tabular}
  \end{center}
  \vspace*{-0.3cm}
\end{table*}

\begin{figure*}[htpb]
  \centering
  \definecolor{color_blue}{rgb}{0.22, 0.2, 0.502}
\definecolor{color_red}{rgb}{0.737,0.165,0}
\definecolor{color_green}{rgb}{0, .522, .243}

\begin{tikzpicture}[font=\normalsize]

    \begin{groupplot}[
      group style={
            group size=5 by 2,
            x descriptions at=edge bottom,
            y descriptions at=edge left,
            vertical sep=4pt,
            horizontal sep=4pt,
      },
      name=top,
      width=0.27\textwidth,
      height=0.27\textwidth,
      grid=major,
      grid style={draw=gray!12,line width=1pt},
      xmin=-60.0,xmax=60.0,
      ymin=-60.0,ymax=60.0,
      axis equal image,
      yticklabels={,,},
      xticklabels={,,},
      xlabel shift=-5pt
      ]

      \nextgroupplot[
        ylabel= CAT-ORA,
      ]



      \addplot[thick, color=blue, on layer=axis background]
      graphics[xmin=-80,ymin=-60,xmax=80,ymax=60] {./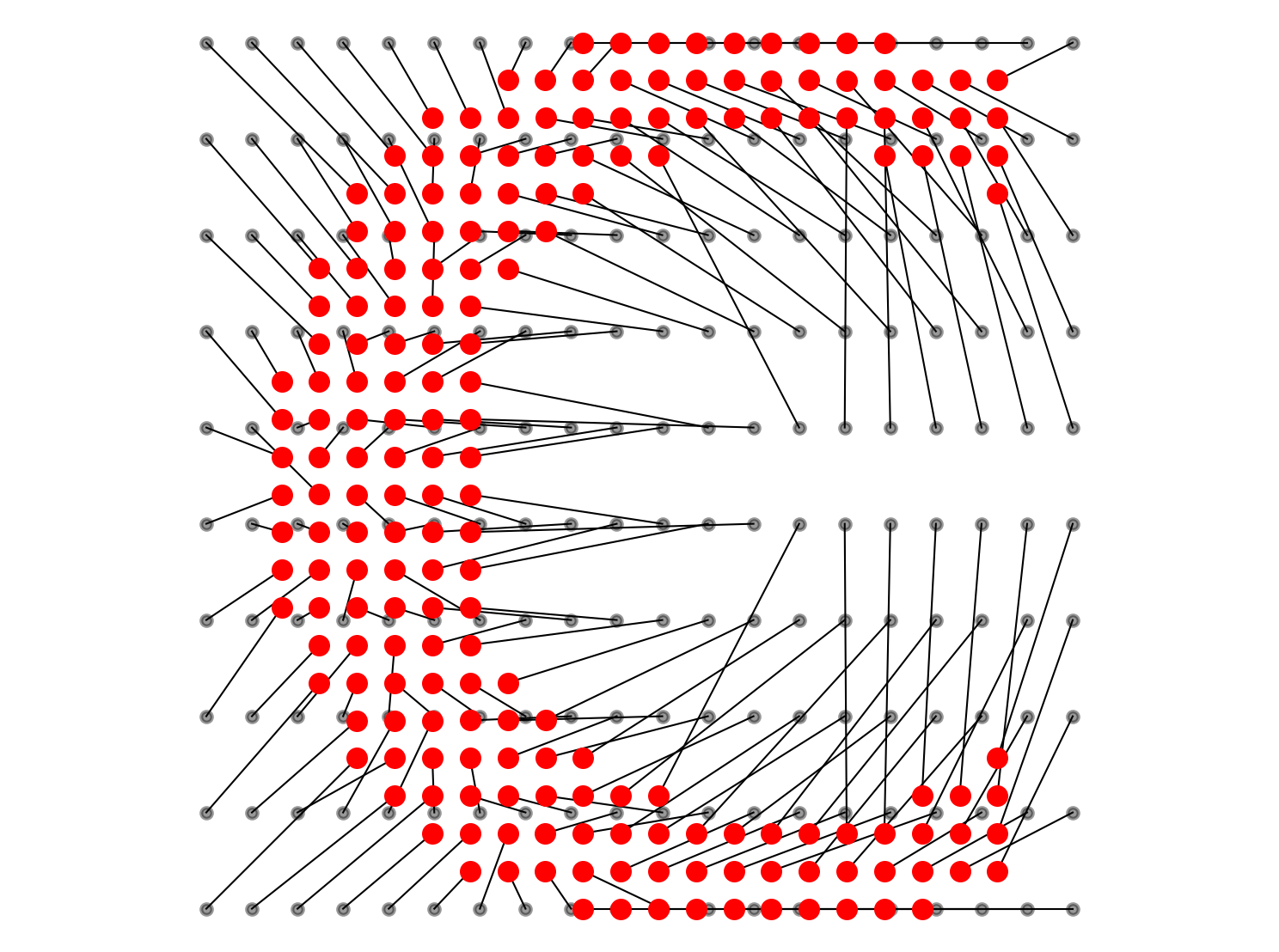};

      \nextgroupplot[]
      \addplot[thick, color=blue, on layer=axis background]
      graphics[xmin=-80,ymin=-60,xmax=80,ymax=60] {./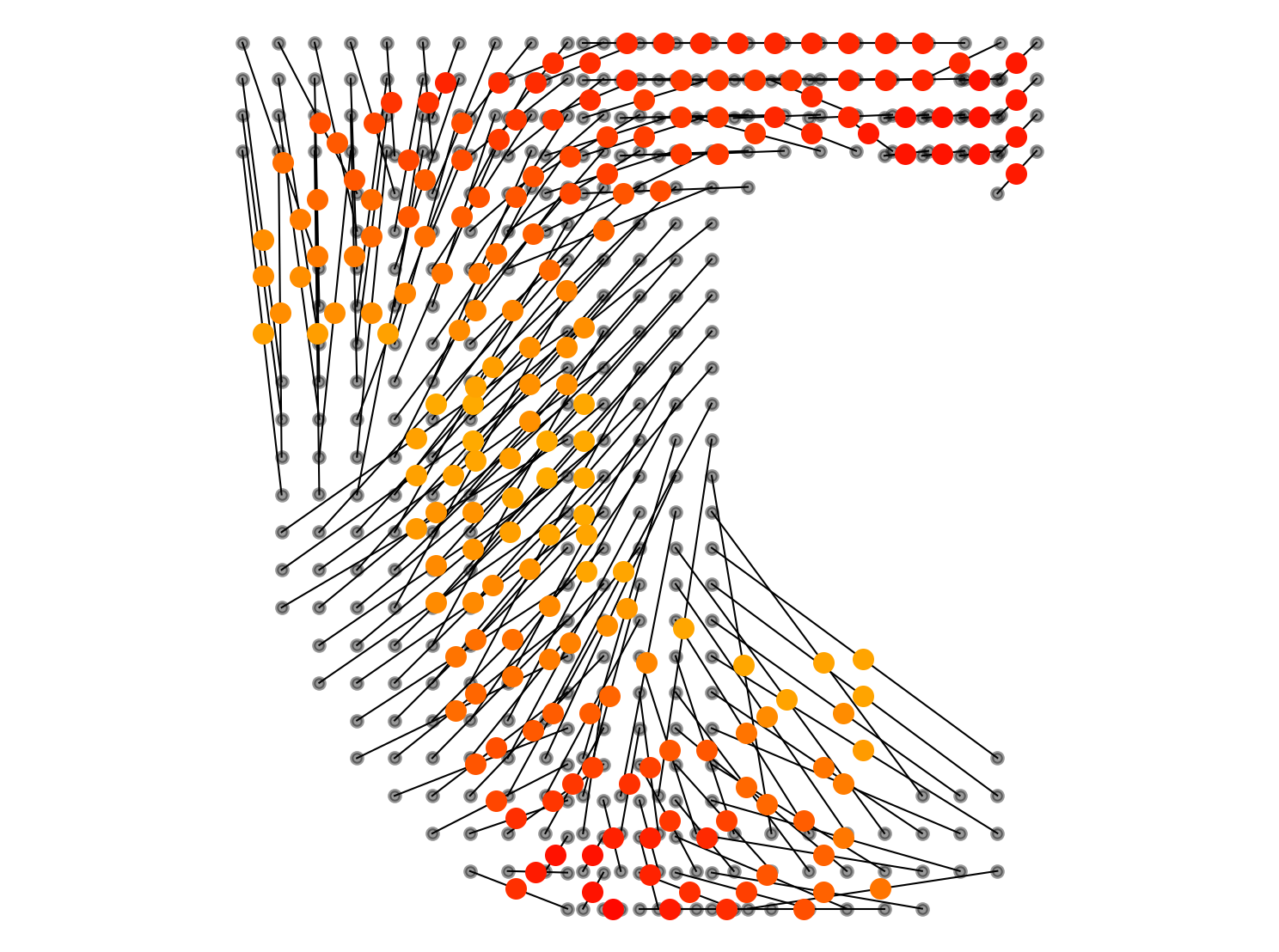};

      \nextgroupplot[]
      \addplot[thick, color=blue, on layer=axis background]
      graphics[xmin=-80,ymin=-60,xmax=80,ymax=60] {./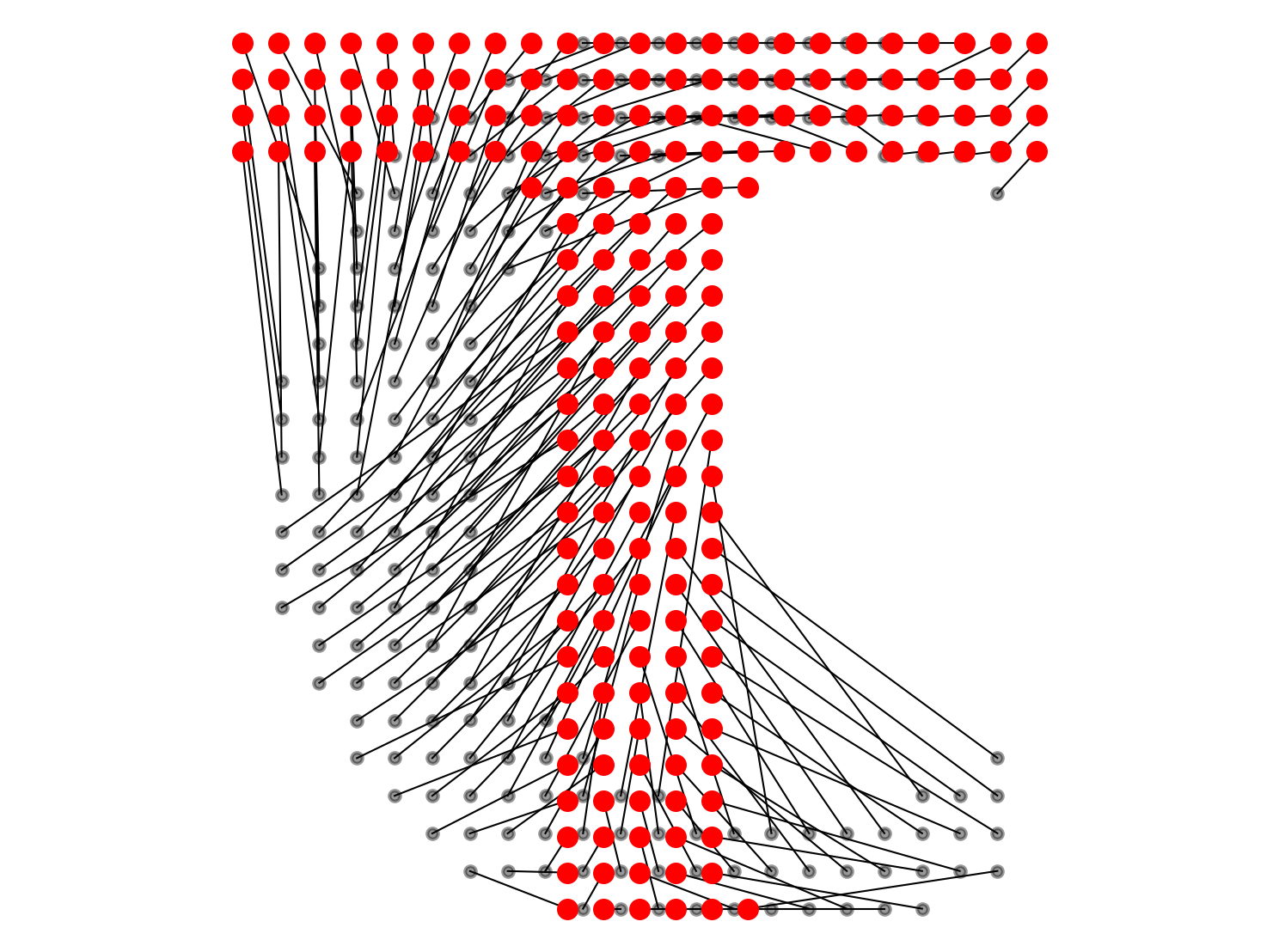};

      \nextgroupplot[]
      \addplot[thick, color=blue, on layer=axis background]
      graphics[xmin=-80,ymin=-60,xmax=80,ymax=60] {./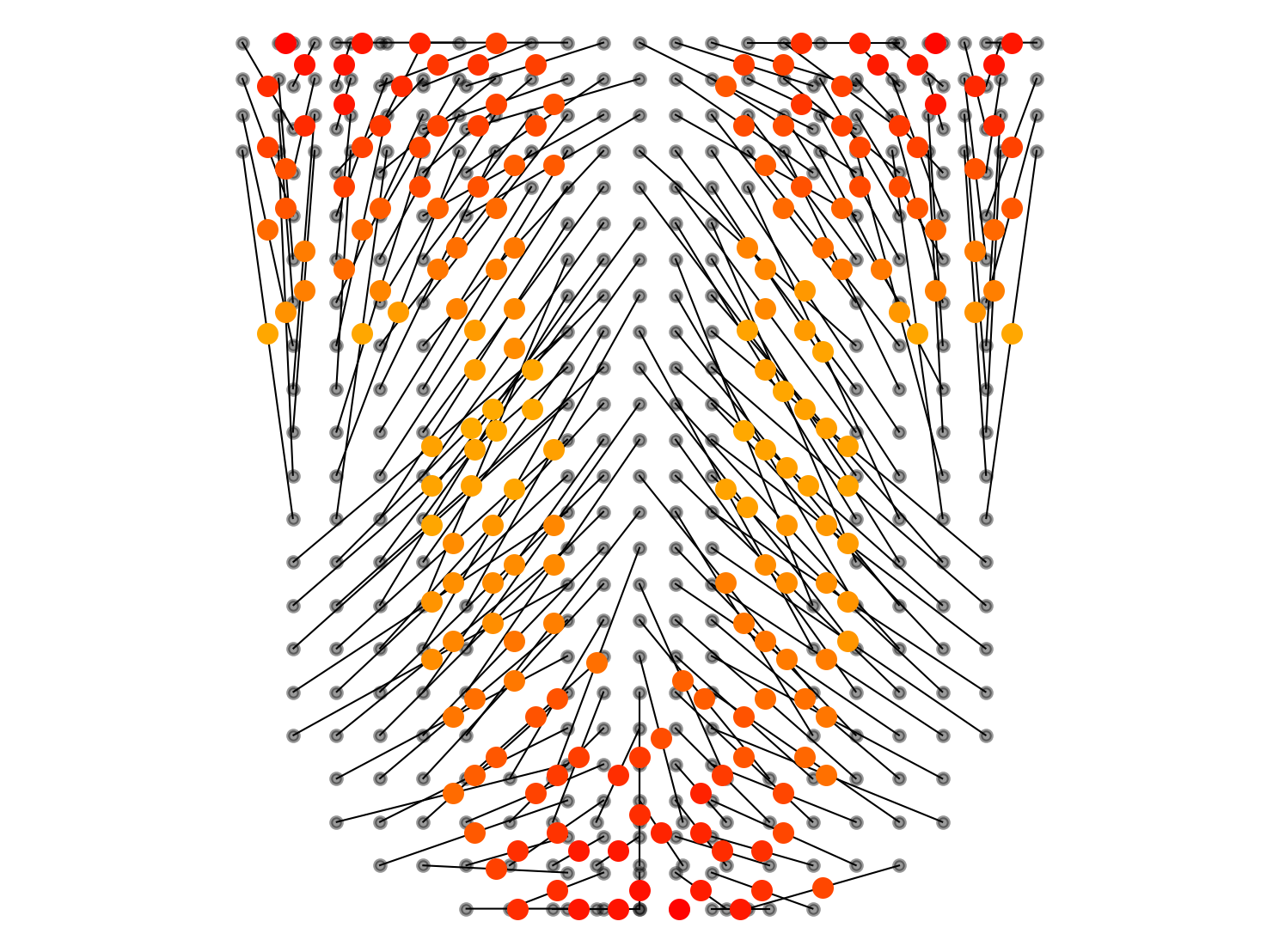};

      \nextgroupplot[]
      \addplot[thick, color=blue, on layer=axis background]
      graphics[xmin=-80,ymin=-60,xmax=80,ymax=60] {./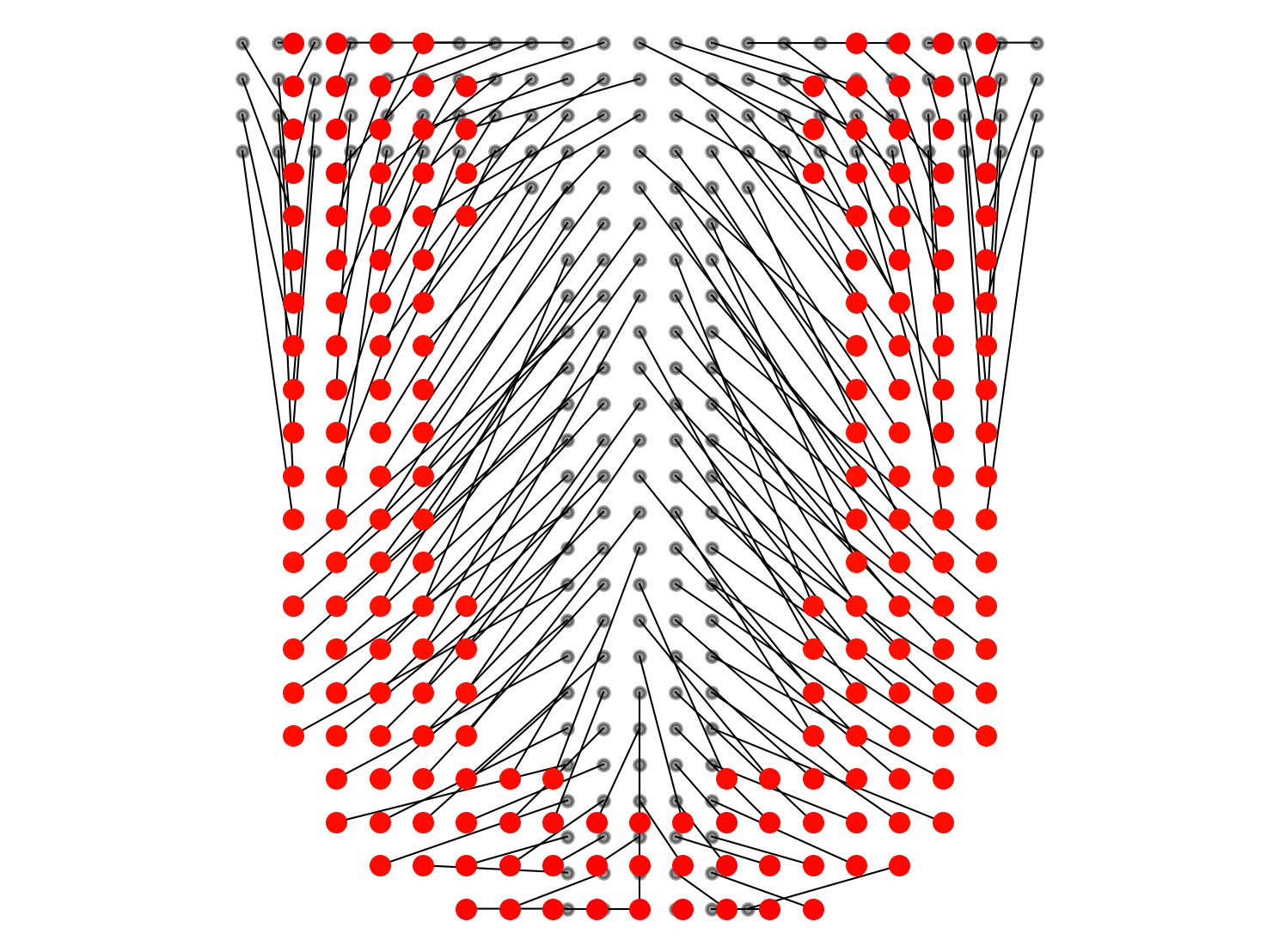};
      
      \nextgroupplot[
        ylabel={LSAP, min. time},
        xlabel={$T = 11.0\,s$},
        ]


      \addplot[thick, color=blue, on layer=axis background]
      graphics[xmin=-80,ymin=-60,xmax=80,ymax=60] {./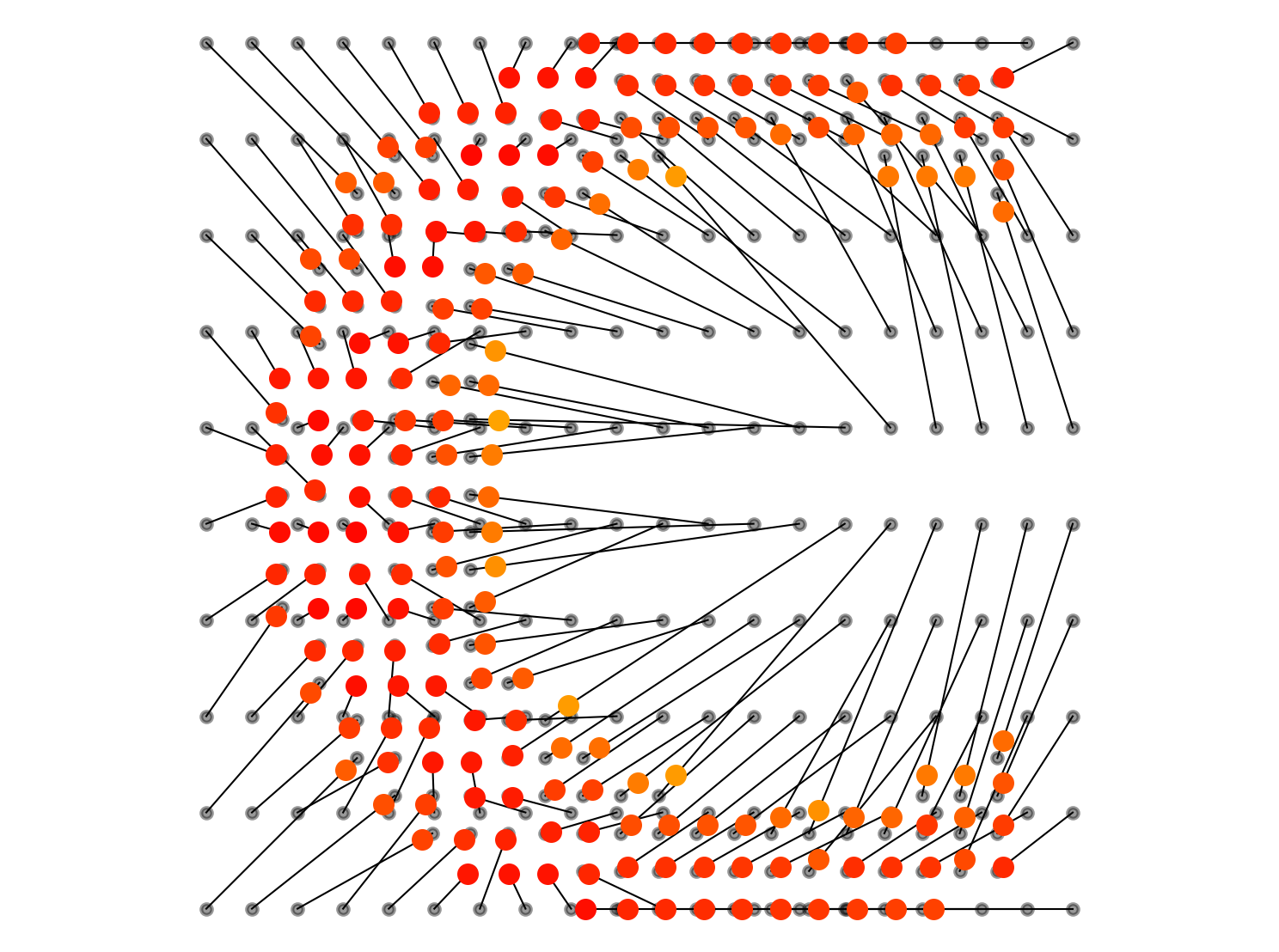};

      \nextgroupplot[
        xlabel={$T = 17.0\,s$},
        ]
      \addplot[thick, color=blue, on layer=axis background]
      graphics[xmin=-80,ymin=-60,xmax=80,ymax=60] {./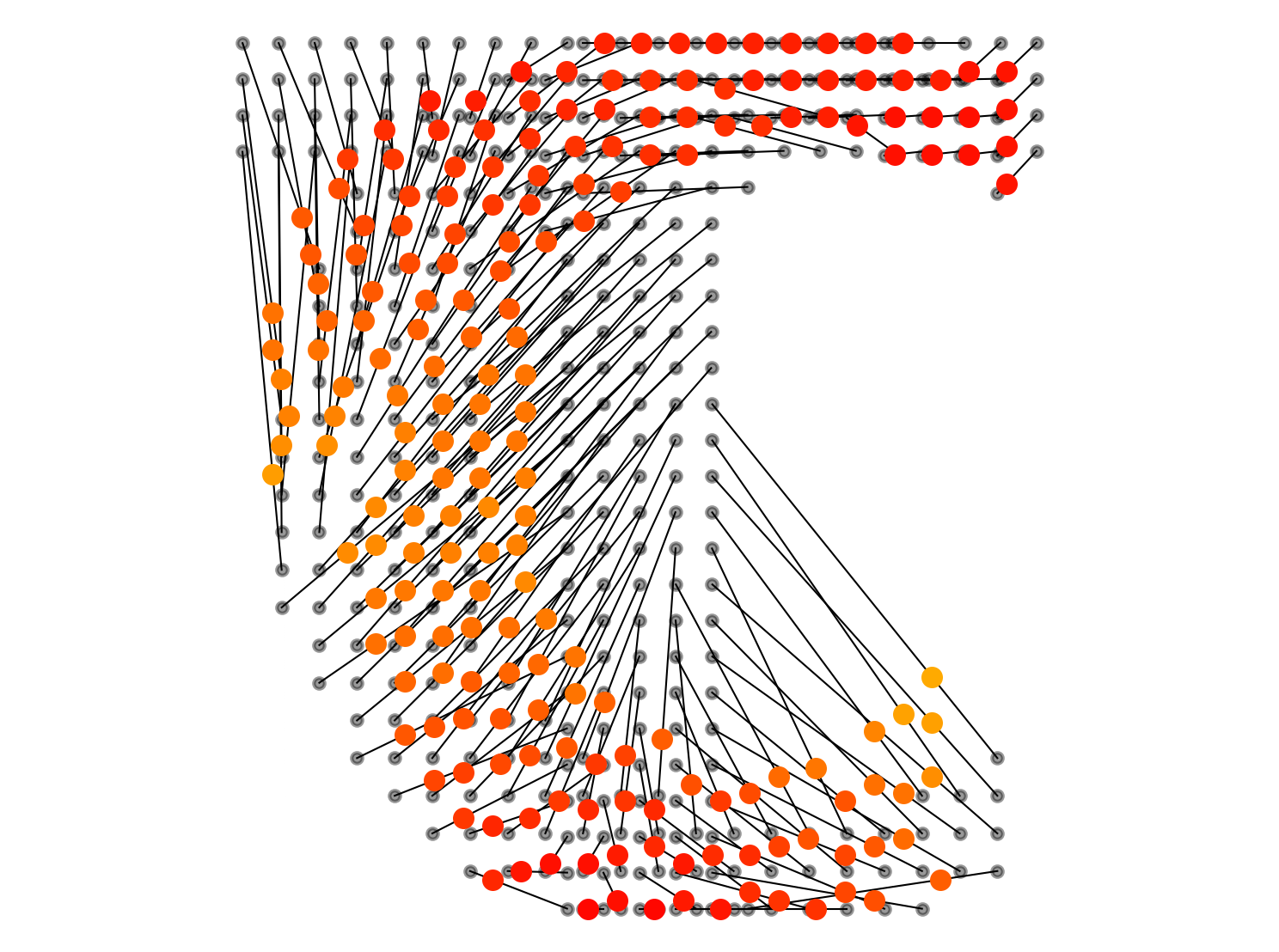};

      \nextgroupplot[
        xlabel={$T = 23.8\,s$},
        ]
      \addplot[thick, color=blue, on layer=axis background]
      graphics[xmin=-80,ymin=-60,xmax=80,ymax=60] {./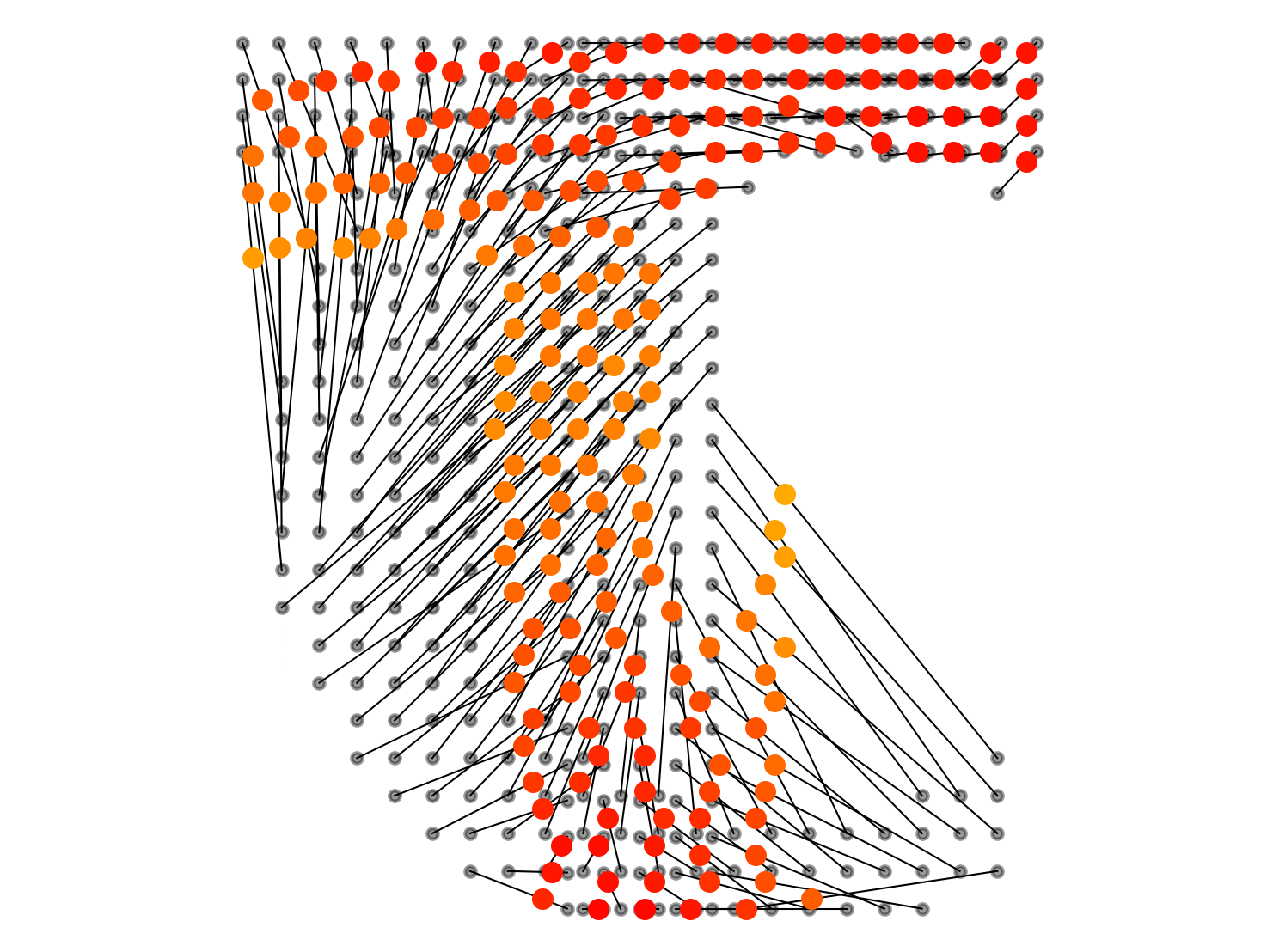};

      \nextgroupplot[
        xlabel={$T = 30.2\,s$},
        ]
      \addplot[thick, color=blue, on layer=axis background]
      graphics[xmin=-80,ymin=-60,xmax=80,ymax=60] {./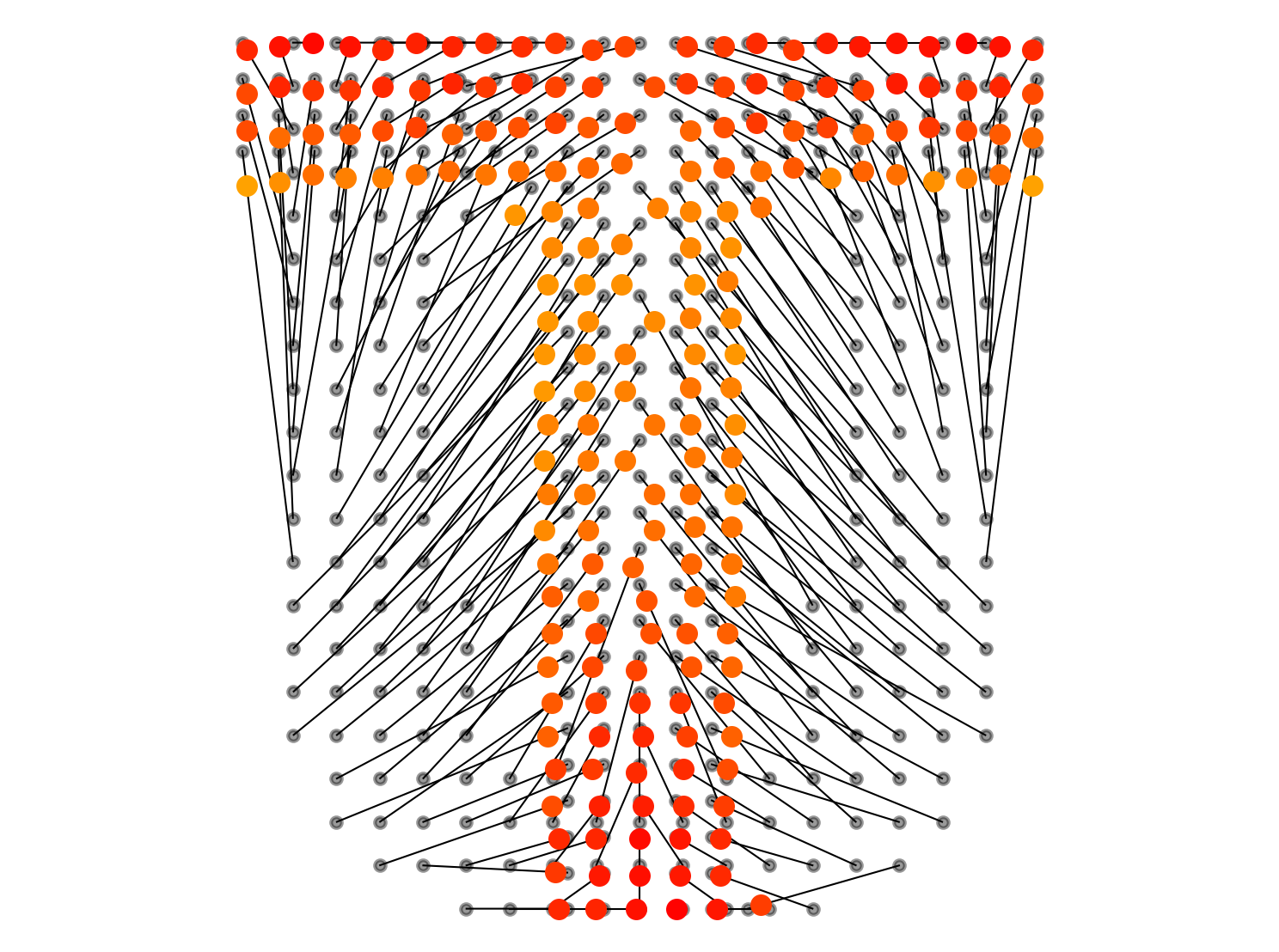};

      \nextgroupplot[
        xlabel={$T = 36.6\,s$},
        ]
      \addplot[thick, color=blue, on layer=axis background]
      graphics[xmin=-80,ymin=-60,xmax=80,ymax=60] {./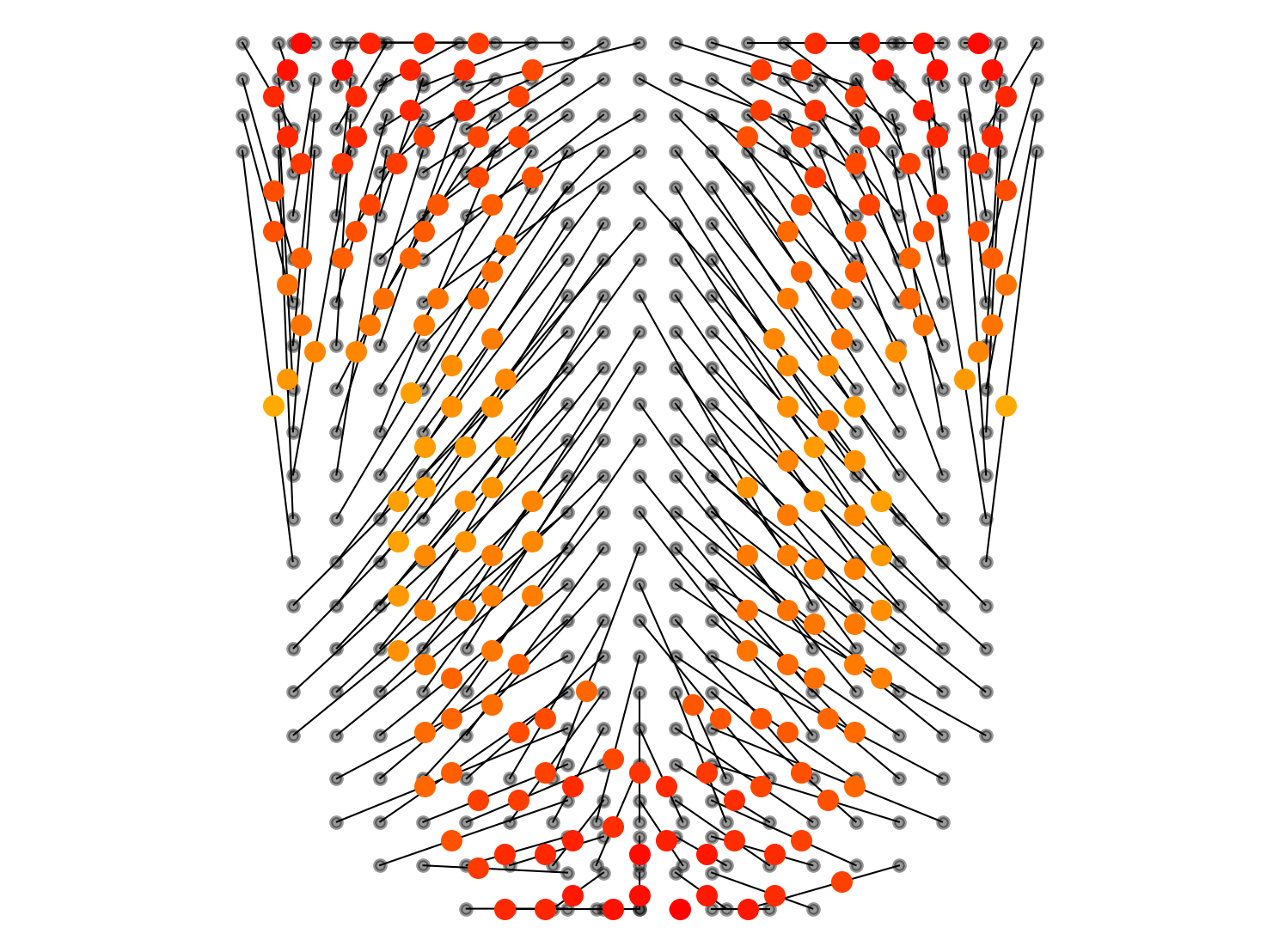};

    \end{groupplot}

  \end{tikzpicture}
  \vspace*{-0.1cm}
  \caption{A qualitative comparison of the formation reshaping process applying \ac{catora} and approach applying \ac{lsap} solution coupled with minimum-time trajectory generation. The formation consists of 200 robots that are initially organized in a rectangular formation and are consequently required to adapt the shape of the formation to represent letters C, T, and U. The applied kinematic constraints are $v_{max} = \SI{4}{\meter\per\second}$, and $a_{max} = \SI{2}{\meter\per\second\squared}$. The height of each letter is \SI{100}{\metre} and the scale of the axes is equivalent. The gray lines represent the reshaping paths, and the colored points represent positions of robots at corresponding times. The color encodes the velocity of particular robots, with red being equal to zero velocity and yellow to $v_{max}$.}
  \label{fig:res_formation_reshaping}
  \vspace*{-0.5cm}
\end{figure*}

We benchmark the \ac{catora} by comparing the achieved results with the \ac{lsap} and \ac{lbap} algorithms coupled with minimum-time trajectories. 
Similarly to the detailed results in the previous section, the algorithms were evaluated on a set of $10^5$ instances representing formation reshaping tasks with various numbers and densities of robots in an environment. 
While the comparison results can be easily inferred from the characteristics of the individual algorithms, the presented results, as detailed in~\autoref{tab:solution_comparison}, quantitatively demonstrate the expected outcomes.
The \ac{lbap}-based consistently yields solutions with shorter maximum path lengths compared to other methods, resulting in a reduced makespan.
However, the generated trajectories lead to collisions in more than $6\%$ of instances. 
The \ac{catora} and \ac{lsap}-based approach provide collision-free trajectories for all instances. 
Yet, while the \ac{lsap}-based solution leads to an average increase of $12\%$ in makespan and $15\%$ in maximum path length compared to \ac{lbap}-based approach, the \ac{catora} only marginally extends the duration of reshaping process by an average of $0.06\%$ compared to \ac{lbap}-based approach.

The advantage of the \ac{catora} over the \ac{lsap}-based algorithm is showcased in a scenario requiring 200 robots initially arranged in a rectangular formation to sequentially adapt the formation shape to represent the letters C, T, and U.
Guided by the \ac{catora}, the entire formation reshaping task is completed in \SI{36.6}{\second}, which is \SI{5.4}{\second} faster than the solution provided by the \ac{lsap}-based approach, while the increase in computation time is $0.9\,\mathrm{s}$.
A detailed presentation of a specific formation reshaping instance is provided in~\autoref{fig:res_formation_reshaping}.

\subsection{Integration with cooperative motion planning algorithm}
The applicability of the proposed algorithm in the distributed scenarios is validated by integrating the robot-to-goal assignment algorithm of \ac{catora} with a distributed cooperative motion planning algorithm MADER\cite{tordesillas2022mader}.
The results, presented in~\autoref{tab:mader_comparison}, show that complete \ac{catora} reduces the makespan by $45\%$ on average when compared to MADER combined with the robot-to-goal assignment part of \ac{catora} only.
This is caused mainly by the MADER's approach to collision resolution, which reacts to the identified risk of collision in advance, including situations where the collision would not happen. Thus, hindering full exploitation of kinematic constraints to minimize the makespan. 
Such an approach is very reasonable and practical in many scenarios, but at the same time, it leads to unnecessary extension of the reshaping process, which underlines the advantage of centralized algorithms in scenarios focused on minimizing the makespan.

Further, we evaluate the performance of the MADER in combination with three different robot-to-goal assignment methods --- \ac{lsap}-based, \ac{lbap}-based and \ac{catora}-based.    
The MADER algorithm provides collision-free solutions in combination with all examined assignments. 
However, although the \ac{lbap}-based solution provides the shortest paths among all assignments, the results show that the needed collision avoidance maneuvers prolong the makespan of the formation reshaping, and on average \ac{lbap}-based solution provides worse performance than both collision-free assignments.
The best average performance over all instances was achieved with the assignment provided by \ac{catora}, which reduces the number of avoidance maneuvers by considering the collisions during the robot-to-goal assignment phase while minimizing the length of the longest path.
The results highlight the importance of the robot-to-goal assignment for efficient formation reshaping and consideration of the collisions during the robot-to-goal assignment, even in the case of the application of advanced collision-resolution techniques and algorithms.
The detailed results are provided in \autoref{tab:mader_comparison}.
\begin{remark}
  For the methods applying MADER, the individual instances were rotated according to the result of the assignment such that the longest assigned path points in the diagonal direction of $xy$ coordinate frame and thus the speed in this direction is not limited by the per-axis velocity and acceleration constraints applied by MADER.
\end{remark}

\begin{table}[t]
\renewcommand{\arraystretch}{1.0}
\setlength\tabcolsep{4.6pt}
  \begin{center}
    \caption{Comparison of the proposed approach with the combination of different approaches to robot-to-goal assignment and distributed cooperative motion planning algorithm MADER\cite{tordesillas2022mader}. The comparison was done on 100 instances with randomly generated sets of start and goal configurations with distinct robot-to-goal assignments for individual approaches, $v_{max} = \SI{4}{\meter\per\second}$, and $a_{max} = \SI{2}{\meter\per\second\squared}$, and number of robots limited to $N \in [3, 10]$ due to the computation limitations of the MADER when running on a single computer. 
    The $PDB$ values are computed separately for the minimum time and the MADER part of the table.
    }\label{tab:mader_comparison}
    \newcommand{\rotateAngle}{00}
    \newcommand*{\OK}{\checkmark}
    \newcommand*{\hshift}{\hspace*{0.0cm}}
    \addtolength{\tabcolsep}{-0.1em} 
    \begin{tabular}{l c c c c}
      \toprule 
      \multirow{2.4}{*}{Approach} & \multirow{2.4}{*}{\begin{tabular}{c}Success\\ rate [\%]\end{tabular}} & \multirow{2.4}{*}{\begin{tabular}{c}Makespan\\ mean [s]\end{tabular}} & \multicolumn{2}{c}{~Makespan PDB [\%]}\\
      \cmidrule(l){4-5}
      & & & ~~~mean & ~std. dev.\\ 
      \midrule
      \ac{lsap} + min. time & \textbf{100.0} & 4.89 & ~~~7.34 & ~4.82\\
      \ac{lbap} + min. time & 0.0 & \textbf{4.56} & ~~~\textbf{0.00} & ~\textbf{0.00}\\
      \ac{catora} & \textbf{100.0} & 4.65 & ~~~1.91 & ~1.37\\
      \midrule
      \ac{lsap} + MADER & \textbf{100.0} & 8.51 & ~~~5.76 & ~\textbf{7.45}\\
      \ac{lbap} + MADER & \textbf{100.0} & 8.74 & ~~~8.56 & ~9.88\\
      \ac{catora} + MADER & \textbf{100.0} & \textbf{8.44} & ~~~\textbf{4.89} & ~7.49\\
      \bottomrule
    \end{tabular}
    \addtolength{\tabcolsep}{+0.1em} 
  \end{center}
  \vspace*{-0.3cm}
\end{table}

\subsection{Real-world experiment}

In the real-world experiment, the \ac{catora} was applied in a scenario simulating a small-scale drone performance.
The scenario requires a set of robots to perform 19 transitions between formations of diverse shapes (both 2D and 3D) and sizes, while the center of the formation continuously moves through the environment.
Each formation $\mathbb{F}_i = \{\mathbf{r}_1,\dots,\mathbf{r}_N\}$ is defined by a set of desired relative positions to the center of the formation $\mathbf{r}_j \in \mathbb{R}^3$ defined in the orthogonal coordinate system $H$ that coincides with the position and orientation of the center of the formation.
Since the requirement on continuous movement contradicts the assumption (A2) on robots being stationary in the initial and goal configurations, \ac{catora} cannot be directly applied to compute trajectories between the robots' configurations defined in the world coordinate frame $W$.

However, the definition of relative positions in an orthogonal coordinate system ensures independence of mutual distances between desired relative positions on the motion of the formation.
Therefore, we apply \ac{catora} to compute the trajectories in the space of relative positions considering consecutive formations $\mathbb{F}_i, \mathbb{F}_{i+1}$ as initial and goal configurations, respectively. 
The generated trajectories then define the time evolution of $\mathbf{r}_j$, leading to continuous adaptation of the formation shape.
The trajectories in the world coordinate frame are then defined by $\mathbf{p}_j(t) =  \mathbf{T}_{H,W} \mathbf{r}_j(t), t \in (t_0, t_f)$, where $\mathbf{T}_{H,W}$ is a transformation matrix from formation frame $H$ to world coordinate frame $W$.
This approach shows that the assumption (A2) is not a strict requirement for the applicability of \ac{catora} to solve \ac{tofrp}.
However, the superposition of the generated trajectories to the trajectory of the center of the formation requires adapting the kinematic constraints for the generation of formation reshaping trajectories, such that the resulting trajectories $\mathbf{p}_j(t), j \in \{1, 2, \dots, N\}, t \in (t_0, t_f)$ do not violate the kinematic constraints.
Thus, (A3) is a necessary assumption for guaranteeing the optimality of the solution.

The real-world experiment was performed with 19 multi-rotor helicopters~\cite{HertJINTHW_paper, hert2022MRSModularUAV} using the MRS UAV system for low-level control and trajectory tracking~\cite{baca2021mrs}.
The time required for the whole performance was \SI{294.80}{\second} with the total time of reshaping \SI{214.8}{\second} and an overall computational time of \SI{60}{\milli\second}.    
Snapshots from the experiment are shown in~\autoref{fig:real_world_experiment} and~\autoref{fig:intro}.  

\begin{figure}[!t]
  \newcommand{\xcap}{0.99em}
  \newcommand{\ycap}{0.695em}
  \newcommand{\yshift}{12.16em}
  \newcommand{\xshift}{12.95em}
  \newcommand{\fillopa}{0.3}
  \centering
  \begin{tikzpicture}
    \node[anchor=south west,inner sep=0] (b) at (0,0) {\includegraphics[width=1.0\columnwidth]{./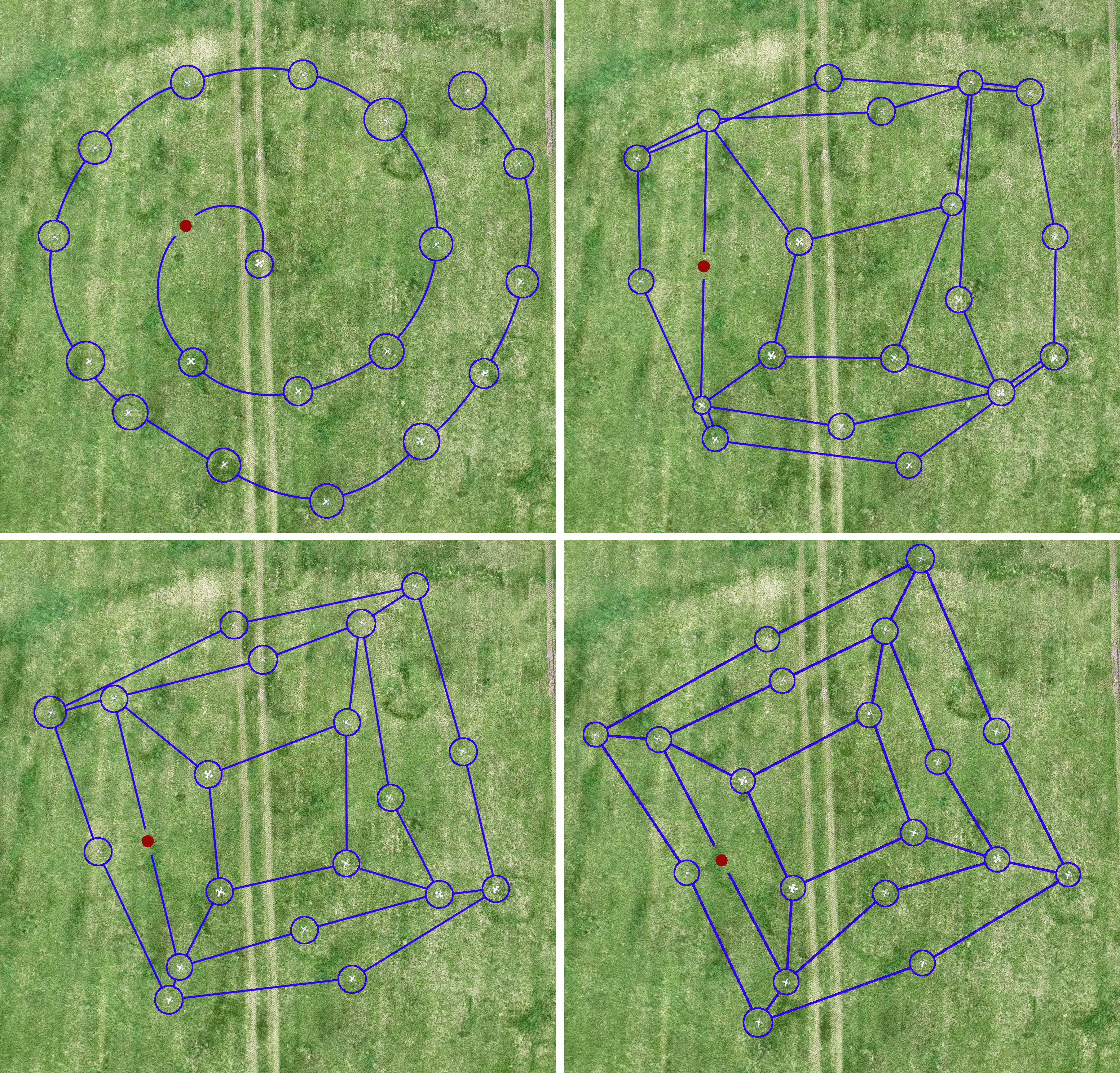}};%
  \begin{scope}[x={(b.south east)},y={(b.north west)}]
\node[fill=black, fill opacity=\fillopa, text=white, text opacity=1.0] at (\xcap, \ycap+\yshift) {\textbf{0 s}};
\node[fill=black, fill opacity=\fillopa, text=white, text opacity=1.0] at (\xcap+\xshift-0.27em, \ycap+\yshift) {\textbf{3 s}};
\node[fill=black, fill opacity=\fillopa, text=white, text opacity=1.0] at (\xcap, \ycap) {\textbf{7 s}};
\node[fill=black, fill opacity=\fillopa, text=white, text opacity=1.0] at (\xcap+\xshift, \ycap) {\textbf{10 s}};

    \draw (0.86,0.05) [scale bar=10];
    \draw (0.36,0.05) [scale bar=10];
    \draw (0.86,0.55) [scale bar=10];
    \draw (0.36,0.55) [scale bar=10];
    \end{scope}
  \end{tikzpicture}
  \caption{Snapshots from a real-world experiment showing the transition between a 3D spiral and a pyramid shape. The transition was completed within 10 seconds during continual rotation of the formation. The red point represents a missing UAV that failed to start due to a HW failure. Blue lines highlight the shape of the formation. Since the images show a 3D formation, the measuring scale is approximate.}
  \label{fig:real_world_experiment}
\end{figure}



\section{Discussion}\label{sec:discussion}

The proposed \ac{catora} provides a significant reduction (up to $49\%$) of the reshaping time at the cost of an increase in computational time.
Even though the computational time is, on average, approximately 3 times longer than for LSAP-based solutions, the low absolute computation times maintain the practicality of the proposed approach for real-time applications involving formations of tens of robots.

The primary drawback of the proposed method is its high theoretical asymptotic complexity which lies especially in the search for all potentially valid solutions during solution branching.
However, while this complexity does not allow us to provide guaranteed bounds on computation time, it rarely causes problems in practical use (as supported by data presented in~\autoref{fig:res_computational_times}). 
The only notable deviations in computation time outside the distribution presented in~\autoref{fig:res_computational_times} were observed for a few instances with more than 200 equally spaced robots, resulting in a large number of equivalent values in a distance matrix representing distances between individual start and goal configurations.

Although the algorithm is centralized, its expected applications are not limited to scenarios requiring offline computation of trajectories for a large number of robots or other elements (e.g., droplets on lab-on-chip devices), where the minimization of the makespan of the reshaping process is of interest. 
Thanks to low computational demands, the algorithm can be used also as a part of distributed systems where it can provide efficient initial robot-to-goal assignment (similarly as \ac{lsap}-based solution is used\cite{quan2022FormationFlightDense}) for on-demand or emergent formation reshaping tasks. 
These scenarios assume a relatively low number of robots (usually less than one hundred) and often show high interest in minimizing the makespan.
This combination of requirements makes the proposed algorithm well-suited for these real-time scenarios.




\section{Conclusion}\label{sec:conclusion}

This paper introduces an algorithm named \acs{catora} (\acl{catora}) to address the time-optimal formation reshaping problem while considering mutual collision avoidance among robots.
It showcases superior performance in terms of the makespan of the formation reshaping process, while maintaining computational demands at a level suitable for real-time deployment, even in formations comprising up to one hundred robots.
The properties of the proposed algorithm have been evaluated by thorough numerical and theoretical analysis, including the proof of optimality, and the applicability of the algorithm in practical scenarios was demonstrated through simulations and real-world experiments.

Notably, the results highlight a significant advantage of the robot-to-goal assignment aspect within \ac{catora}.
It reduces the maximum length of the assigned path by up to 49\% compared to the \ac{lsap}-based methods utilized by state-of-the-art approaches in cooperative motion planning and formation control.
This finding holds particular significance for aerial vehicles with constrained operational time, as it enhances their performance during a real-world deployment. Moreover, this outcome has potential implications for future research on formation reshaping focused on deploying autonomous robots in general environments as it forms the lower bound on the optimal solution of the introduced problem in environments with obstacles.


\bibliographystyle{IEEEtran}
\bibliography{main}


\appendices

\section{Minimum distance on diagonals of an isosceles trapezoid}\label{ap:min_dist_trapezoid}

\begin{proof}[\unskip\nopunct]
For finding the minimum mutual distance of robots following constant-velocity trajectories on the diagonals of an isosceles trapezoid, the relation for a minimum value of $\alpha_{ij}^*$ derived for a general case in~\eqref{eq:alpha_min} can be used.
Given that $||\mathbf{s}_{ij}|| = ||\mathbf{g}_{ij}|| \Leftrightarrow a = c $,  \eqref{eq:alpha_min} can be simplified to
\begin{equation}
  \alpha_{ij}^* = \frac{a-b}{2a-2b} = \frac{1}{2}.
\end{equation}
As a consequence, the minimum distance between two trajectories following the diagonals of an isosceles trapezoid equals to distance between points cutting the diagonals in two line segments of equal length (see~\autoref{fig:isosceles_trapezoid}).

\begin{figure}[thpb]
  \centering
  \begin{tikzpicture}[my angle/.style={font=\normalsize, draw, thick, angle eccentricity=1.42, angle radius=5mm}, line cap=round]
  \coordinate (si) at (0,0);
  \coordinate (gj) at (6.0, 0);
  \coordinate (gi) at ($(gj) + (110:2.2)$);
  \coordinate (sj) at (70:2.2);
  \coordinate (p1) at ($(si)!0.5!(gi)$);
  \coordinate (p2) at ($(sj)!0.5!(gj)$);

  \draw[dashed, thick, black, opacity=0.7] (si) -- (gj) node[midway,above] {};
  \draw[dashed, thick, black, opacity=0.7] (sj) -- (gi) node[midway,above] {};
  \draw[dashed, thick, mygreen, opacity=0.7] (p1) -- (p2) node[midway,above] {};

  \pic [my angle, angle radius=10mm, "$\theta$"] {angle=gj--si--gi};
  \pic [my angle, "$\rho$"] {angle=si--sj--gi};

  \draw[myblue,fill=myblue] (si) circle (.3ex) node[midway,below, shift={(0.0, 0.0)}] {$\mathbf{s}_i\equiv A$};
  \draw[myblue,fill=myblue] (sj) circle (.3ex) node[midway,above, shift={(0.75, 2.14)}] {$\mathbf{s}_j \equiv D$};
  \draw[myred,fill=myred] (gi) circle (.3ex) node[midway,above, shift={(5.25, 2.1)}] {$\mathbf{g}_i \equiv C$};
  \draw[myred,fill=myred] (gj) circle (.3ex) node[midway,above, shift={(6.0, -0.6)}] {$\mathbf{g}_j \equiv B$};

  \draw[vector,arrow,mypurple] (sj) -- (gj) node[midway,above, shift={(-0.33, 0.2)}] {$E$};
  \draw[vector,arrow,mypurple] (si) -- (gi) node[midway,below] {};
  \draw[vector,arrow,myblue] (sj) -- (si) node[midway,above left] {$\mathbf{s}_{ji}$};
  \draw[vector,arrow,myred] (gj) -- (gi) node[midway,above right] {$\mathbf{g}_{ji}$};

  \draw[mygreen,fill=mygreen] (p1) circle (.3ex) node[midway,below, shift={(3.0, 1.1)}] {$d_{ij,min}$};
  \draw[mygreen,fill=mygreen] (p2) circle (.3ex) node[midway,above, shift={(0.0, -0.0)}] {};
  
\end{tikzpicture}
  \caption{Graphical illustration of the minimum distance $d_{ij,min}$ of constant-velocity trajectories lying on the diagonals of an isosceles trapezoid.}
  \label{fig:isosceles_trapezoid}
\end{figure}
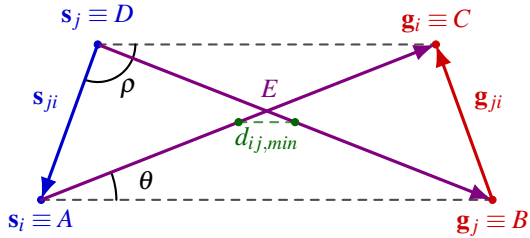

The theoretical guarantees on the minimum distance can be determined by the relation between minimum distance $d_{ij,min}$ and the value of $\delta_{ij}$~\eqref{eq:min_dist_starts_and_goals}.
Using substitutions
\begin{equation}
  \mathbf{s}_i \equiv A, \mathbf{g}_j \equiv B, \mathbf{g}_i \equiv C, \mathbf{s}_j \equiv D,
\end{equation}
the following relations hold for an isosceles trapezoid
\begin{equation}\label{eq:k_coeff}
  K = \frac{|CD|}{|AB|} = \frac{|CE|}{|AE|} = \frac{|DE|}{|BE|}.
\end{equation}

Applying the concept of triangle similarity, the distance $d_{ij,min}$ in an isosceles trapezoid can be expressed as
\begin{equation}\label{eq:d_min_isosceles}
  d_{ij,min} = \frac{|AE||AB|-\frac{1}{2}|AB||AC|}{|AE|}.
\end{equation}
Based on the assumption from~\autoref{th:minimum_dist_lbap} and~\eqref{eq:k_coeff}, we can determine relations
\begin{equation}\label{eq:isosceles_substitutions}
  \begin{split}
    |AB| =& \frac{1}{M} |AC|,\\
    |AE| =& \frac{1}{1+K} |AC|,
  \end{split}
\end{equation}
that can be applied to simplify~\eqref{eq:d_min_isosceles} to
\begin{equation}\label{eq:d_min_isosceles_simplified}
  d_{ij,min} = \frac{(1-K)|AC|}{2M}.
\end{equation}
Furthermore, we can utilize the properties of the right triangle and leverage the Law of Cosines to find a system of equations
\begin{equation}\label{eq:equations_for_k}
  \begin{split}
    \cos \theta =& \frac{|AB| - \frac{|AB|-K|AB|}{2}}{M|AB|} = \frac{1+K}{2M},\\
    \cos \theta =& \frac{|AB|^2 + M^2|AB|^2 - \delta_{ij}^2}{2M|AB|^2}.
  \end{split}
\end{equation}
From the system of equations~\eqref{eq:equations_for_k}, the relation for $K$ can be derived as
\begin{equation}
  K = \frac{M^2(|AC|^2-\delta_{ij}^2)}{|AC|^2}.
\end{equation}
By substituting this result into~\eqref{eq:d_min_isosceles_simplified}, we get the required relation for a minimum distance of two trajectories depending only on the minimum mutual distance of starts and goals $\delta_{ij}$, the ratio between the length of the trajectories $M$, and the length of the trajectory $|AC|$
\begin{equation}\label{eq:d_min_final}
  d_{ij,min} = \frac{|AC|^2(1-M^2) + M^2\delta_{ij}^2}{2M|AC|}.
\end{equation}

To find a minimum distance of trajectories independently on the length of $|AC|$, we find the derivative of~\eqref{eq:d_min_final} with respect to $|AC|$ and set it equal to zero
\begin{equation}\label{eq:partial}
  \frac{\partial d_{ij,min}}{\partial |AC|} = \frac{(1-M^2)|AC|^2-M^2\delta_{ij}^2}{2M|AC|^2} = 0.
\end{equation}
As a solution of~\eqref{eq:partial} with $\frac{\partial^2 d_{ij,min}}{\partial^2 |AC|} > 0$, we get the value of $|AC|$ minimizing the distance $d_{ij,min}$
\begin{equation}
  |AC|_{min}^* = \frac{\sqrt{1-M^2}M\delta_{ij}}{1-M^2}.
\end{equation}
By substituting $|AC|_{min}^*$ into~\eqref{eq:d_min_final} and simplifying this equation, we get a final relation for $d_{ij,min}$:
\begin{equation}\label{eq:d_min_final_simplified}
  d_{ij,min} = \sqrt{1-M^2}\delta_{ij},
\end{equation}
that proves~\autoref{th:minimum_dist_lbap}.
\end{proof}

\newpage
\begin{IEEEbiography}[{\includegraphics
[width=1in,height=1.25in,clip,
keepaspectratio]{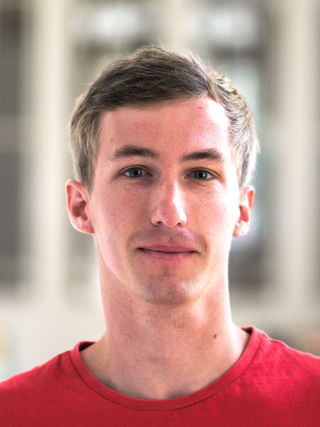}}]
{Vit Kratky} 
  received his Ph.D. degree in  Informatics, from the Czech Technical University in Prague, Czech Republic. He is a part of the \href{https://mrs.fel.cvut.cz}{Multi-robot Systems group} in CTU Prague, where he focuses on cooperative motion planning and autonomous navigation of unmanned aerial vehicles. Vit has co-authored 20 publications in conference and impacted journals, with over 700 citations indexed by Google Scholar and an h-index of 14. He has been involved in projects focused on inspection of man-made infrastructures, including \href{https://mrs.fel.cvut.cz/projects/dronument}{Dronument} and \href{https://aerial-core.eu/}{Aerial-Core} projects. Vit was also part of the CTU-CRAS-NORLAB team in the DARPA SubT competition. 
\end{IEEEbiography}

\begin{IEEEbiography}[{\includegraphics
[width=1in,height=1.25in,clip,
keepaspectratio]{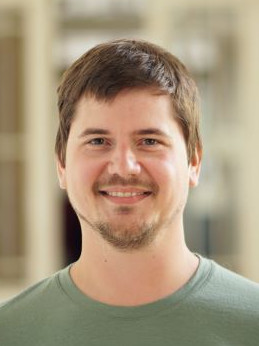}}]
{Robert Penicka} 
received his Ph.D. in Artificial Intelligence from the Czech Technical University (CTU) in Prague in 2020. He was a Postdoctoral Researcher at the University of Zürich (2020–2022) under Prof. D. Scaramuzza and is now a Research Fellow at CTU. His work focuses on UAV mission planning, trajectory optimization, and agile flight in cluttered environments. He was part of the CTU team that won the MBZIRC 2020 Grand Challenge. His doctoral thesis earned multiple awards, including the Dean’s Prize, the Werner von Siemens Award (2nd place), the Joseph Fourier Prize, and the Antonín Svoboda Award.
\end{IEEEbiography}

\begin{IEEEbiography}[{\includegraphics
[width=1in,height=1.25in,clip,
keepaspectratio]{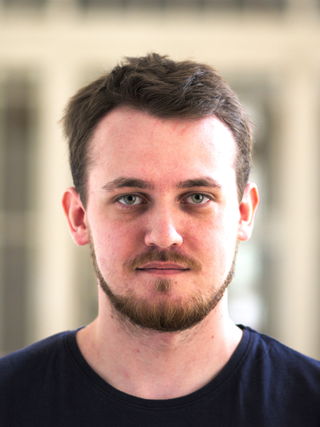}}]
{Jiri Horyna} 
received his M.Sc. degree from the Czech Technical
University in Prague, Czech Republic, where he is currently pursuing a
Ph.D. in the control and stabilization of swarms of unmanned aerial
vehicles. Since 2017, he has been a member of the \href{https://mrs.fel.cvut.cz}{Multi-robot Systems lab} at CTU Prague. He has co-authored 10 publications in
conferences and peer-reviewed journals, with over 150 citations
indexed by Scholar and an h-index of 6. His research primarily focuses
on multi-robot state estimation for groups of unmanned aerial vehicles.
\end{IEEEbiography}

\begin{IEEEbiography}[{\includegraphics
[width=1in,height=1.25in,clip,
keepaspectratio]{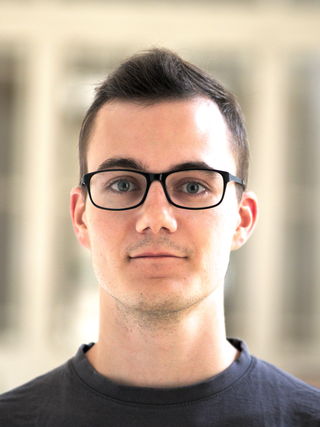}}]
{Petr Stibinger} 
received his M.Sc. degree in cybernetics and robotics
at the Czech Technical University in Prague, and is currently pursing
a Ph.D. with the \href{https://mrs.fel.cvut.cz}{Multi-robot Systems group} at CTU in Prague. 
His research focuses on data fusion and motion planning for robots
operating in hazardous environments. 
He is a co-author of 9 publications in conferences and impacted journals with $>200$ citations indexed by Scholar and h-index 6. He was a member of CTU-UPENN-NYU
team in the MBZIRC 2020.
\end{IEEEbiography}
\vfill 

\newpage

\begin{IEEEbiography}[{\includegraphics
[width=1in,height=1.25in,clip,
keepaspectratio]{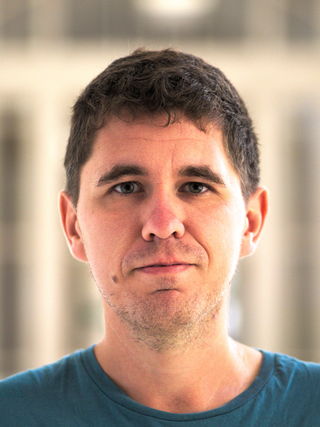}}]
{Tomas Baca} 
received his Ph.D. degree on distributed remote sensing at the Czech Technical University in Prague, Czech Republic. 
  He is a part of the \href{https://mrs.fel.cvut.cz}{Multi-robot Systems lab} in CTU Prague, focusing on distributed radiation sensing with Unmanned Aerial Vehicles and planning and control for Unmanned Aerial Vehicles. 
  Tomas is a co-author of $>50$ publications in conferences and impacted journals with $>3200$ citations indexed by Scholar and h-index 32. 
  He was a member of CTU-UPenn-UoL and CTU-UPENN-NYU teams in the MBZIRC 2017 and MBZIRC 2020 robotic competitions in Abu Dhabi, and of the CTU-CRAS-NORLAB team in the DARPA SubT competition.
\end{IEEEbiography}

\begin{IEEEbiography}[{\includegraphics
[width=1in,height=1.25in,clip,
keepaspectratio]{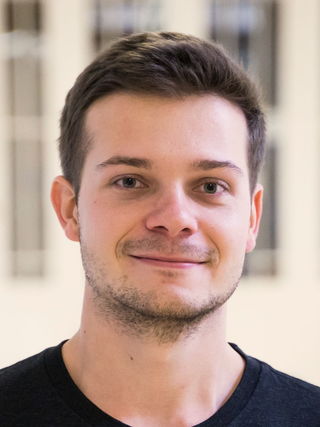}}]
{Matej Petrlik} 
received his Ph.D. in cybernetics and robotics from the Czech Technical University in Prague, Czech Republic. 
  His research is focused on state estimation for field deployment of autonomous UAVs in challenging environments. 
  He has been part of the \href{https://mrs.fel.cvut.cz}{Multi-robot Systems group} at CTU Prague since 2015. 
  He has co-authored 26 conference and journal publications, with over 1,300 citations indexed by Google Scholar and an h-index of 19. 
  He was also part of the CTU-UPenn-UoL team in the MBZIRC 2017, the CTU-UPENN-NYU team in the MBZIRC 2020, and the CTU-CRAS-NORLAB team in the DARPA SubT Challenge.
\end{IEEEbiography}

\vspace*{-0.32cm}
\begin{IEEEbiography}[{\includegraphics
[width=1in,height=1.25in,clip,
keepaspectratio]{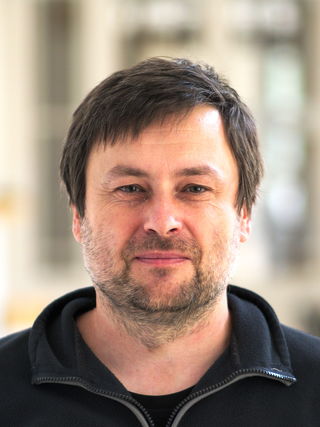}}]
{Petr Stepan} 
received his Ph.D. degree on sensor fusion for mapping at the Czech Technical University in Prague, Czech Republic. 
  He is a part of the \href{https://mrs.fel.cvut.cz}{Multi-robot Systems lab} in CTU Prague, where he focuses on sensor fusion, mapping, localization and planning for Unmanned Aerial Vehicles. 
  Petr has also been involved in industrial projects and the H2020 AerialCore project. 
  Petr is a co-author of $>40$ publications in conferences and impacted journals with $>700$ citations indexed by Scholar and h-index 11. He was a member of CTU-UPenn-UoL and CTU-UPENN-NYU teams in the MBZIRC 2017 and MBZIRC 2020 robotic competitions in Abu Dhabi.
\end{IEEEbiography}

\vspace{-1.5\baselineskip}
\begin{IEEEbiography}[{\includegraphics
[width=1in,height=1.25in,clip,
keepaspectratio]{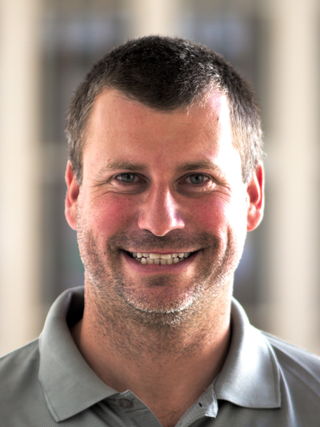}}]
{Martin Saska} 
received his Ph.D. degree at University of Wuerzburg, Germany, within the Ph.D. program of Elite Network of Bavaria. 
He founded and heads the \href{https://mrs.fel.cvut.cz}{Multi-robot Systems lab} at the Czech Technical University in Prague with more than 40 researchers. 
  He was a visiting scholar at University of Illinois at Urbana-Champaign and at University of Pennsylvania, USA. 
  He is a co-author of $>200$ publications in conferences and impacted journals, including IJRR, T-RO, AURO, JFR, ASC, EJC, with $>8100$ citations indexed by Scholar and h-index 50. 
  His team won multiple robotic challenges in MBZIRC 2017, MBZIRC 2020 and DARPA SubT competitions.
\end{IEEEbiography}
\vfill 

\end{document}